%% file: _main.tex
\documentclass[lettersize,journal]{IEEEtran}
\usepackage{amsfonts}
\usepackage{algorithmic}
\usepackage{algorithm}
\usepackage{array}
\usepackage[caption=false,font=normalsize,labelfont=sf,textfont=sf]{subfig}
\usepackage{textcomp}
\usepackage{stfloats}
\usepackage{url}
\usepackage{verbatim}
\usepackage{graphicx}
\usepackage{subcaption}
\usepackage{tabularx}
\usepackage{multicol}
\usepackage{caption}
\usepackage{float}
\usepackage{cite}
\usepackage{soul, xcolor}
\usepackage{booktabs}
\usepackage{multirow}
\usepackage{enumitem}
\usepackage{amsmath,amssymb,amsthm}
\theoremstyle{definition}
\newtheorem{defn}{Definition}
\newtheorem{lemma}{Lemma}

\newtheorem{props}{Proposition}
\usepackage{threeparttable}

\begin{document}
\title{Meta-Reinforcement Learning via Evolution for Multi-Objective Combinatorial Supply Chain Optimisation}

\author{
  Rifny Rachman\thanks{Corresponding author. Email: \texttt{rifny.rachman@manchester.ac.uk}}\\
  The University of Manchester, United Kingdom\\
  \and
  Bahrul Ilmi Nasution\\
  The University of Manchester, United Kingdom\\
  \texttt{bahrul.nasution@manchester.ac.uk}\\
  \and
  Josh Tingey\\
  Peak AI, Ltd, United Kingdom\\
  \texttt{josh.tingey@peak.ai}\\
  \and
  Richard Allmendinger\\
  The University of Manchester, United Kingdom\\  \texttt{richard.allmendinger@manchester.ac.uk}\\
  \and
  Pradyumn Shukla\\
  The University of Manchester, United Kingdom\\
  \texttt{pradyumn.shukla@manchester.ac.uk}\\
  \and
  Wei Pan\\
  The University of Manchester, United Kingdom\\
  \texttt{wei.pan@manchester.ac.uk}
}



\maketitle

\begin{abstract}
Meta-reinforcement learning is a promising approach to multi-objective optimisation because it enables rapid policy adaptation across changing environments and preference settings. However, conventional few-shot methods usually fine-tune from a single shared meta-policy, which can reduce solution diversity and limit exploration of the Pareto front, especially in high-dimensional combinatorial problems such as supply chain optimisation. We propose a population-based Meta-reinforcement learning framework that combines decomposition with evolutionary search in scalarisation weight space. The framework maintains a population of weight vectors, each associated with a distinct meta-policy trained through gradient-based meta-learning, and iteratively refines this population through elitist selection, crossover, and mutation guided by hypervolume and entropy contributions. We evaluate the method in a multi-objective supply chain setting with conflicting economic, environmental, and social goals, and further test its generality on standard reinforcement learning problems. The results show that the proposed approach yields more diverse, better distributed Pareto front approximations, improves cross-task adaptation, increases hypervolume by up to 32\% over Meta-multi-objective reinforcement learning in the complex case, and attains the lowest average Hausdorff distance among all compared methods.
\end{abstract}

\begin{IEEEkeywords}
Reinforcement learning, meta-learning, few-shot learning, multi-objective, combinatorial optimisation
\end{IEEEkeywords}

\section{Introduction}
\IEEEPARstart{M}{ulti-objective} combinatorial supply chain (SC) optimisation is inherently complex, involving many interconnected facilities, routes, and competing objectives. This complexity is amplified by uncertainties, operational fluctuations, and disruptions, which require rapid, adaptive decisions. Traditional methods, from classical optimisation and metaheuristics to hybrid models~\cite{jayarathnaMultiObjectiveOptimizationSustainable2021}, struggle with scalability, adaptability, and responsiveness. More recently, multi-objective reinforcement learning (MORL), which uses sequential decision-making~\cite{suttonReinforcementLearningIntroduction2015}, has shown promise by leveraging agent–environment interactions for dynamic, feedback-driven learning and convergence~\cite{sharMultiObjectiveReinforcementLearning2023, rachmanReinforcementLearningMultiobjective2026}. Yet, MORL methods remain computationally intensive and often overfit to specific SC configurations, making them brittle when SC structures change (e.g., fluctuating costs, demand, and network disruptions).

Meta-learning offers a compelling solution by enabling models to ``learn how to learn'' across a distribution of SC tasks, where each task corresponds to a distinct SC configuration, such as varying demand patterns or network topologies. It facilitates rapid adaptation and strong generalisation~\cite{finnModelAgnosticMetaLearningFast2017}, allowing policy reuse in previously unseen environments with minimal retraining. Unlike multi-task settings, where a single model is trained across heterogeneous problem instances sharing the same objective, multi-objective settings require simultaneously optimising conflicting criteria such as profit, emissions, and service-level (SL) inequality, and remain underexplored in SC contexts. Combining meta-learning with MORL (i.e., Meta-MORL) yields a flexible and scalable framework that efficiently adapts to dynamic environments while reducing computational overhead.

Recent work by Rachman et al.~\cite{rachmanMIRACLDiverseMetaReinforcement2026} showed that Meta-MORL can match conventional MORL performance in SC settings while adapting substantially faster. However, most Meta-MORL methods use a single shared meta-policy for adaptation, which reduces solution diversity and limits exploration of the Pareto front (PF), especially in complex, high-dimensional problems requiring diverse trade-offs.

We propose MERLION (Meta-Reinforcement Learning via Evolution), a population-based Meta-MORL framework inspired by multi-objective evolutionary algorithms (MOEAs) such as MOEA based on decomposition (MOEA/D)~\cite{qingfuzhangMOEAMultiobjectiveEvolutionary2007} and Non-dominated Sorting Genetic Algorithm (NSGA-II)~\cite{debFastElitistMultiobjective2002}. MERLION initialises a population of weight vectors, each defining a scalarisation of the multi-objective space. A meta-policy is meta-trained via gradient adaptation for each weight vector. The population is then refined using fitness scores based on marginal hypervolume and entropy to capture diversity, optimality, and density. These scores drive evolutionary operations (i.e., mutation and crossover of weight vectors) that guide learning across task-conditioned objective regions.

By decentralising learning across multiple objective neighbourhoods, MERLION trains diverse meta-policies specialised for different regions of the PF approximation. Fine-tuning then starts from multiple well-initialised policies that map SC states (e.g., inventory levels, outstanding orders) to actions (e.g., production and shipment quantities), improving both exploration and exploitation. This leads to more robust policy adaptation, greater diversity, and broader Pareto coverage without significant additional online adaptation cost. To our knowledge, MERLION is the first method to combine population-based evolutionary search over scalarisation weights with gradient-based meta-learning for multi-objective combinatorial SC optimisation. Our key contributions are:

\begin{enumerate}
    \item \textit{Methodology.} We propose \textit{MERLION}, a population-based Meta-MORL framework that incorporates a MOEA-inspired strategy into meta-training. Unlike conventional meta-learning, which yields a single shared meta-policy, MERLION maintains and updates multiple meta-policies to enhance solution diversity, balance exploration and exploitation, and enable efficient fine-tuning. During meta-training, an evolutionary procedure operates on scalarisation weight vectors, followed by local perturbation during fine-tuning (Section~\ref{sec:proposed_algorithm}).

    \item \textit{Problem adaptation.} We extend the existing multi-objective Markov decision process (MOMDP) formulation of the SC optimisation, specifically the SC network design and inventory management problem, to the proposed population-based Meta-MORL setting. In particular, we adapt the formulation to support multiple meta-policies, each conditioned on a scalarisation weight vector sampled from the preference simplex, and the associated meta-training and fine-tuning procedures (Section~\ref{sec:problem_definition}).

    \item \textit{Theoretical analysis.} We provide an analytical motivation for the use of population-based meta-policies in MERLION. In particular, we show how maintaining multiple meta-policies can increase the probability of hypervolume improvement, and we explain how preference specialisation can reduce cross-preference gradient interference during meta-training (Section~\ref{sec:analytical_motivation}).

    \item \textit{Empirical evaluation and case implementation.} We evaluate MERLION on three SC problems of increasing complexity, characterised by growing action and observation spaces and tighter operational constraints, mimicking our industrial partner's topologies and parameter settings. These instances are representative rather than exhaustive, and we benchmark MERLION against five methods: Meta-MORL, a population-based MORL approach, and three metaheuristic baselines. We analyse the operational impact and perform an ablation study to assess robustness to parameter changes (Sections~\ref{sec:experiment_results} and~\ref{sec:ablation_study}).

    \item \textit{Cross-domain evaluation.} To assess the generalisability of MERLION, we benchmark it beyond SC problems on five commonly used artificial RL problems varying in reward, action, and observation spaces, and compare it against Meta-MORL and five MORL baselines (Section~\ref{sec:domain_agnosticism}).
    
    \item \textit{Open-source benchmark framework.} We extend several existing RL problems from mo-gymnasium-based environments~\cite{felten_toolkit_2023} and provide reusable environments for SC, enabling randomised task generation and multiple-environment sampling for meta-training and adaptation in Meta-MORL settings. In addition, we release the MERLION implementation to support reproducible evaluation of population-based Meta-MORL methods across heterogeneous tasks.\footnote{The implementation is available upon acceptance.}
\end{enumerate}

Section~\ref{sec:related_work} reviews relevant literature. Section~\ref{sec:problem_definition} formulates the SC optimisation problem using MOMDP within a population-based Meta-MORL framework. Section~\ref{sec:methods} presents our proposed method. Section~\ref{sec:experiment_results} details our experiments and their results. Finally, Section~\ref{sec:conclusions} summarises the study and outlines future research directions.

\section{Related Works} \label{sec:related_work}
This section reviews previous work related to (i)~Population-based methods for SC optimisation, (ii)~evolutionary frameworks in RL, and (iii)~meta-learning for MORL.

\subsection{Population-based Optimisation for SC Problems}
MOEAs are population-based multi-objective optimisation methods that approximate the PF by evolving a set of non-dominated solutions through selection, mutation, crossover, and elitist replacement~\cite{smolinskiMultiobjectiveEvolutionaryAlgorithms2014,debMultiobjectiveOptimizationUsing2001}. Their ability to maintain diverse trade-off solutions makes them popular for multi-objective SC optimisation, which is often NP-hard, high-dimensional, and characterised by conflicting economic, environmental, and social objectives~\cite{faramarzi-oghaniMetaheuristicsSustainableSupply2023,detwalMetaheuristicsCircularSupply2023}.

Multi-objective genetic algorithm~\cite{fonseca1993genetic} uses non-dominance ranks and sharing functions, and has been applied to production planning and partner selection~\cite{deGreenLogisticsImperfect2018,yehUsingMultiobjectiveGenetic2011}, although its non-elitist design can reduce quality and diversity. NSGA-II~\cite{debFastElitistMultiobjective2002} improves this through elitist non-dominated sorting and crowding distance, and remains widely used in SC network design~\cite{ehtesham2021multi,kadzinskiEvaluationMultiobjectiveOptimization2017,wangEfficiencySortingMultiobjective2020}. Strength Pareto evolutionary algorithm II (SPEA-II)~\cite{zitzlerSPEA2ImprovingStrength2001} further emphasises elitism using dominance strength and an external archive with $k$-nearest-neighbour density estimation, showing competitive performance in green SC design~\cite{kadzinskiEvaluationMultiobjectiveOptimization2017}. MOEA/D~\cite{qingfuzhangMOEAMultiobjectiveEvolutionary2007} shifts from dominance-based selection to decomposition by solving scalar subproblems in parallel, and has performed well on large-dimensional SC network instances~\cite{azadehEvolutionaryMultiobjectiveOptimization2017}. 

Recent SC studies also adopt related Pareto-based swarm and evolutionary variants. MOPSO~\cite{coelloHandlingMultipleObjectives2004} extends particle swarm optimisation using Pareto dominance and an external repository, and has been applied to vaccine supply--production--distribution planning~\cite{jahedSustainableVaccineSupplyproductiondistribution2025}. Multi-objective grey wolf optimizer~\cite{mirjaliliMultiobjectiveGreyWolf2016} similarly maintains an external archive and has been used in healthcare and vaccine SC design~\cite{alaFuzzyMultiobjectiveOptimization2024,jahedSustainableVaccineSupplyproductiondistribution2025}. NSGA-III~\cite{6600851} extends NSGA-II with reference-point-based niching for many-objective settings, and has been combined with sparrow search for green SC coordination~\cite{wangGreenSupplyChain2025}. Nevertheless, NSGA-II remains an appropriate main evolutionary benchmark in this study: beyond its wide adoption and interpretability, empirical evidence suggests it consistently delivers competitive PF approximations in large-dimensional SC problems~\cite{halderMultiObjectiveClosedLoop2025,wangEfficiencySortingMultiobjective2020}, making it a robust and directly comparable baseline for our three-objective SC settings.

\subsection{Evolutionary Framework in RL}

Evolutionary reinforcement learning (ERL) combines gradient-based policy optimisation with population-based evolutionary search, leveraging evolutionary operators to improve exploration, diversity, and robustness while retaining the learning efficiency of policy gradients~\cite{linEvolutionaryReinforcementLearning2025}.

Recent work has extended ERL to multi-objective settings. For example, multi-objective cooperative evolutionary deep reinforcement learning~\cite{shianifarMultiObjectiveCooperativeEvolutionary2026} integrates deep RL, evolutionary search, Pareto archives, and hypervolume-based optimisation to improve convergence and diversity. Other studies combine evolutionary search with policy-gradient RL for multi-objective load balancing~\cite{yangReducingIdlenessFinancial2024}, integrate Q-learning into multi-objective differential evolution for feature selection~\cite{yuReinforcementLearningbasedMultiobjective2024}, and employ population-based training to generate behaviour-diverse game agents that outperform conventional RL baselines~\cite{shen2020generating}. Related work has also incorporated evolutionary search into PPO to improve safety and policy stability~\cite{marchesiniExploringSaferBehaviors2022}.

Despite these advances, ERL remains relatively underexplored for SC optimisation. Existing examples include the combination of RL with MOEA for inventory management~\cite{qiuLeveragingReinforcementLearning2024} and the integration of a genetic algorithm with Q-learning to support interpretable SC decision-making~\cite{garciaModelingEvolutionCarbon2025}. Moreover, most ERL studies focus on optimising policy populations within a fixed problem setting and do not address rapid adaptation across varying network structures, operational conditions, and stakeholder preferences that characterise practical SC environments.

\subsection{Meta-Learning for RL}
MORL methods can be effective but often require extensive interaction data and retraining when deployed in new or shifting environments. In SC contexts, where disruptions and parameter changes are common, this limits practicality. Meta-learning addresses this by learning a transferable initialisation across a task distribution, enabling rapid adaptation with few samples. The early foundations were discussed by \cite{schmidthuberEvolutionaryPrinciplesSelfreferential1987} and later formalised in learning-to-learn frameworks~\cite{thrunLearningLearnIntroduction1998}. Unlike conventional deep learning, meta-learning targets generalisation from limited task samples, supporting settings with constrained data and compute~\cite{altae2017low, ignatovAIBenchmarkAll2019}.

Meta-RL combines meta-learning with RL by optimising meta-parameters during an outer loop (meta-training) while adapting a base policy within the inner loop. Fast RL via slow RL (RL$^2$)~\cite{duanRL$^2$FastReinforcement2016} performs inner-loop adaptation using an RNN that conditions on past states, actions, and rewards, enabling within-episode adaptation under partial observability. In contrast, model-agnostic meta-learning (MAML)~\cite{finnModelAgnosticMetaLearningFast2017} learns an initialisation that can be adapted to a task with a small number of gradient steps. MAML typically provides stronger cross-task generalisation (including to unseen tasks), whereas RL$^2$ can be more sample-efficient within the inner loop due to its recurrent memory.

\subsection{Meta-MORL Framework}
To extend Meta-RL to multi-objective optimisation, \cite{chenMetaLearningMultiobjectiveReinforcement2019} proposed Meta-MORL, which samples preference weight vectors to scalarise objectives during inner-loop adaptation and updates a meta-policy across tasks in the outer loop. At deployment, fine-tuning under multiple weights generates a PF approximation set. However, PF quality depends heavily on task and weight sampling, and diversity is not explicitly encouraged.

To improve weight selection, PG-MORL~\cite{liuPredictionGuidedMetaLearning2021} uses a prediction model to select weights expected to improve the PF, although the resulting policy updates are not directly incorporated into the meta-policy. Beyond weight selection, \cite{luMetaLearningApproachMultiObjective2024} combines Reptile and generalised policy improvement (GPI) to support few-shot adaptation, where a policy adapts to a new task using only a small number of interactions or gradient updates. Similarly, \cite{10439641} applies MAML-based Meta-MORL to vehicular networks by decomposing a task into subproblems and learning an initialisation that can be rapidly fine-tuned.


Overall, Meta-MORL can efficiently generate multiple trade-off solutions through fine-tuning, but most approaches rely on a single meta-policy and prioritise rapid adaptation over diversity preservation. As a result, PF coverage may be limited, particularly in complex problems. Moreover, existing Meta-MORL studies focus primarily on continuous-control and communication-network domains, with limited attention to combinatorial optimisation problems such as SC network design. To the best of our knowledge, no prior work combines population-based evolutionary search with meta-learning for multi-objective SC optimisation, despite the need for both diverse Pareto solutions and rapid adaptation to changing operating conditions and preferences.

\section{Problem Definition} \label{sec:problem_definition}

We model the combinatorial SC optimisation problem as a finite-horizon MOMDP~\cite{hayesPracticalGuideMultiobjective2022}. In standard RL, an agent learns a policy $\pi_{\theta}(a_t|s_t)$ that maximises the expected cumulative reward obtained through interactions with an environment. The underlying Markov decision process is defined by the tuple $\langle \mathcal{S}, \mathcal{A}, ST, \gamma, \mu, R \rangle$, where $\mathcal{S}$ denotes the state space, $\mathcal{A}$ the action space, $ST:\mathcal{S}\times\mathcal{A}\rightarrow\mathcal{S}$ the state transition function, $\gamma\in[0,1]$ the discount factor, $\mu$ the initial-state distribution, and $R$ a scalar reward function.

In MORL, the scalar reward is replaced by a vector-valued reward function $\mathbf{R}:\mathcal{S}\times\mathcal{A}\rightarrow\mathbb{R}^{B}$ representing $B$ potentially conflicting objectives. The resulting MOMDP is defined by the tuple $\langle \mathcal{S}, \mathcal{A}, ST, \gamma, \mu, \mathbf{R} \rangle$. In our SC setting, the objectives correspond to profit, emissions, and SL inequality. A scalarisation weight vector $\mathbf{w}\in\Delta^{B-1}$, where
\begin{equation}
\Delta^{B-1}=\left\{\mathbf{w}\in\mathbb{R}^{B}: w_b\ge0,\ \sum_{b=1}^{B}w_b=1\right\},
\end{equation}
denotes the $(B-1)$-dimensional simplex. The weight vector converts the reward vector into a scalar objective, enabling exploration of different trade-offs among objectives.

Meta-MORL extends MORL by learning across a distribution of tasks rather than a single environment. A task $\mathcal{T}$ corresponds to a specific SC instance, such as a particular network configuration, demand profile, or operating condition. Within a task, different scalarisation weight vectors define preference-specific optimisation subproblems. For example, two subproblems may optimise the same SC network while prioritising profit and sustainability differently. Meta-training aims to learn policy initialisations that can rapidly adapt to new tasks and preferences.

In conventional Meta-MORL, meta-training is organised over a set of subtasks. We use \emph{subtask} as a generic adaptation unit because it may correspond to a task instance, a scalarisation preference, or both. In MERLION, this decomposition is made explicit: $i$ indexes scalarisation subproblems (equivalently, population members), $j$ indexes sampled tasks, and $t$ indexes time steps within an episode of horizon $T$.

To generalise across a distribution of SC tasks, we adopt a gradient-based Meta-RL framework that learns a set of initial policy parameterisations $\theta_i$ for rapid adaptation. During meta-training, each scalarisation subproblem $i$ is associated with a weight vector $\mathbf{w}_i \in \Delta^{B-1}$ and is trained on its own batch of sampled tasks $\{\mathcal{T}_{ij}\}_{j=1}^{J}$, where

\begin{equation}
\mathcal{T}_{ij} = \langle \mathcal{S}, \mathcal{A}, ST_{ij}, \gamma, \mu_{ij}, \mathbf{R}_{ij} \rangle \sim p(\mathcal{T}).
\end{equation}

Here, $ST_{ij}$, $\mu_{ij}$, and $\mathbf{R}_{ij}$ denote the transition dynamics, initial-state distribution, and vector reward function of task instance $\mathcal{T}_{ij}$, respectively.

For task $j$, the realised vector reward at time step $t$ is

\begin{equation}
\mathbf{r}_{ijt} = \mathbf{R}_{ij} (\mathbf{s}_{ijt},\mathbf{a}_{ijt}),
\end{equation}

where $\mathbf{s}_{ijt}\in\mathcal{S}$ and $\mathbf{a}_{ijt}\in\mathcal{A}$ denote the state and action visited under subproblem $i$ on task $j$ at time step $t$. For the SC problem, we define

\begin{equation}
\mathbf{r}_{ijt} = \{\mathit{Profit}_t, -\mathit{Emission}_t, -\mathit{Inequality}_t\} \cdot \rho_t,
\end{equation}

where objectives to be minimised are sign-flipped, and $\rho_t$ is a penalty applied whenever inventory falls below zero (see Eq.~\ref{eq:penalty} in Appendix~\ref{sec:sc_problem}). All objectives are normalised to $[0,1]$ to ensure comparable learning signals across objectives. The agent learns task-adapted policies $\pi_{\theta'_{ij}}(\mathbf{a}_{ijt}\mid\mathbf{s}_{ijt})$ that approximate the PF in terms of expected cumulative vector returns. For clarity, the detailed SC formulations adopted from~\cite{rachmanReinforcementLearningMultiobjective2026} are restated in Appendix~\ref{sec:sc_problem}.

Following Finn et al.~\cite{finnModelAgnosticMetaLearningFast2017}, we define a scalarised RL loss for each task--weight pair $\mathcal{T}_{ij}$ as

\begin{equation}
\label{eq:loss_function}
\mathcal{L}_{\mathcal{T}_{ij}}(\pi_{\theta_i}) = -\mathbb{E}_{\mathcal{D}_{ij}\sim\pi_{\theta_i},\,\mu_{\mathcal{T}_{ij}}} \left[ \sum_{t=0}^{T-1} \gamma^t \mathbf{w}_i^\top \mathbf{r}_{ijt} \right],
\end{equation}

where $\mathcal{D}_{ij}$ denotes a trajectory collected on task $\mathcal{T}_{ij}$ under policy $\pi_{\theta_i}$ and scalarisation weight vector $\mathbf{w}_i$.

\section{Methods} \label{sec:methods}
MOEA-based methods use population-based parallel search to explore the search space efficiently and generate diverse solutions. This helps overcome the limitation of typical Meta-MORL methods, whose search space is often restricted to local regions~\cite{hanneAReviewOfTheEvolution2025}. This section presents the existing Meta-MORL algorithm and its integration with MOEA for parallel search.

\subsection{Background}

MERLION builds on three foundations: MAML's inner--outer optimisation principle, Meta-MORL's preference-conditioned adaptation, and evolutionary search in the scalarisation weight space~\cite{finnModelAgnosticMetaLearningFast2017,chenMetaLearningMultiobjectiveReinforcement2019}. We first recall the standard definitions of dominance, nondominated sets, and hypervolume that underpin both the evolutionary fitness score and the performance metrics used throughout this work. These definitions are also used in the analytical motivation in Section~\ref{sec:analytical_motivation}.

\begin{defn} \label{def:weak_dominate}
    For $\mathbf{u},\mathbf{v}\in\mathbb{R}^{B}$, we say that $\mathbf{u}$ \textbf{weakly dominates} $\mathbf{v}$
    (denoted $\mathbf{u}\succeq \mathbf{v}$) if $u_b \ge v_b, \forall b\in\{1,\dots,B\}.$
\end{defn}

\begin{defn}
    We say that $\mathbf{u}$ \textbf{dominates} $\mathbf{v}$
    (denoted $\mathbf{u}\succ \mathbf{v}$) if $\mathbf{u}\succeq \mathbf{v}$ and
    $\exists b\in\{1,\dots,B\} \text{ such that } u_b > v_b.$
\end{defn}

\begin{defn}
    For a finite solution set $\mathcal{H}\subset\mathbb{R}^{B}$, let $\mathcal{ND}(\cdot)$ denote nondominated subset, so that:
    \begin{equation}
    \mathcal{ND}(\mathcal{H}) := \left\{ \mathbf{x}\in\mathcal{H} \;\middle|\; \nexists\, \mathbf{h}\in\mathcal{H}\text{ such that }\mathbf{h}\succ \mathbf{x} \right\}.
    \end{equation}
\end{defn}

\begin{defn}
    The region weakly dominated by $\mathcal{H}$ is defined as
    \begin{equation}
    \mathrm{Dom}(\mathcal{H}) := \left\{ \mathbf{x}\in\mathbb{R}^{B} \;\middle|\; \exists\, \mathbf{h}\in\mathcal{H}\text{ such that }\mathbf{h}\succeq \mathbf{x} \right\}.
    \end{equation}
\end{defn}

\begin{defn} \label{def:hypervolume}
    Fix a reference point $\mathbf{x}^{\mathrm{ref}} \in \mathbb{R}^{B}$ that is dominated by all attainable objective vectors. 
    The hypervolume ($HV$) of a set $\mathcal{H} \subset \mathbb{R}^{B}$ is defined as the $B$-dimensional volume, i.e.\ the Lebesgue measure $\nu(\cdot)$, of the region weakly dominated by the nondominated points in $\mathcal{H}$ relative to $\mathbf{x}^{\mathrm{ref}}$, which is calculated by:
    \begin{equation} \label{eq:hv}
    HV(\mathcal{H};\mathbf{x}^{\text{ref}}) = \nu\!\left(\bigcup_{\mathbf{x}\in \mathcal{ND}(\mathcal{H})} [\mathbf{x}^{\text{ref}},\mathbf{x}] \right),
    \end{equation}
    where
    \begin{equation}
    [\mathbf{x}^{\text{ref}},\mathbf{x}] := \prod_{b=1}^{B} [x^{\text{ref}}_b,x_b].
    \end{equation}
\end{defn}

\subsubsection{Meta-MORL for SC Optimisation}

Let $\mathcal{T}_y \sim p(\mathcal{T})$ denote a task sampled from a task distribution, $\pi_\theta$ a policy parameterised by $\theta$, and $\mathbf{w}_y \in \Delta^{B-1}$ a scalarisation weight vector for $B$ objectives, where $\gamma \in [0,1]$ is the discount factor and $\mathbf{r}_{yt}$ is the vector reward at time $t$. The expected scalarised discounted return on task $\mathcal{T}_y$ is
\begin{equation}
    \label{eq:bg_scalarised_objective}
    J(\theta;\mathcal{T}_y,\mathbf{w}_y)
    =
    \mathbb{E}_{\pi_{\theta},\mathcal{T}_y}
    \left[\sum_{t=0}^{T-1}\gamma^t \mathbf{w}_y^\top \mathbf{r}_{yt}\right],
\end{equation}
with corresponding RL loss
\begin{equation}
    \label{eq:bg_scalarised_loss}
    \mathcal{L}_{\mathcal{T}_y,\mathbf{w}_y}(\pi_\theta) = -J(\theta;\mathcal{T}_y,\mathbf{w}_y).
\end{equation}
MAML~\cite{finnModelAgnosticMetaLearningFast2017} learns an initialisation $\theta$ that adapts quickly via a small number of inner gradient steps. Chen et al.~\cite{chenMetaLearningMultiobjectiveReinforcement2019} extended this to MORL by treating each task--preference pair $(\mathcal{T}, \mathbf{w})$ as a scalarised subproblem, so that meta-training seeks an initialisation that adapts efficiently across both task and preference variation.

At deployment, each meta-parameter $\theta_i$ is fine-tuned independently on a new, unseen task $\mathcal{T}^{\mathrm{new}}$. For each $\theta_i$, a set of $M$ local scalarisation weight vectors $\{\mathbf{w}_{im}\}_{m=1}^{M}$ is constructed around its associated preference vector. Starting from $\theta_i$, independent fine-tuning under each $\mathbf{w}_{im}$ produces $M$ preference-conditioned adapted policies $\{\theta'_{im}\}_{m=1}^{M}$. Since Pareto optimality is defined in the original multi-objective space, each adapted policy is evaluated by its expected discounted vector return:
\begin{equation}
    \label{eq:bg_vector_return}
    \mathbf{F}\!\left(\theta'_{im};\mathcal{T}^{\mathrm{new}}\right)
    =
    \mathbb{E}_{\pi_{\theta'_{im}},\mathcal{T}^{\mathrm{new}}}
    \left[\sum_{t=0}^{T-1}\gamma^t \mathbf{r}_t\right]
    \in \mathbb{R}^{B}.
\end{equation}
The PF approximation $\widehat{\mathcal{P}}$ is then obtained by extracting the 
nondominated subset of these vector returns. Hence, conventional Meta-MORL learns 
a single good starting point for preference-conditioned adaptation rather than 
directly learning a diverse Pareto set during meta-training.

\subsubsection{MOEA-inspired Techniques}

MOEAs maintain a population of candidate solutions and refine it through three  operators~\cite{fonseca1993genetic,debFastElitistMultiobjective2002, zitzlerSPEA2ImprovingStrength2001}: (i)~\emph{selection}, which favours individuals by fitness or rank; (ii)~\emph{crossover}, which recombines two parents, for weight vectors on the simplex, arithmetic crossover $\tilde{\mathbf{z}} = \lambda \mathbf{z}_i + (1-\lambda)\mathbf{z}_h$ ($\lambda \in [0,1]$) preserves feasibility; and (iii)~\emph{mutation}, which adds random perturbations $\hat{\mathbf{z}} =  \tilde{\mathbf{z}} + \epsilon$, $\epsilon \sim \mathcal{M}_{\mathrm{mut}}$, to maintain exploration. Almost all MOEAs also apply \emph{elitist replacement}, retaining the best individuals from the combined parent--offspring pool. In MERLION, these operators act directly in the scalarisation weight space, driving the preference distribution toward better convergence and broader Pareto coverage.

\subsection{Proposed Algorithm: MERLION} \label{sec:proposed_algorithm}
In this study, we propose MERLION, an evolutionary Meta-MORL framework that maintains a population of meta-policies and updates them through two coupled processes. Policy parameters are learned via gradient-based meta-updates (in the spirit of MAML), while evolutionary operators act in the weight space to adapt the preference vectors that condition each meta-policy. This co-evolution of weights and meta-policies aims to encourage both convergence and diverse coverage of the trade-off surface.

Our proposed MERLION algorithm consists of three primary stages: meta-training, evolutionary procedure, and fine-tuning phase. In meta-training, the Meta-MORL framework is predominantly used with an extension into a population-based strategy. For the evolutionary procedure, an MOEA-based strategy is used, and the fine-tuning phase adopts actor-critic RL with modifications to align with the evolutionary framework.


\subsubsection{Meta-training}
Algorithm~\ref{algo:main_algo} is initialised with $P$ meta-parameters $\{\theta_i\}_{i=1}^{P}$ and a corresponding set of weight vectors $\{\mathbf{w}_i\}_{i=1}^{P}$ sampled uniformly from the simplex $\Delta^{B-1}$, ensuring broad initial coverage of the preference space. An empty \texttt{archive} is also created to store meta-parameters and their associated weight vectors throughout training. Each pair $(\theta_i, \mathbf{w}_i)$ defines a scalarised subproblem, where $\theta_i$ is the meta-parameter specialised for preference region $\mathbf{w}_i$ and serves as the initialisation point for fast adaptation under that preference. For each subproblem $i$, a batch of tasks $\{\mathcal{T}_{ij}\}_{j=1}^{J}$ is sampled from $p(\mathcal{T})$ and trajectories $\mathcal{D}_{ij}$ are collected under $\pi_{\theta_i}$. These trajectories are used to compute the inner adaptation update:
\begin{equation}
\label{eq:adaptation}
\theta'_{ij} = \theta_i - \alpha \nabla_{\theta_i} \mathcal{L}_{\mathcal{T}_{ij}}(\pi_{\theta_i}),
\end{equation}
where $\alpha$ is the inner-loop step size and $\theta'_{ij}$ is the task-adapted parameter for task $\mathcal{T}_{ij}$ under subproblem $i$. In this study, our meta-objective for each $\theta_i$ is as follows:
\begin{equation}
\label{eq:meta_objective}
\min_{\theta_i} \sum_{\mathcal{T}_{ij} \sim p(\mathcal{T})}{\mathcal{L}_{\mathcal{T}_{ij}}(\pi_{\theta'_{ij}})}=\sum_{\mathcal{T}_{ij} \sim p(\mathcal{T})}{\mathcal{L}_{\mathcal{T}_{ij}}(\pi_{\theta_i-\alpha \nabla_{\theta_i}\mathcal{L}_{\mathcal{T}_{ij}}(\pi_{\theta_i})})}.
\end{equation}

\noindent After completion of the adaptation on the batch of tasks, each meta-parameter $\theta_i$ is further updated in the meta-learning phase using the $\mathcal{D}'_{ij}$ that uses $\pi_{\theta'_{ij}}$. The meta-parameters $\theta_i$ are calculated based on the aggregated post-adaptation losses over scalarised tasks $\mathcal{T}_{ij}$ drawn from $p(\mathcal{T})$. Hence, the meta-policy update that considers the meta step size $\beta$ follows:



\begin{equation}
\label{eq:meta_update}
\theta_i \leftarrow \theta_i - \beta \nabla_{\theta_i}\sum_{j=1}^{J}\mathcal{L}_{\mathcal{T}_{ij}}(\pi_{\theta'_{ij}}).
\end{equation}

\noindent Once all meta-policies $\pi_{\theta_i}$ associated with the weight vectors $\mathbf{w}_i$ are updated, the pairs $(\theta_i, \mathbf{w}_i)$ are stored in the \texttt{archive} to perform the evolutionary update in the next stage.

\begin{algorithm}[!th]
\caption{Meta-training phase of MERLION (population-based meta-learning).}
\label{algo:main_algo}
\begin{algorithmic}[1]
\REQUIRE population size $P$, task distribution $p(\mathcal{T})$, weight distribution $p(\mathbf{w})$, step sizes $\alpha,\beta$
\STATE \textbf{Initialise:} meta-parameters $\{\theta_i\}_{i=1}^{P}$, weights $\{\mathbf{w}_i \sim p(\mathbf{w})\}_{i=1}^{P}$, $\texttt{archive} \leftarrow \emptyset$
\WHILE{not done}
    \FOR{$i=1,2,\dots,P$}
        \STATE Sample a batch of tasks $\{\mathcal{T}_{ij}\}_{j=1}^{J} \sim p(\mathcal{T})$
        \FOR{$j=1,2,\dots,J$}
            \STATE Collect trajectories $\mathcal{D}_{ij}$ using $\pi_{\theta_i}$ on task $\mathcal{T}_{ij}$ with weight $\mathbf{w}_i$
            \STATE Compute inner loss $\mathcal{L}_{\mathcal{T}_{ij}}(\pi_{\theta_i})$ from $\mathcal{D}_{ij}$
            \STATE Inner adaptation update, see Eq.~\eqref{eq:adaptation}
            \STATE Collect trajectories $\mathcal{D}'_{ij}$ using $\pi_{\theta'_{ij}}$ on task $\mathcal{T}_{ij}$ with weight $\mathbf{w}_i$
        \ENDFOR
        \STATE Meta-update, see Eq.~\eqref{eq:meta_update}

        \STATE $\texttt{archive} \leftarrow \mathit{Update}(\texttt{archive},(\theta_i,\mathbf{w}_i))$
    \ENDFOR
    \STATE $\texttt{archive} \leftarrow \mathit{EvolveWeights}(\texttt{archive})$
\ENDWHILE
\STATE \RETURN $\texttt{archive}$ containing $\{(\theta_i,\mathbf{w}_i)\}_{i=1}^{P}$
\end{algorithmic}
\end{algorithm}

\begin{figure}[!th]
\centering
\includegraphics[width=\linewidth]{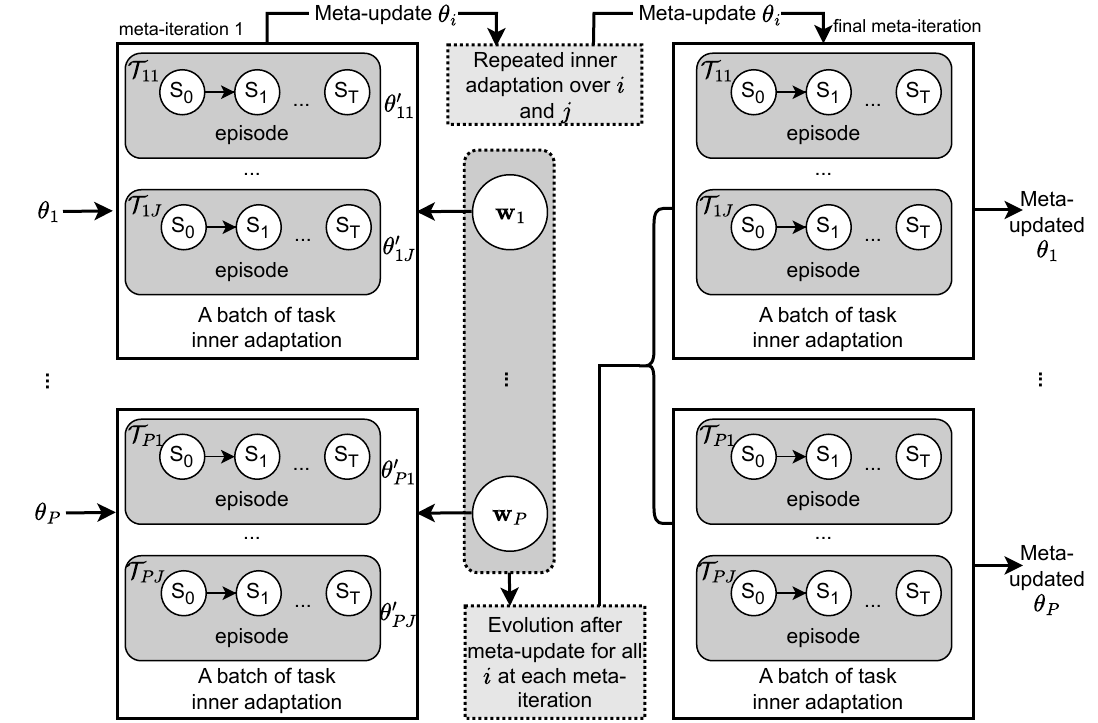}
\caption{Meta-training and adaptation phases in MERLION as given in Algorithm~\ref{algo:main_algo}. Initial meta-policies $\theta_i$ are trained independently in the inner adaptation phase using $\mathbf{w}_i$ on a batch of tasks $\mathcal{T}_{ij}$. Outer meta-update applies to each $\theta_i$ with the evolutionary procedure to all $\mathbf{w}_i$ after each meta-iteration.}
\label{fig:merlion_algo}
\end{figure}

\subsubsection{Evolutionary Procedure}
In Meta-MORL, each meta-policy is associated with a fixed scalarisation weight vector generated during meta-training, with no mechanism to refine the preference distribution over time. In MERLION, we propose to apply MOEA-inspired operators to these weight vectors so that the preference distribution evolves across generations toward better convergence and broader coverage of the trade-off space.

After each meta-update, the \texttt{archive} stores a population of tuples $(\theta_i,\mathbf{w}_i)$, where $\theta_i$ denotes the meta-parameters and $\mathbf{w}_i$ denotes the associated scalarisation weight vector. Following Algorithm~\ref{algo:evolutionary}, MERLION first evaluates the current population using a scalar fitness score, then generates $P$ offspring through selection, crossover or mutation, assigns each offspring the meta-parameters of the fitter parent, and finally recomputes fitness over the combined parent-offspring population before retaining the top $P$ individuals. In this way, the evolutionary routine updates the weight vectors while preserving strong solutions through elitist survival selection.

\paragraph{Selection}
We adopt rank-based parent selection over a scalar fitness score that combines convergence and diversity. Let $\mathbf{r}_i \in \mathbb{R}^B$ denote the objective vector associated with individual $i$, and let $\mathbf{w}_i$ denote its scalarisation weight vector. The fitness of individual $i$ is defined as
\begin{equation}
\label{eq:fitness_score}
Q_i = \kappa \, \Delta HV_i + (1-\kappa)\,\Delta E_i,
\end{equation}
where fitness-mixing coefficient $\kappa \in [0,1]$ balances convergence and diversity. The convergence term is the marginal hypervolume contribution in objective space,
\begin{equation}
\label{eq:marginal_hv}
\Delta HV_i = HV(\mathcal{H}) - HV\!\left(\mathcal{H} \setminus \{\mathbf{r}_i\}\right),
\end{equation}
where $\mathcal{H}=\{\mathbf{r}_1,\dots,\mathbf{r}_P\}$ is the set of population objective vectors. The diversity term is computed from an average log-distance score in the weight space,
\begin{equation}
\label{eq:entropy}
\Delta E_i = \frac{1}{P-1}\sum_{h \neq i} \log\!\left(\|\mathbf{w}_i-\mathbf{w}_h\|_2+\epsilon\right),
\end{equation}
where $\epsilon>0$ is a small constant for numerical stability. While marginal hypervolume contribution is a standard MOEA metric~\cite{debFastElitistMultiobjective2002}, its combination with a weight-space diversity term in Eq.~\eqref{eq:fitness_score} is specific to MERLION, as it jointly accounts for objective-space quality and preference-space coverage during meta-training.

After computing $\{Q_i\}_{i=1}^{P}$, parent-$i$ is selected according to rank-based probabilities. Let $\tau_i \in \{0,\dots,P-1\}$ denote the rank of individual $i$ after sorting in descending order of $Q_i$, where $\tau_i=0$ is the best rank. Then parent-$i$ is sampled with probability
\begin{equation}
\label{eq:rank_selection}
p_{\mathrm{sel}}(i) = \frac{P-\tau_i}{\sum_{h=1}^{P}(P-\tau_h)}.
\end{equation}
This gives higher-fitness individuals a greater chance of reproduction while maintaining stochasticity in the search. After $P$ offspring are generated and added to the \texttt{archive}, MERLION recomputes fitness scores on the combined parent-offspring population and retains the top $P$ individuals via elitist survival selection.

\paragraph{Crossover}
After selecting parent-$i$, MERLION chooses the second parent as the farthest weight vector from $\mathbf{w}_i$ in Euclidean space:
\begin{equation}
    \label{eq:crossover_mating}
    \mathbf{w}_h = \arg\max_{\mathbf{w}_k \in \mathcal{W},\, \mathbf{w}_k \neq \mathbf{w}_i} \|\mathbf{w}_i-\mathbf{w}_k\|_2,
\end{equation}
where $\mathcal{W}$ denotes the current population of weight vectors. This departs from standard nearest-neighbour mating used in decomposition-based MOEAs~\cite{qingfuzhangMOEAMultiobjectiveEvolutionary2007} and instead promotes recombination across well-separated preference regions, thereby encouraging broader coverage of trade-off directions. Crossover is then applied with probability $\eta_{\mathrm{cross}}$ using arithmetic interpolation, which preserves feasibility on the simplex:
\begin{equation}
    \label{eq:crossover}
    \mathbf{w}_{\mathrm{child}} = \lambda \mathbf{w}_i + (1-\lambda)\mathbf{w}_h, \qquad \lambda \sim \mathrm{Uniform}(0,1).
\end{equation}
Arithmetic crossover generates offspring as convex combinations of parent preferences, thereby preserving feasibility without requiring repair operators and naturally interpolating between neighbouring trade-off directions. The offspring inherits the meta-parameters of the fitter parent according to the fitness score $Q$.

\paragraph{Mutation}
If crossover is not applied, the offspring is generated through mutation. The selected parent weight vector is perturbed by additive uniform noise,
\begin{equation}
\label{eq:mutation}
\tilde{\mathbf{w}}_{\mathrm{child}} = \mathbf{w}_i + \boldsymbol{\epsilon}, \qquad \boldsymbol{\epsilon} \sim \mathrm{Uniform}(-\delta,\delta)^{B},
\end{equation}
where $\delta>0$ controls the mutation strength. The perturbed vector is then repaired by clipping negative components and renormalising to satisfy simplex constraints:
\begin{equation}
\label{eq:mutation_projection}
\mathbf{w}_{\mathrm{child}} = \frac{\max(\tilde{\mathbf{w}}_{\mathrm{child}},0)}{\sum_{b=1}^{B}\max(\tilde{w}_{\mathrm{child},b},0)}.
\end{equation}
As in the crossover case, the offspring inherits the meta-parameters of the fitter parent before being inserted into the \texttt{archive}.

\paragraph{Operator design in the weight space}
In MERLION, evolutionary search acts in the scalarisation weight space, while policy parameters are updated by gradient-based meta-learning. Each meta-policy is paired with a weight vector that conditions scalarisation of the vector reward before PPO-style optimisation. Because near-duplicate offspring in weight space provide little additional coverage of the preference simplex, each offspring is generated by exactly one operator: arithmetic crossover with probability $\eta_{\text{cross}}$, and mutation otherwise, i.e., $\eta_{\text{mut}} = 1-\eta_{\text{cross}}$.

This complementary scheme reduces operator control to a single exploration--exploitation parameter: larger $\eta_{\text{cross}}$ favours exploitation by recombining existing trade-offs, whereas smaller $\eta_{\text{cross}}$ favours exploration by allowing mutation to discover new preference regions. Importantly, local improvement in policy space is not lost: after each evolutionary step, gradient-based meta-updates refine each $\theta_i$ under its updated $\mathbf{w}_i$, providing implicit exploitation in policy space. Additional local exploitation is performed explicitly during fine-tuning, where local weight perturbations around each archived $\mathbf{w}_i$ are followed by independent gradient-based policy updates, ensuring that exploration in preference space is always coupled with policy-level exploitation.

\begin{figure*}[t]
\centering
\subfloat[\footnotesize Meta-Learning and adaptation trajectory \label{subfig:meta-learning_trajectory}]{%
  \includegraphics[
    width=.32\linewidth,
    trim=7 7 7 7, 
    clip
  ]{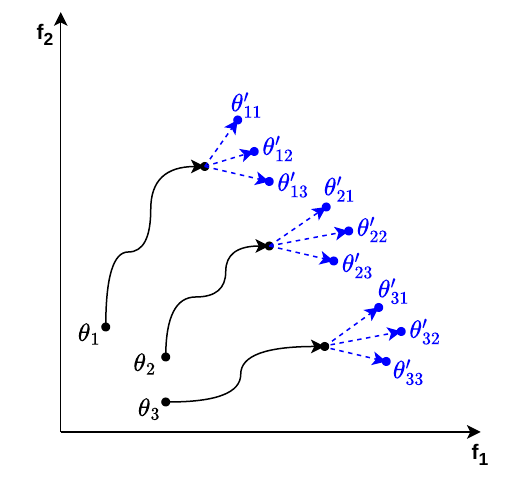}%
}\hfil 
\subfloat[\footnotesize Evolutionary effect on task-adapted policies \label{subfig:evolutionary_effect}]{%
  \includegraphics[
    width=.32\linewidth,
    trim=7 7 7 7,
    clip
  ]{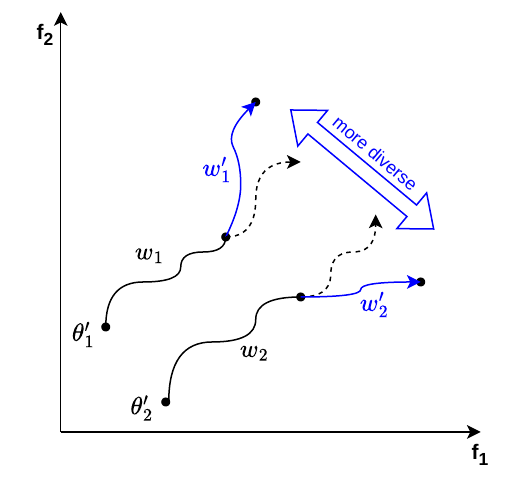}%
}\hfil 
\subfloat[\footnotesize Fine-tuning with local perturbation \label{subfig:fine-tuning}]{%
  \includegraphics[
    width=.32\linewidth,
    trim=7 7 7 7,
    clip
  ]{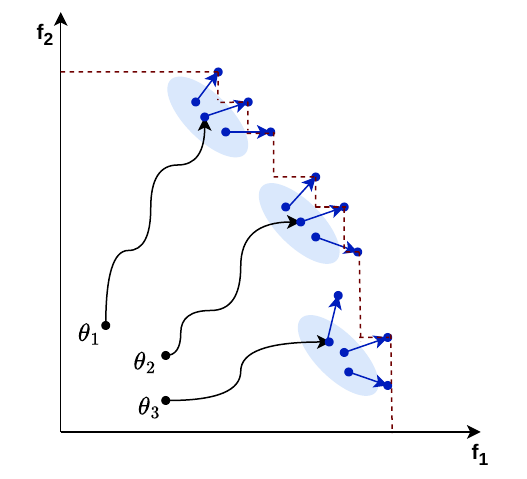}%
}
\caption{Meta-policies and task-adapted policies movement along the training. Subfig.~\ref{subfig:meta-learning_trajectory} shows the population-based meta-policy trajectories (black lines) that are independent to each other, where each meta-policy is adapted to a batch of tasks (dotted blue lines) for next meta-update. Subfig.~\ref{subfig:evolutionary_effect} shows the evolving weights (i.e., $w_1$ and $w_2$ become $w'_1$ and $w'_2$) tend to encourage solutions to move away. Subfig~\ref{subfig:fine-tuning} shows local perturbation in fine-tuning phase, followed with non-dominated sorting to form PF approximation (solutions that are crossed by dotted red line).}
\label{fig:merlion_mechanism}
\end{figure*}


\begin{algorithm}[!ht]
\caption{Evolutionary procedure in MERLION}
\label{algo:evolutionary}
    \begin{algorithmic}[1]
    \REQUIRE $\texttt{archive}$, population size $P$, crossover probability $\eta_{\text{cross}}$, mutation strength $\delta$, fitness-mixing coefficient $\kappa$
    \STATE Extract $\{(\theta_i,\mathbf{w}_i)\}$ and compute fitness scores $\{\mathcal{Q}_i\}_{i=1}^{P}$ from $\texttt{archive}$
    \FOR{offspring $= 1, \dots, P$}
        \STATE Select parent~1: $(\theta_i, \mathbf{w}_i)$ with rank-based selection according to $\{\mathcal{Q}_i\}_{i=1}^{P}$
        \STATE Select parent~2: $(\theta_h, \mathbf{w}_h)$ using distance-aware mating using Eq.~\eqref{eq:crossover_mating}
        
        \IF{ $\texttt{rand()} < \eta_{\text{cross}}$ }
            \STATE Perform crossover to obtain $\mathbf{w}_{child}$ using Eq.~\eqref{eq:crossover}
        \ELSE
            \STATE Apply mutation to $\mathbf{w}_{child}$ using Eq.~\eqref{eq:mutation}
        \ENDIF

        \STATE $\theta_{child} = \theta_i$ if $\mathcal{Q}_i \geq \mathcal{Q}_h$, else $\theta_h$

        \STATE Add $(\theta_{child},\mathbf{w}_{child})$ to $\texttt{archive}$
    \ENDFOR
    \STATE Recompute $\mathcal{Q}$ for all individuals in $\texttt{archive}$
    \STATE Retain the $P$ individuals with the highest $\mathcal{Q}$ in $\texttt{archive}$
    \STATE \RETURN updated $\texttt{archive}$
    \end{algorithmic}
\end{algorithm}

\subsubsection{Fine-tuning}
After meta-training, the \texttt{archive} contains a population of meta-policies and their associated scalarisation weight vectors, $\{(\theta_i,\mathbf{w}_i)\}_{i=1}^{P}$. Fine-tuning is then performed independently for each meta-policy on a target task, as shown in Algorithm~\ref{algo:fine_tuning}. To improve local coverage during fine-tuning, MERLION applies local perturbations around each archived weight vector. For every archived pair $(\theta_i,\mathbf{w}_i)$, we construct a local weight set consisting of the original weight vector $\mathbf{w}_i$ and $M$ perturbed weight vectors sampled from its neighbourhood using perturbation scale $\delta_{pert}$. Let $\theta_{im}$ denote the policy parameters obtained after fine-tuning the $i$-th archived meta-policy under local weight $\mathbf{w}_{im}$, and let its evaluated vector return on the target task $\mathcal{T}^{\mathrm{new}}$ be
\begin{equation}
\mathbf{r}_{im} := F(\theta_{im};\mathcal{T}^{\mathrm{new}}) = \mathbb{E}_{\pi_{\theta_{im}},\,\mathcal{T}^{\mathrm{new}}} \left[ \sum_{t=0}^{T-1}\gamma^t \mathbf{r}_t \right] \in \mathbb{R}^{B}.
\end{equation}

Each perturbed weight is projected back onto the simplex before use. For each local weight vector, the corresponding archived meta-policy is loaded and used to initialise an independent fine-tuning run. Then a fresh policy is trained under the selected scalarisation weight for a fixed fine-tuning budget, with periodic evaluation every fixed number of environment steps, resulting in $K$ fine-tuning intervals. After all local fine-tuning runs have been completed, non-dominated sorting is applied to the collected candidate solutions to obtain the final PF approximation. In this way, fine-tuning preserves the good initialisations learned during meta-training while increasing the density of candidate solutions around promising preference regions.


\begin{algorithm}[!th]
\caption{Fine-tuning phase in MERLION}
\label{algo:fine_tuning}
    \begin{algorithmic}[1]
    \REQUIRE \texttt{archive} containing $\{(\theta_i, \mathbf{w}_i)\}_{i=1}^{P}$, fine-tuning intervals $K$, local perturbation strength $\delta_{pert}$, local perturbation size per meta-policy $M$
    \STATE Initialise $\texttt{candidates} \leftarrow \emptyset$
    \FOR{$i = 1,2,\dots,P$}
        \STATE Construct local weight set $\mathcal{W}_i \gets \{\mathbf{w}_i\} \cup \{\mathbf{w}_{im}\}_{m=1}^{M}$
        \FORALL{$\mathbf{w}_{im} \in \mathcal{W}_i$}
            \IF{$\mathbf{w}_{im} \neq \mathbf{w}_i$}
                \STATE $\mathbf{w}_{im} \gets \mathbf{w}_i + \boldsymbol{\epsilon}_{im} ,\boldsymbol{\epsilon}_{im} \sim \mathrm{Uniform}(-\delta_{pert}, \delta_{pert})^B$
                \STATE Repair $\mathbf{w}_{im}$ using Eq.~\eqref{eq:mutation_projection}
            \ENDIF
            \STATE Clone meta-parameters: $\theta_{im} \gets \theta_i$
            \FOR{$k = 1,2,\dots,K$}
                \STATE Update $\pi_{\theta_{im}}$ under $\mathbf{w}_{im}$
                \STATE Evaluate updated $\pi_{\theta_{im}}$ and record $\mathbf{r}_{im}$
            \ENDFOR
            \STATE Add final $(\theta_{im}, \mathbf{w}_{im}, \mathbf{r}_{im})$ to $\texttt{candidates}$
        \ENDFOR
    \ENDFOR
    \STATE $\mathcal{PF} \gets \mathit{NonDominated}(\texttt{candidates})$
    \STATE \RETURN fine-tuned solution set $\mathcal{PF}$
    \end{algorithmic}
\end{algorithm}

\subsubsection{Analytical Motivation for the Population-based Meta-Policies}
\label{sec:analytical_motivation}

We provide a simple analytical motivation for MERLION's population-based design. Rather than giving a full convergence guarantee, we relate multiple meta-policies to standard hypervolume properties (see Definitions~\ref{def:weak_dominate} to~\ref{def:hypervolume}) and to a plausible optimisation mechanism under conflicting preferences. Under limited fine-tuning steps, conventional meta-learning methods may produce concentrated solution sets, as adaptation tends to explore only a narrow portion of the objective space. Such concentration restricts the coverage of the PF and can reduce the resulting improvement in hypervolume. To formalise this, we ground the argument in two standard properties of the hypervolume indicator: its monotonicity under set expansion and the strict gain it yields when a nondominated point is added. These properties are important because they allow us to reason about when and why adding more diverse candidates to an archive is guaranteed to improve solution quality, providing a theoretical basis for MERLION's population-based design.

We now introduce two standard lemmas that characterise the monotonic behaviour~\cite{ulrichBoundingEffectivenessHypervolumeBased2012} (Lemma~\ref{lemma:hv_monotonicity}) and gain~\cite{guerreiroHypervolumeIndicatorComputational2022} (Lemma~\ref{lemma:hv_gain}) of hypervolume under set expansion. These lemmas do not yet depend on the specifics of MERLION, but they provide the basic foundation for interpreting how archive updates may affect hypervolume. We refer the reader to the original source for the full proof.

\begin{lemma} \label{lemma:hv_monotonicity}
The monotonicity occurs if $\mathcal{K}\subseteq \mathcal{H}$, then
\begin{equation}
HV(\mathcal{K};\mathbf{x}^{\text{ref}}) \le HV(\mathcal{H};\mathbf{x}^{\text{ref}}).
\end{equation}
\end{lemma}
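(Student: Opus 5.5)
The plan is to reduce the inequality to monotonicity of Lebesgue measure. By Definition~\ref{def:hypervolume}, $HV(\mathcal{H};\mathbf{x}^{\text{ref}})=\nu(U(\mathcal{H}))$, where
\begin{equation}
U(\mathcal{H}) := \bigcup_{\mathbf{x}\in\mathcal{ND}(\mathcal{H})}[\mathbf{x}^{\text{ref}},\mathbf{x}].
\end{equation}
It then suffices to show $U(\mathcal{K})\subseteq U(\mathcal{H})$ whenever $\mathcal{K}\subseteq\mathcal{H}$, since $\nu$ is monotone under inclusion of measurable sets. Measurability holds because each $U(\cdot)$ is a finite union of closed boxes.

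The only subtlety, and the step I expect to be the main obstacle, is that the union runs over the nondominated subsets. Since $\mathcal{ND}(\mathcal{K})$ need not be contained in $\mathcal{ND}(\mathcal{H})$, inclusion of the index sets cannot be used directly. I will first prove the auxiliary identity $U(\mathcal{H})=\bigcup_{\mathbf{x}\in\mathcal{H}}[\mathbf{x}^{\text{ref}},\mathbf{x}]$ for any finite $\mathcal{H}$. The inclusion ``$\subseteq$'' is immediate. For ``$\supseteq$'', take $\mathbf{h}\in\mathcal{H}$. If it is dominated, pick $\mathbf{h}_1\succ\mathbf{h}$ in $\mathcal{H}$; if $\mathbf{h}_1$ is dominated, pick $\mathbf{h}_2\succ\mathbf{h}_1$, and so on. Dominance is a strict partial order: it is transitive and irreflexive, since $\mathbf{u}\succ\mathbf{v}$ forces $\sum_b u_b>\sum_b v_b$. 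So this chain has no repeats and, because $\mathcal{H}$ is finite, terminates at some $\mathbf{h}^*\in\mathcal{ND}(\mathcal{H})$ with $\mathbf{h}^*\succeq\mathbf{h}$. Componentwise $h_b\le h^*_b$ then gives $[\mathbf{x}^{\text{ref}},\mathbf{h}]\subseteq[\mathbf{x}^{\text{ref}},\mathbf{h}^*]$. The same argument covers degenerate boxes in which some $h_b<x^{\text{ref}}_b$, because the reference point is assumed dominated by all attainable vectors.

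With this identity, $\mathcal{K}\subseteq\mathcal{H}$ gives
\begin{equation}
U(\mathcal{K})=\bigcup_{\mathbf{x}\in\mathcal{K}}[\mathbf{x}^{\text{ref}},\mathbf{x}]\subseteq\bigcup_{\mathbf{x}\in\mathcal{H}}[\mathbf{x}^{\text{ref}},\mathbf{x}]=U(\mathcal{H}),
\end{equation}
and applying $\nu$ yields $HV(\mathcal{K};\mathbf{x}^{\text{ref}})\le HV(\mathcal{H};\mathbf{x}^{\text{ref}})$. This argument assumes $\mathcal{H}$ is finite, as in the paper's setting. For infinite sets one would instead work directly with $\mathrm{Dom}(\mathcal{H})$ intersected with the region above $\mathbf{x}^{\text{ref}}$, and the same inclusion argument applies.
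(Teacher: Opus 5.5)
Your proof is correct and follows the same route as the paper's: show that the region dominated by $\mathcal{K}$ is contained in the region dominated by $\mathcal{H}$, then apply monotonicity of Lebesgue measure. Your auxiliary identity, that the union of boxes over $\mathcal{ND}(\mathcal{H})$ equals the union over all of $\mathcal{H}$, supplies a step the paper skips: its proof writes $HV(\mathcal{S};\mathbf{x}^{\text{ref}})=\lambda(\mathcal{D}(\mathcal{S}))$, with $\mathcal{D}(\mathcal{S})$ the region dominated by all of $\mathcal{S}$, and never reconciles this with the $\mathcal{ND}$-based Definition~\ref{def:hypervolume}.
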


\textit{Proof.} See Appendix~\ref{app:proof-lemma1}.

\begin{lemma} \label{lemma:hv_gain}
    A new nondominated point yields a strict hypervolume gain. Let $\mathcal{H}$ be any finite set, and let $\mathbf{x}\notin \mathrm{Dom}(\mathcal{H})$, i.e., $\mathbf{x}$ is not dominated by any point in $\mathcal{H}$. Assume further that $\mathbf{x}$ is strictly better than the reference point $\mathbf{x}^{\mathrm{ref}}$ in every objective. Then
    \begin{equation}
    HV(\mathcal{H}\cup\{\mathbf{x}\};\mathbf{x}^{\text{ref}}) =
    HV(\mathcal{H};\mathbf{x}^{\text{ref}}) + \Delta HV(\mathbf{x}\mid \mathcal{H}),
    \end{equation}
    with
    \begin{equation}
    \Delta HV(\mathbf{x}\mid \mathcal{H}) > 0.
    \end{equation}
\end{lemma}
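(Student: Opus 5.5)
The plan is to read $HV$ as the Lebesgue measure of a union of boxes and to exhibit an explicit box of positive volume that belongs to $[\mathbf{x}^{\mathrm{ref}},\mathbf{x}]$ but avoids every box generated by $\mathcal{H}$. Write $U(\mathcal{H}) := \bigcup_{\mathbf{h}\in\mathcal{ND}(\mathcal{H})}[\mathbf{x}^{\mathrm{ref}},\mathbf{h}]$. The first step is to check that $U(\mathcal{H}) = \bigcup_{\mathbf{h}\in\mathcal{H}}[\mathbf{x}^{\mathrm{ref}},\mathbf{h}]$. Since $\mathcal{H}$ is finite, every point of $\mathcal{H}$ is weakly dominated by some point of $\mathcal{ND}(\mathcal{H})$, because domination chains terminate, and box inclusion is monotone in the upper corner.

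The same reduction applies to $\mathcal{H}\cup\{\mathbf{x}\}$. Although adding $\mathbf{x}$ may remove some points from the nondominated set, the union of boxes is still $U(\mathcal{H})\cup[\mathbf{x}^{\mathrm{ref}},\mathbf{x}]$. We then define
\[
\Delta HV(\mathbf{x}\mid\mathcal{H}) := \nu\bigl([\mathbf{x}^{\mathrm{ref}},\mathbf{x}]\setminus U(\mathcal{H})\bigr).
\]
With this definition, the claimed equality follows from additivity of $\nu$ over the disjoint union $U(\mathcal{H})\,\dot\cup\,\bigl([\mathbf{x}^{\mathrm{ref}},\mathbf{x}]\setminus U(\mathcal{H})\bigr)$. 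What remains is to show that this measure is strictly positive.

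For strict positivity, I interpret the hypothesis $\mathbf{x}\notin\mathrm{Dom}(\mathcal{H})$ as saying that no $\mathbf{h}\in\mathcal{H}$ satisfies $\mathbf{h}\succeq\mathbf{x}$. So for each $\mathbf{h}\in\mathcal{H}$ there is a coordinate $b(\mathbf{h})$ with $h_{b(\mathbf{h})}<x_{b(\mathbf{h})}$. Because $\mathcal{H}$ is finite, we can set
\[
\varepsilon := \min\Bigl(\min_{\mathbf{h}\in\mathcal{H}}\bigl(x_{b(\mathbf{h})}-h_{b(\mathbf{h})}\bigr),\ \min_b\bigl(x_b-x^{\mathrm{ref}}_b\bigr)\Bigr),
\]
which is strictly positive by this observation together with the assumption $\mathbf{x}>\mathbf{x}^{\mathrm{ref}}$ componentwise. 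Now consider the box $C:=\prod_b(x_b-\varepsilon,\,x_b]$. It lies inside $[\mathbf{x}^{\mathrm{ref}},\mathbf{x}]$, and its volume is $\varepsilon^B>0$. Take any $\mathbf{y}\in C$ and any $\mathbf{h}\in\mathcal{H}$. Then $y_{b(\mathbf{h})}>x_{b(\mathbf{h})}-\varepsilon\ge h_{b(\mathbf{h})}$, so $\mathbf{y}\notin[\mathbf{x}^{\mathrm{ref}},\mathbf{h}]$. Hence $C\cap U(\mathcal{H})=\emptyset$, which gives $\Delta HV\ge\varepsilon^B>0$.

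The main subtlety is the reading of ``not dominated''. If it meant only $\nexists\,\mathbf{h}\succ\mathbf{x}$, then $\mathbf{x}$ could already belong to $\mathcal{H}$ and the gain would be zero. So I will state explicitly that the hypothesis is $\mathbf{x}\notin\mathrm{Dom}(\mathcal{H})$ in the weak sense, as the lemma's own notation indicates. The finiteness of $\mathcal{H}$ is used twice: once in the $\mathcal{ND}$ reduction and once to obtain a positive minimum $\varepsilon$.
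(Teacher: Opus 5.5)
Your proof is correct and takes the same route as the paper's. In both, the equality is additivity of the exclusive contribution, and positivity comes from a positive-volume part of $[\mathbf{x}^{\mathrm{ref}},\mathbf{x}]$ that no box of $\mathcal{H}$ covers. The difference is that the paper only asserts such a region exists, whereas you construct it explicitly as $\prod_b(x_b-\varepsilon,x_b]$. You are also right to insist on the weak-dominance reading of $\mathbf{x}\notin\mathrm{Dom}(\mathcal{H})$, which the paper's paraphrase ``no point in $\mathcal{H}$ dominates $\mathbf{x}$'' glosses over; under the strict reading the claim fails when $\mathbf{x}\in\mathcal{H}$.
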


\textit{Proof.} See Appendix~\ref{app:proof-lemma2}.

Lemmas~\ref{lemma:hv_monotonicity} and ~\ref{lemma:hv_gain} are generic properties of hypervolume and do not yet depend on the specifics of MERLION. To relate them to our population-based setting, we next formalise the event that a newly generated candidate improves the current archive.

\paragraph{Population advantage via improvement probability}

\noindent \textbf{Assumptions in MERLION.}
To keep the discussion transparent, we used the following assumptions to interpret why a population of meta-policies
$\{\theta_i\}_{i=1}^{P}$ (each paired with a scalarisation weight $\mathbf{w}_i$) is intended to yield higher hypervolume than a single meta-policy.
\begin{enumerate}[label=(A\arabic*), leftmargin=*]
    \item \textit{Fixed reference point and direction.} Maximisation is used for all objectives and the
    reference point $\mathbf{x^{\text{ref}}}$ is fixed.
    
    \item \textit{Nondominated retention (archive/elitist retention).}
    Evaluations are retained in an external nondominated archive or, equivalently, the procedure does not discard
    previously found nondominated objective vectors. In particular, for any current archive $\mathcal{H}$ and newly
    evaluated set $\mathcal{K}$, the retained set includes $\mathcal{ND}(\mathcal{H}\cup\mathcal{K})$.
    
    \item \textit{Preference diversity.} The population weights $\{\mathbf{w}_i\}_{i=1}^{P}$ are sufficiently
    spread over the simplex (i.e., not all concentrated around a single preference), so that the resulting candidates
    are not perfectly correlated.
    
    \item \textit{Non-trivial objective conflict.} There exist weights $\mathbf{w}_{j},\mathbf{w}_{k}$ such that
    their induced update directions can disagree. 
\end{enumerate}

These assumptions aim to interpret the role of population-based search in our setting. On the other hand, since Lemma~\ref{lemma:hv_gain} implies that a strict hypervolume gain occurs whenever a generated candidate is not dominated by the retained archive, we define the corresponding archive-improvement event as follows.

\begin{defn}
Let $\mathcal{H}$ denote the current retained set (archive), and let
\begin{equation}
\mathcal{K}^{(P)}=\{\mathbf{x}_{1},\ldots,\mathbf{x}_{P}\},
\qquad
\mathbf{x}_{i}:=\mathbf{f}(\theta_i),
\end{equation}
be the $P$ candidate objective vectors produced by $P$ meta-policies, where $\theta_i$ is associated with the preference vector $\mathbf{w}_i$. Define the archive-improvement event
\begin{equation}
X_i := \{\mathbf{x}_i \notin \mathrm{Dom}(\mathcal{H})\}.
\end{equation}
Thus, $X_i$ denotes the event that candidate $i$ is not weakly dominated by the current archive $\mathcal{H}$ and therefore improves the retained set. Consequently, $\bigcup_{i=1}^{P} X_i$ denotes the event that at least one candidate improves the archive, while $X_i^c$ denotes the event that candidate $i$ does not improve the archive.
\end{defn}

\begin{props}[Population improvement probability]
\label{prop:population_improvement}
Let $X_1,\dots,X_P$ be the archive-improvement events defined above. Suppose these events are independent and satisfy $\Pr(X_i)\ge \rho$ for all $i$, where $\rho\in(0,1)$. Then
\begin{equation}
\Pr\!\left(\bigcup_{i=1}^{P} X_i\right) = 1-\Pr\!\left(\bigcap_{i=1}^{P} X_i^c\right) \ge 1-(1-\rho)^P.
\end{equation}
Moreover, for $P=1$,
\begin{equation}
\Pr(\text{any improvement})=\Pr(X_1)\ge \rho,
\end{equation}
whereas for $P>1$,
\begin{equation}
\Pr(\text{any improvement})\ge 1-(1-\rho)^P > \rho.
\end{equation}
\end{props}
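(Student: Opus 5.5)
The plan is a standard complement-and-independence computation followed by an elementary inequality in $P$.

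\emph{Step 1 (complement identity).} By De Morgan's law, $\left(\bigcup_{i=1}^{P} X_i\right)^c=\bigcap_{i=1}^{P} X_i^c$. Hence
\begin{equation*}
\Pr\!\left(\bigcup_{i=1}^{P} X_i\right)=1-\Pr\!\left(\bigcap_{i=1}^{P} X_i^c\right),
\end{equation*}
which is the stated equality.

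\emph{Step 2 (bound the intersection).} Independence of $X_1,\dots,X_P$ implies independence of the complements $X_1^c,\dots,X_P^c$. I will state this as the standard fact that independence is preserved when any subfamily of events is replaced by its complements. It follows by induction on the number of complemented events, using $\Pr(A^c\cap C)=\Pr(C)-\Pr(A\cap C)$. Therefore
\begin{equation*}
\Pr\!\left(\bigcap_{i=1}^{P} X_i^c\right)=\prod_{i=1}^{P}\bigl(1-\Pr(X_i)\bigr)\le(1-\rho)^P,
\end{equation*}
since each factor satisfies $0\le 1-\Pr(X_i)\le 1-\rho$. Substituting into Step 1 gives $\Pr\!\left(\bigcup_{i} X_i\right)\ge 1-(1-\rho)^P$.

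\emph{Step 3 (the two regimes).} For $P=1$ the union is just $X_1$, so $\Pr(X_1)\ge\rho$ by hypothesis. For $P>1$ it remains to show $1-(1-\rho)^P>\rho$, which is equivalent to $(1-\rho)^P<1-\rho$. Set $q=1-\rho$. Because $\rho\in(0,1)$, we have $q\in(0,1)$, so $q^{P-1}<1$ for every $P\ge2$. Multiplying by $q>0$ gives $q^P<q$. Here the strictness of $\rho\in(0,1)$ is essential: if $\rho=1$ the two sides coincide, and if $\rho=0$ they are equal as well.

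\emph{Main obstacle.} The only non-routine points are the passage from independence of the events to independence of their complements, and remembering that the lower bound uses the per-factor inequality rather than equality. Both are elementary, so I do not expect any real difficulty. The interpretive link to hypervolume is separate from this argument. Via Lemma~\ref{lemma:hv_gain}, together with (A2) and the assumption that each candidate strictly beats $\mathbf{x}^{\mathrm{ref}}$, any realised $X_i$ yields a strict hypervolume gain. This belongs in a remark, not in the proof of the probabilistic statement.
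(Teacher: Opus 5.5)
Your proposal is correct and follows the paper's proof essentially line for line: the complement rule, factorising $\Pr(\bigcap_i X_i^c)$ under independence with the per-factor bound $0\le 1-\Pr(X_i)\le 1-\rho$, and the inequality $(1-\rho)^P<1-\rho$ for $P>1$ and $\rho\in(0,1)$. You are slightly more careful than the paper, since you explicitly justify that complements of independent events are independent and that the factors are nonnegative, whereas the paper uses both facts implicitly.
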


\textit{Proof.} See Appendix~\ref{app:proof-prop1}.

\noindent\textbf{Interpretation.}
Under independence, the probability that at least one candidate improves the archive increases with the population size. Together with Lemma~\ref{lemma:hv_monotonicity}, Lemma~\ref{lemma:hv_gain}, and Assumption~(A2), this suggests that a population of meta-policies may be more likely than a single meta-policy to produce a strict hypervolume increase. However, this argument does not explain why different meta-policies should generate complementary rather than redundant candidates. We therefore next consider how preference specialisation may reduce cross-preference gradient interference.

\paragraph{Why specialisation helps}
When a single shared meta-policy is trained across heterogeneous preference vectors, the resulting gradients may interfere. By contrast, assigning different preferences to different meta-policies allows each member of the population to specialise, which may reduce cross-preference gradient cancellation.

Given a task $\mathcal{T}$, let $\mathbf{F}(\theta,\mathcal{T})\in\mathbb{R}^{B}$ denote the expected discounted vector return induced by policy $\pi_{\theta}$.
For a scalarisation weight vector $\mathbf{w}\in\Delta^{B-1}$, define the scalarised objective
\begin{equation}
J(\theta;\mathcal{T},\mathbf{w}) := \mathbf{w}^{\top}\mathbf{F}(\theta,\mathcal{T}), \qquad g(\theta;\mathcal{T},\mathbf{w}) := \nabla_{\theta}J(\theta;\mathcal{T},\mathbf{w}).
\end{equation}
A single meta-policy trained to serve multiple preferences drawn from a distribution $p(\mathbf{w})$
therefore follows
\begin{equation}
\nabla_{\theta}\,\mathbb{E}_{\mathbf{w}\sim p(\mathbf{w})}\!\left[J(\theta;\mathcal{T},\mathbf{w})\right] = \mathbb{E}_{\mathbf{w}\sim p(\mathbf{w})}\!\left[g(\theta;\mathcal{T},\mathbf{w})\right].
\end{equation}
Let $g_j := g(\theta;\mathcal{T},\mathbf{w}_{(j)})$ and $g_k := g(\theta;\mathcal{T},\mathbf{w}_{(k)})$.

By using the above information, we can formalise the specialisation using Proposition~\ref{prop:grad_interf}.

\begin{props}[gradient interference]
\label{prop:grad_interf}
If two preferences $\mathbf{w}_{(j)},\mathbf{w}_{(k)}\in\Delta^{B-1}$ induce conflicting gradients:
    \begin{equation}
    g_j^\top g_k < 0,
    \end{equation}
    
    \noindent then gradient conflict implies cancellation in the combined direction.
    \begin{equation}
    \begin{aligned}
    \|g_j + g_k\|^2
    &= \|g_j\|^2 + \|g_k\|^2 + 2 g_j^\top g_k \\
    &< \|g_j\|^2 + \|g_k\|^2.
    \end{aligned}
    \end{equation}
\end{props}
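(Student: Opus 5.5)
The plan is a direct algebraic expansion of the squared Euclidean norm, followed by the strict inequality that comes from the sign hypothesis. No earlier lemma is needed. Lemmas~\ref{lemma:hv_monotonicity} and~\ref{lemma:hv_gain} concern hypervolume and play no role here. The statement lives entirely in the parameter space of $\theta$, with $g_j,g_k$ treated as fixed vectors evaluated at the current $\theta$ and task $\mathcal{T}$.

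First, I will write $\|g_j+g_k\|^2=(g_j+g_k)^\top(g_j+g_k)$. Bilinearity and symmetry of the Euclidean inner product give $g_j^\top g_j + g_j^\top g_k + g_k^\top g_j + g_k^\top g_k$. Collecting the two cross terms via $g_k^\top g_j=g_j^\top g_k$ yields the displayed identity $\|g_j\|^2+\|g_k\|^2+2g_j^\top g_k$.

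Second, the hypothesis $g_j^\top g_k<0$ gives $2g_j^\top g_k<0$. Adding the finite, nonnegative quantity $\|g_j\|^2+\|g_k\|^2$ to both sides preserves the strict inequality, which produces the second line of the display.

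To make ``cancellation'' concrete, I will add a remark. Because $\|g_j\|^2+\|g_k\|^2\le(\|g_j\|+\|g_k\|)^2$, the combined step is strictly shorter than it would be under Pythagorean orthogonality, and so also shorter than aligned gradients. This also applies to the averaged direction $\tfrac12(g_j+g_k)$ used by a single meta-policy under a two-point $p(\mathbf{w})$. Scaling by $\tfrac14$ preserves the inequality, so the shared update loses magnitude relative to the specialised updates taken separately.

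There is no genuine obstacle. The only point needing care is that the gradients must exist, i.e.\ $J(\theta;\mathcal{T},\mathbf{w})$ is differentiable at $\theta$, which I will take as given from the policy-gradient setting. The remark should also be explicit about scope. The conclusion is a norm statement about the two-preference sum and not a claim about the full expectation $\mathbb{E}_{\mathbf{w}}[g]$. The expectation version follows analogously by expanding $\|\sum_\ell g_\ell\|^2$ into pairwise inner products, with a strict reduction whenever the pairwise inner products sum to a negative value.
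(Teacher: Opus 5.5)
Your proposal is correct and follows the same approach as the paper, which likewise expands $\|g_j+g_k\|^2=\|g_j\|^2+\|g_k\|^2+2g_j^\top g_k$ and then applies the hypothesis $g_j^\top g_k<0$. Your extra remarks on the averaged direction and the multi-gradient expansion are also sound, but they go beyond what the paper proves.
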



\noindent In contrast, a population $\{\theta_i\}_{i=1}^{P}$ can specialise updates by assigning one preference
vector $\mathbf{w}_i$ to each meta-policy:
\begin{equation}
\theta_i \leftarrow \theta_i + \beta\, g(\theta_i;\mathcal{T},\mathbf{w}_i),
\qquad i=1,\dots,P.
\end{equation}
Equivalently, since $\mathcal{L}_{\mathcal{T},\mathbf{w}}(\pi_{\theta})=-J(\theta;\mathcal{T},\mathbf{w})$, this can be written in the loss-minimisation form used earlier as
\begin{equation}
\theta_i \leftarrow \theta_i - \beta\, \nabla_{\theta_i} \mathcal{L}_{\mathcal{T},\mathbf{w}_i}(\pi_{\theta_i}).
\end{equation}

\textit{Proof.} See Appendix~\ref{app:proof-prop2}.

This suggests that specialisation across preferences can reduce cross-preference gradient cancellation and, together with preference diversity, encourage the discovery of complementary nondominated solutions that expand the PF set and increase hypervolume. Although both Meta-MORL and MERLION sample task variation during meta-training, the key distinction is that Meta-MORL aggregates all preference-conditioned gradients into a single shared meta-policy, whereas MERLION assigns each preference vector $\mathbf{w}_i$ to its own meta-parameter $\theta_i$. Hence, MERLION decentralises optimisation across preferences, which can mitigate cross-preference gradient interference even though each meta-policy still learns over a batch of tasks.

\subsection{Performance Measurements}\label{sec:performance_measurement}

Let $\mathcal{H}=\{\mathbf{v}^{(n)}\}_{n=1}^{N_{\mathcal{H}}}\subset\mathbb{R}^{B}$ denote the finite nondominated approximation set obtained on the target task, where each $\mathbf{v}^{(n)}$ is an expected discounted vector return as in Eq.~\eqref{eq:bg_vector_return}, $B$ is the number of objectives, and all objectives are treated in maximisation form. Let $\mathbf{x}^{\mathrm{ref}}\in\mathbb{R}^{B}$ denote a fixed reference point dominated by all attainable objective vectors. For distance-based evaluation, let $\mathcal{Y}=\{\mathbf{y}^{(m)}\}_{m=1}^{N_{\mathcal{Y}}}\subset\mathbb{R}^{B}$ denote the empirical reference PF, estimated by pooling all solutions across runs and extracting the nondominated subset.

We assess each method using complementary indicators, as no single metric fully captures both convergence and diversity~\cite{knowles2002local,zitzlerPerformanceAssessmentMultiobjective2003,brankeMultiobjectiveOptimizationInteractive2008}.

\paragraph{Hypervolume.}
Hypervolume measures the Lebesgue measure of the region dominated by $\mathcal{H}$ with respect to $\mathbf{x}^{\mathrm{ref}}$ (see Eq.~\eqref{eq:hv}). A larger hypervolume indicates a better approximation in terms of both convergence and coverage~\cite{zitzlerMultiobjectiveOptimizationUsing1998,obayashiEvolutionaryMulticriterionOptimization2007}. In this study, we use a standardised hypervolume with reference point $\mathbf{x}^{\mathrm{ref}} = (0,0,0)$, which is valid since all objectives are normalised to $[0,1]$ prior to evaluation, ensuring the reference point is dominated by all attainable objective vectors.

\paragraph{Expected Utility Metric (EUM).}
EUM evaluates preference coverage under linear scalarisation:
\begin{equation}
\label{eq:EUM}
\operatorname{EUM}(\mathcal{H}) = \mathbb{E}_{\mathbf{w}\sim p(\mathbf{w})} \left[ \max_{\mathbf{v}\in\mathcal{H}} \mathbf{w}^{\top}\mathbf{v} \right],
\end{equation}
where $\mathbf{w}\in\Delta^{B-1}$ is a preference vector sampled from an evaluation distribution $p(\mathbf{w})$ over the $(B-1)$-simplex. Larger EUM indicates that the approximation set better supports a wide range of user preferences~\cite{zintgrafQualityAssessmentMORL2015}.

\paragraph{Sparsity.}
Sparsity characterises the spacing between neighbouring solutions along each objective:
\begin{equation}
\label{eq:sparsity}
\operatorname{S}(\mathcal{H}) = \frac{1}{N_{\mathcal{H}}-1} \sum_{b=1}^{B} \sum_{i=1}^{N_{\mathcal{H}}-1} \left( \tilde{v}_{b,(i+1)}-\tilde{v}_{b,(i)} \right)^{2},
\end{equation}
where $\tilde{v}_{b,(1)}\leq \cdots \leq \tilde{v}_{b,(N_{\mathcal{H}})}$ are the objective-$b$ values of the points in $\mathcal{H}$ after sorting in ascending order along objective $b$. Smaller sparsity indicates a denser approximation set, whereas excessively small values may also suggest concentration in a narrow region of the front~\cite{pmlr-v119-xu20h}.


\paragraph{Average Hausdorff Distance (AHD).}
AHD measures the discrepancy between the approximation set $\mathcal{H}$ and the reference front $\mathcal{Y}$:
\begin{equation}
\label{eq:AHD}
\begin{aligned}
\Delta_{p}(\mathcal{H},\mathcal{Y})
= \max \Biggl\{&
\left(\frac{1}{N_{\mathcal{H}}} \sum_{\mathbf{v}\in\mathcal{H}} d(\mathbf{v},\mathcal{Y})^{p} \right)^{1/p}, \\
&
\left(\frac{1}{N_{\mathcal{Y}}} \sum_{\mathbf{y}\in\mathcal{Y}} d(\mathbf{y},\mathcal{H})^{p} \right)^{1/p}
\Biggr\}.
\end{aligned}
\end{equation}
where $d(\mathbf{z},\mathcal{A}) = \min_{\mathbf{a}\in\mathcal{A}} \lVert \mathbf{z}-\mathbf{a}\rVert_{2}$ and we use $p=2$. Lower AHD indicates a closer approximation to the reference front~\cite{schutzeUsingAveragedHausdorff2012}. Here, $\mathcal{Y}$ is constructed by pooling all solutions from all methods and extracting the nondominated subset.

\section{Experiments and Results}
This section evaluates the proposed algorithm against several benchmark methods on three SC problems varying in complexity~\cite{rachmanReinforcementLearningMultiobjective2026} and various popular RL-based problems~\cite{felten_toolkit_2023}. We conduct our experiments using Python 3.11.2 on JupyterLab with 128 GB RAM and a 16-core CPU. The SC experiments are representative rather than exhaustive: the selected simple, moderate, and complex instances are non-trivial and capture increasing operational coupling, dimensionality, and constraint complexity. We train our main experiment on SC problems using three configurations that mimic the problems our industrial partner typically encounters in real life, as shown in Table~\ref{tab:sc_environment}, using a customisable SC simulator~\cite{rachmanMultiobjectiveSequentialDecision2025}.

\begin{table}[!ht]
    \centering
    \caption{SC environments used in our experiment, where complexity levels are defined by the sizes of the action and observation spaces and by the counts of nodes and edges.}
    \label{tab:sc_environment}
    \begin{tabular}{lllll}
        \toprule
        Complexities & Action & Observation & Nodes & Edges\\
        \midrule
        Simple & 8 & 20 & 7 & 8\\
        Moderate & 21 & 49 & 13 & 21\\
        Complex & 59 & 131 & 24 & 59\\
        \bottomrule
    \end{tabular}
\end{table}

\subsection{Algorithm Settings}
This section presents the experiment settings of our proposed algorithm along with its benchmark methods. Each instantiation runs across 10 seeds and obtains 10 PF approximation sets. Further detailed parameters and hyperparameters used in these experiments are given in Appendix~\ref{app:sc_setting}.

\subsubsection{Meta-training settings}
We implement MAML~\cite{finnModelAgnosticMetaLearningFast2017} with PPO~\cite{schulmanProximalPolicyOptimization2017} in Ray RLlib 2.3.1~\cite{liangRLlibAbstractionsDistributed2017}, extending it to handle both Meta-MORL and MERLION. For Meta-MORL, meta-training runs for $10^6$ environment steps on randomly generated SC tasks. We use four inner steps for simple tasks and eight for moderate or complex ones, trading off rapid adaptation against task-specific learning. A meta-environment wrapper supports dynamic task sampling to better mirror realistic task variation.

MERLION uses a comparable total step budget to Meta-MORL, distributed across the $P$ subproblems rather than multiplied by $P$. Preference weights $\mathbf{w}$ are drawn uniformly from the 3D simplex via Dirichlet$(1,1,1)$, and linear scalarisation is adopted for its simplicity and smooth optimisation landscape when training many subproblems in high-dimensional combinatorial domains. Training stability is enhanced through PPO clipping, Kullback--Leibler regularisation, and generalised advantage estimation. Although maintaining $P$ meta-policies increases meta-training cost, the subproblems are independent and can in principle be trained in parallel. Moreover, this is a one-time offline investment: at deployment, MERLION requires only 5,000--25,000 fine-tuning steps per subproblem ($\sim$3.3\% of conventional training), amortising the upfront overhead across repeated deployments on new SC configurations.

\subsubsection{Fine-tuning settings}
Fine-tuning uses the Messiah SC simulator~\cite{rachmanReinforcementLearningMultiobjective2026} for 5,000, 15,000, and 25,000 steps on simple, moderate, and complex tasks, respectively, which is substantially fewer than training-from-scratch RL baselines (typically $\sim$500,000 steps in our SC setup). Similar fine-tuning steps are applied in the MERLION setup for each individual within the population. The number of shots matches meta-training. The observation and action spaces are normalised. We fine-tune using PPO and apply non-dominated sorting~\cite{rachmanMultiobjectiveSequentialDecision2025}. To reduce variance, we employ generalised advantage estimation with advantage normalisation and adopt a standard multilayer perceptron policy. We set local perturbation sizes $M=[5,2,2]$ for simple, moderate, and complex problems and a fixed perturbation strength $\delta_{pert}=0.05$ across all settings.

\subsubsection{MORL-based method settings}
We adopt MORL/D~\cite{feltenMultiObjectiveReinforcementLearning2023} as the representative conventional MORL approach because its underlying mechanism closely matches that of our proposed method, namely a population-based framework with problem decomposition. The MORL/D implementation is equipped with a reward normalisation wrapper so that all components of the reward vector lie on a comparable scale. It is trained under the scalarised expected returns objective~\cite{hayesPracticalGuideMultiobjective2022}, where expected values are first estimated from multiple runs and subsequently used to compute the utility. In this study, MORL/D is instantiated with a multi-policy soft actor--critic~\cite{chenCombiningGradientbasedMethod2020}, which proceeds in two phases: it first employs MORL/D to learn a collection of policies and then applies an evolutionary strategy to further explore the solution space using these previously learned policies as a starting point.



\subsubsection{Population-based metaheuristic method settings}

Comparing RL with optimisation-based methods requires careful consideration because RL learns policies through sequential environment interactions, whereas population-based metaheuristics search directly in the decision space by improving a set of candidate solutions. Nevertheless, this comparison is useful for positioning RL methods against established multi-objective SC optimisation approaches.

We use the NSGA-II implementation in Pymoo~\cite{blankPymooMultiObjectiveOptimization2020}, and MOEA/D and MOPSO from Pygmo~\cite{Biscani2020}. Parameter settings are selected to promote convergence and diversity within our computational budget. For NSGA-II, mutation and crossover distribution indices control the exploration--exploitation trade-off, with smaller values encouraging broader search and larger values favouring local refinement~\cite{debFastElitistMultiobjective2002}. MOEA/D guides the search through scalarised subproblems, whereas MOPSO updates particles using individual and global best positions.

Empirical trials showed that population sizes greater than 300 repeatedly caused system-level memory failures. We therefore set the population size to 300 for all population-based metaheuristics, which is the largest feasible value under our computational constraints.

\subsubsection{Standard MORL benchmarks}
To extend MERLION performance outside of SC (see Section~\ref{sec:domain_agnosticism}), we adopt several additional traditional RL methods as baselines. To ensure a fair and rigorous comparison of sample efficiency, we carefully control the environmental interaction budgets across all algorithms. The conventional MORL baselines were trained from scratch for a total of $3 \times 10^6$ timesteps per environment to allow sufficient convergence. In contrast, the meta-learning approaches (Meta-MORL and the proposed MERLION) undergo a meta-training phase consisting of $3 \times 10^6$ cumulative timesteps distributed across the diverse task distribution. However, during the crucial adaptation phase on unseen target environments, both Meta-MORL and MERLION were restricted to a fine-tuning budget of only $100,000$ timesteps per scalarisation weight. This experimental design explicitly highlights the core advantage of our approach: amortising the high $3 \times 10^6$ computational cost during offline meta-training, thereby enabling rapid, sample-efficient policy adaptation (requiring only $\sim 3.3\%$ of the conventional training budget) when deployed in new scenarios.

\subsection{Experiment Results} \label{sec:experiment_results}
In this section, we compare MERLION against several baseline algorithms, including Meta-MORL~\cite{chenMetaLearningMultiobjectiveReinforcement2019}, which extends MAML-style~\cite{finnModelAgnosticMetaLearningFast2017} Meta-RL via a scalarisation approach; MORL/D~\cite{feltenMultiObjectiveReinforcementLearning2023}, which exemplifies MORL methods based on decomposition; and NSGA-II~\cite{debFastElitistMultiobjective2002}, a widely used evolutionary algorithm for combinatorial optimisation~\cite{wangEfficiencySortingMultiobjective2020}. For additional comparison, we also include MOEA/D~\cite{qingfuzhangMOEAMultiobjectiveEvolutionary2007} and MOPSO~\cite{coelloHandlingMultipleObjectives2004} in our experiments.

We assess significance using a non-parametric pipeline: the Kruskal--Wallis test identifies overall differences, followed by Dunn's test with Bonferroni correction for pairwise comparisons against MERLION. This controls the family-wise error rate more conservatively than uncorrected tests. In all result tables, markers denote a significant difference from MERLION ($p < 0.05$).

\subsubsection{Convergence analysis}
Figure~\ref{fig:convergence} illustrates the hypervolume achieved by MERLION and the baseline algorithms over the course of fine-tuning. For simple and moderate SC settings, our method attains the largest hypervolume within 5,000 and 15,000 fine-tuning steps, respectively, whereas MORL/D requires 500,000 steps to reach its performance. Furthermore, for the complex SC case with 25,000 steps, the hypervolume obtained by our approach remains on par with MORL/D, with no substantial difference. Across all levels of problem complexity, our algorithm consistently outperforms Meta-MORL and the metaheuristic approaches in terms of hypervolume (see Table~\ref{tab:overall_performance}).

\begin{figure*}
\centering
\subfloat[\footnotesize Simple SC]{\includegraphics[width=0.33\linewidth]{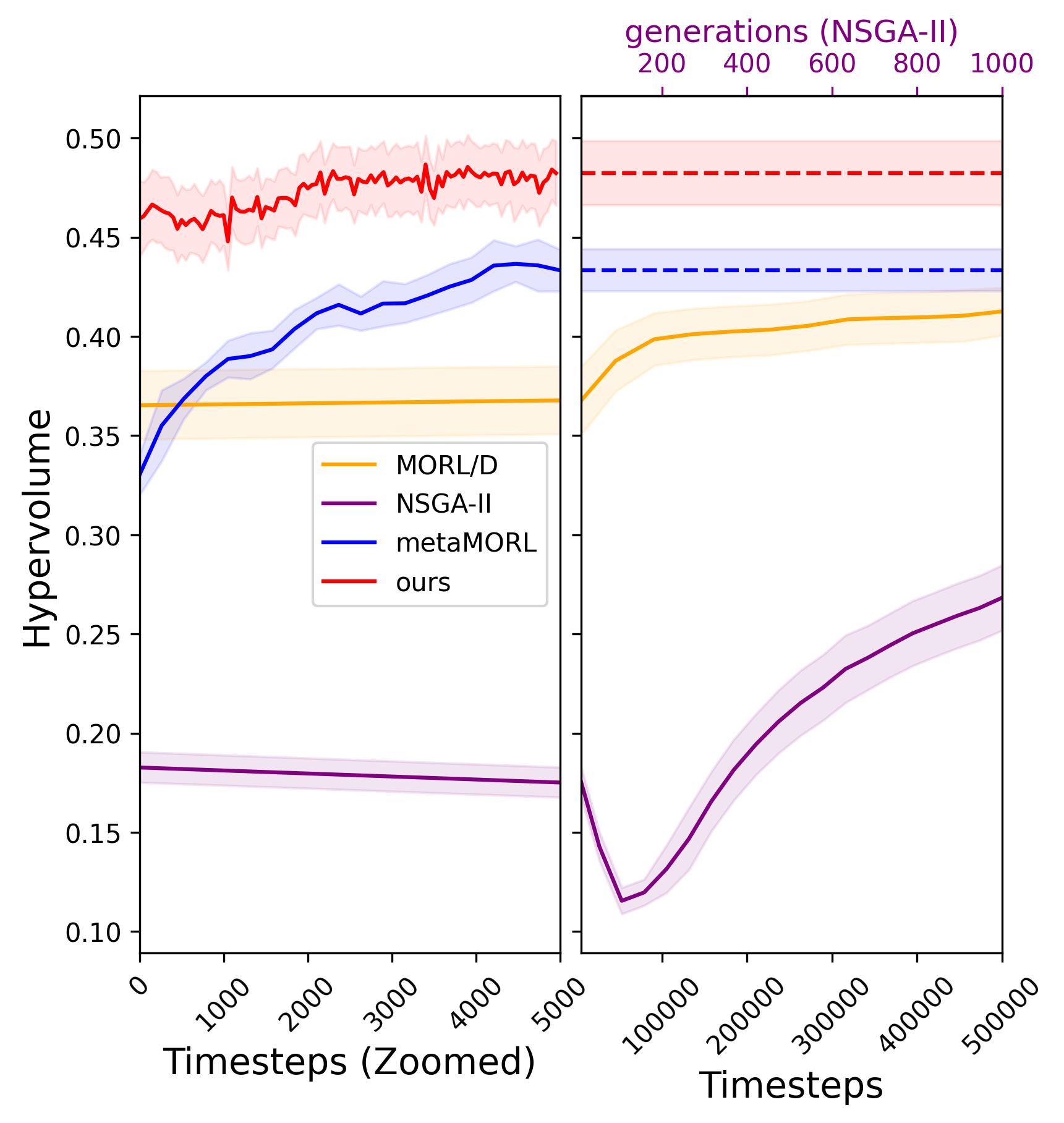}}\hfil
\subfloat[\footnotesize Moderate SC]{\includegraphics[width=0.33\linewidth]{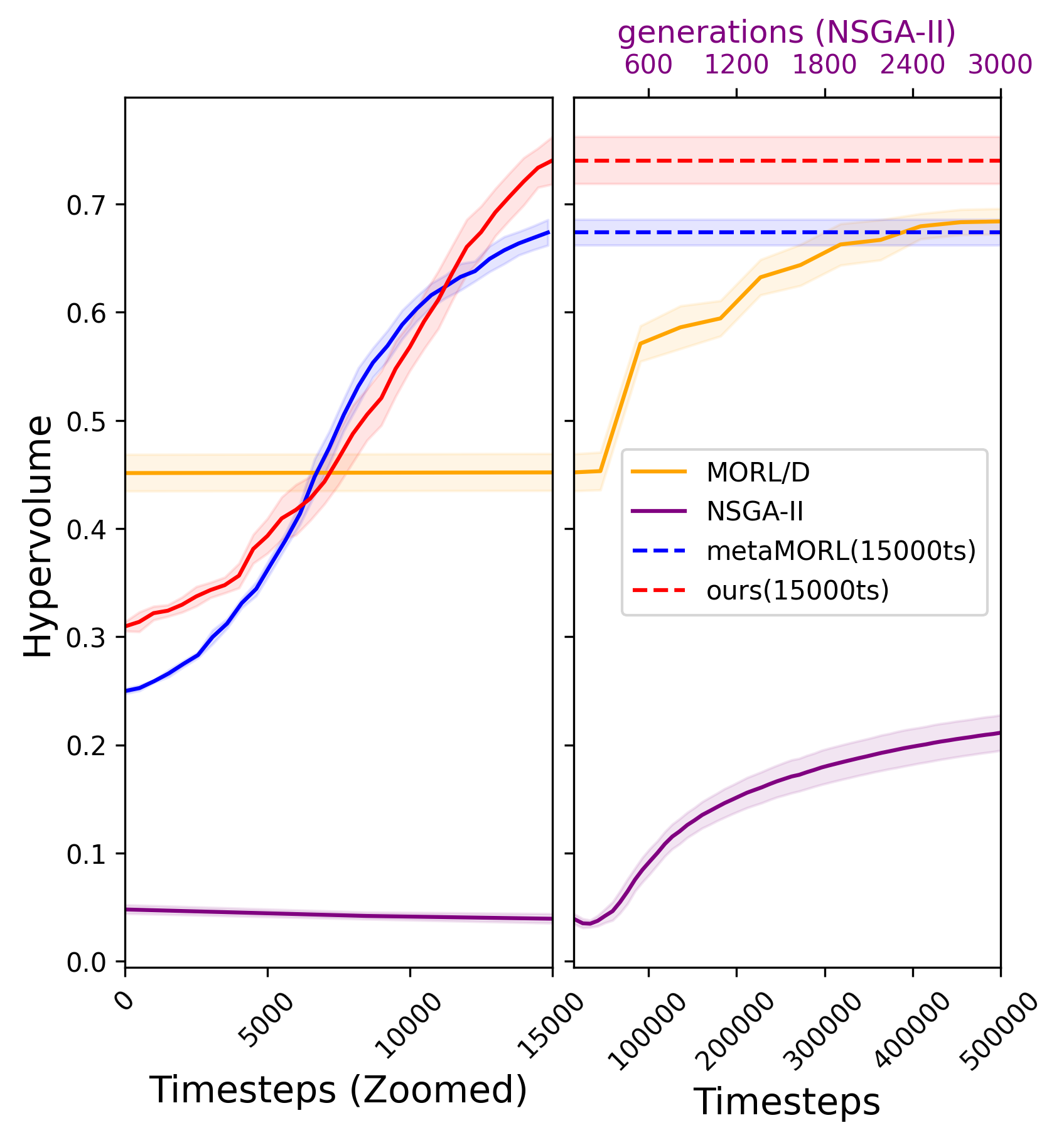}}\hfil
\subfloat[\footnotesize Complex SC]{\includegraphics[width=0.33\linewidth]{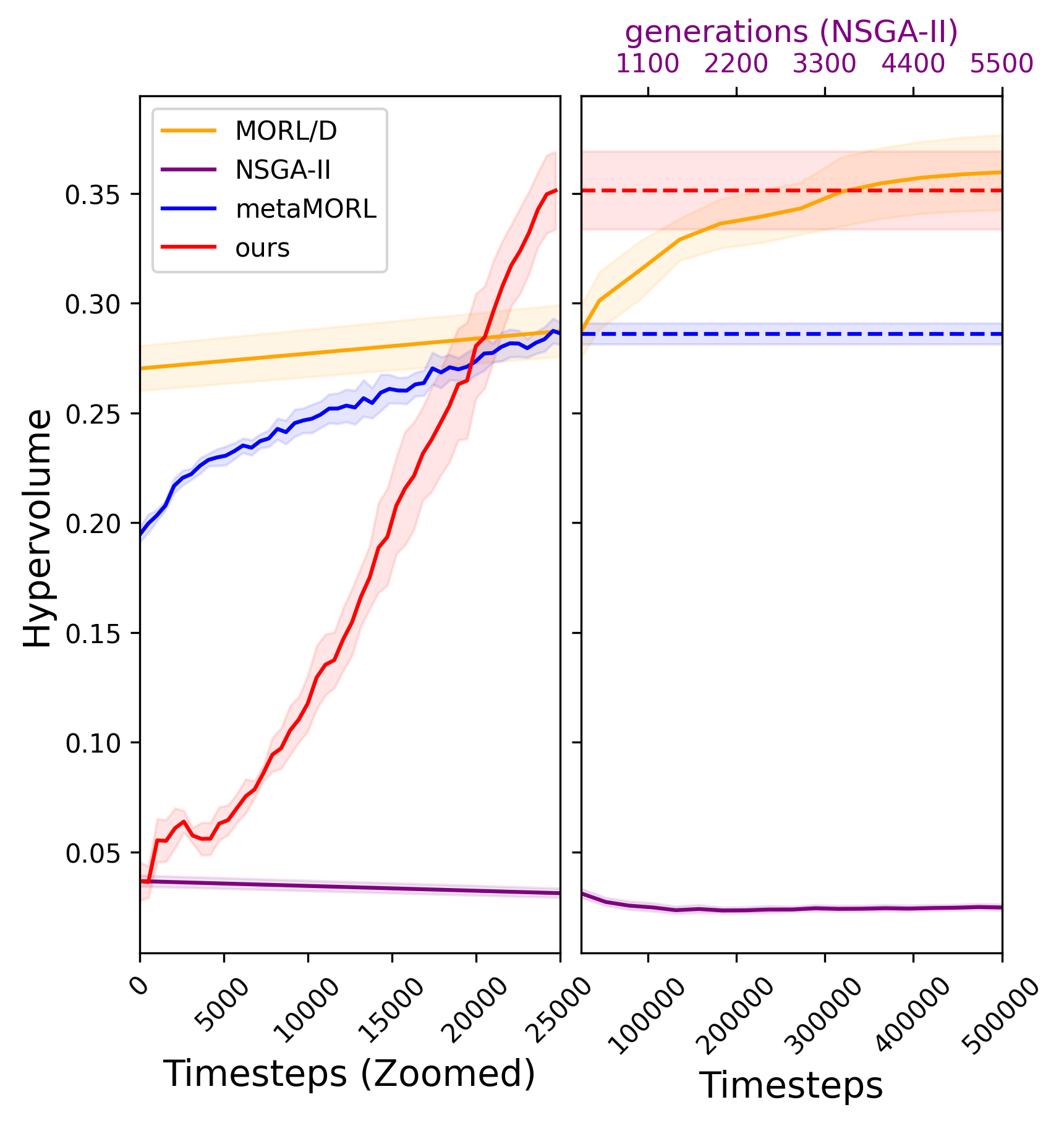}} \\
\caption{Hypervolume convergence plots across SC environments. Our proposed algorithm consistently shows higher hypervolume than Meta-MORL and NSGA-II at convergence in all problem complexities, while showing higher hypervolume than MORL/D in simple and moderate SC.}
\label{fig:convergence}
\end{figure*}

\begin{table*}[!ht]
\centering
\caption{Hypervolume (Hv), sparsity (Spar), Expected Utility Metric (EUM), and AHD summary across problem complexities. MERLION consistently presents the best AHD values in all complexities, and the best hypervolume in simple and moderate SC.}
\label{tab:overall_performance}
    \begin{tabular}{lllllllllll}
    \toprule
    \textbf{Problem} & \textbf{Algorithm} & \textbf{Hv Mean $\uparrow$} & \textbf{Hv Std} & \textbf{Spar Mean $\approx$} & \textbf{Spar Std} & \textbf{EUM Mean $\uparrow$} & \textbf{EUM Std} & \textbf{AHD Mean $\downarrow$} & \textbf{AHD Std} \\
    \midrule
    \multirow{5}{*}{Simple}
      & MOEA/D   & 0.2740 $\blacktriangle$ & 0.0115 & 0.0008 $\blacktriangle$ & 0.0000 & - & - & 0.4673 $\blacktriangledown$ & 0.0174\\
      & MOPSO\textsuperscript{*} & 0.2071 $\blacktriangle$ & 0.0228 & 0.0000 $\blacktriangle$ & 0.0000 & - & - & 0.4501 $\blacktriangledown$ & 0.0190 \\
      & NSGA-II   & 0.2682 $\blacktriangle$ & 0.0376 & 0.0022 $\circ$ & 0.0010 & - & - & 0.5976 $\blacktriangledown$ & 0.0145 \\
      & MORL/D   & 0.4126 $\circ$ & 0.0185 & 0.0245 $\circ$ & 0.0142 & - & - & 0.5813 $\blacktriangledown$ & 0.0177 \\
      & Meta-MORL & 0.4333 $\circ$ & 0.0170 & 0.0056 $\circ$ & 0.0011 & 0.7075 $\blacktriangle$ & 0.0070 & 0.2237 $\circ$ & 0.0418 \\
    \cmidrule(lr){2-10}
      & MERLION (\textbf{ours}) & \textbf{0.5356} & 0.0179 & 0.0607 & 0.0504 & \textbf{0.7229} & 0.0808 & \textbf{0.1008} & 0.0043 \\
    \midrule
    \multirow{5}{*}{Moderate}
      & MOEA/D   & 0.3046 $\blacktriangle$ & 0.0172 & 0.0012 $\blacktriangle$ & 0.0000 & - & - & 0.3648 $\blacktriangledown$ & 0.0209 \\
      & MOPSO\textsuperscript{*} & 0.0592 $\blacktriangle$ & 0.0067 & 0.0000 $\blacktriangle$ & 0.0000 & - & - & 0.7033 $\blacktriangledown$ & 0.0172 \\
      & NSGA-II   & 0.2115 $\blacktriangle$ & 0.0374 & 0.0000 $\blacktriangle$ & 0.0000 & - & - & 0.6500 $\blacktriangledown$ & 0.0372\\
      & MORL/D   & 0.6849 $\circ$ & 0.0193 & 0.0082 $\circ$ & 0.0047 & - & - & 0.8527 $\blacktriangledown$ & 0.0157 \\
      & Meta-MORL & 0.6737 $\circ$ & 0.0190 & 0.0015 $\blacktriangle$ & 0.0006 & \textbf{0.8308} $\blacktriangledown$ & 0.0073 & 0.3092 $\circ$ & 0.0378 \\
    \cmidrule(lr){2-10}
      & MERLION (\textbf{ours}) & \textbf{0.7402} & 0.0353 & 0.0178 & 0.0075 & 0.7778 & 0.0309 & \textbf{0.0779} & 0.0115 \\
    \midrule
    \multirow{5}{*}{Complex}
      & MOEA/D   & 0.0516 $\blacktriangle$ & 0.0202 & 0.0025 $\circ$ & 0.0004 & - & - & 0.7243 $\blacktriangledown$ & 0.0250 \\
      & MOPSO\textsuperscript{*} & 0.0000 $\blacktriangle$ & 0.0000 & 0.0000 $\blacktriangle$ & 0.0000 & - & - & 0.7991 $\blacktriangledown$ & 0.0192 \\
      & NSGA-II   & 0.0384 $\blacktriangle$ & 0.0096 & 0.0000 $\blacktriangle$ & 0.0000 & - & - & 0.4623 $\circ$ & 0.0147\\
      & MORL/D   & \textbf{0.3619} $\circ$ & 0.0274 & 0.0322 $\circ$ & 0.0121 & - & - & 0.5879 $\blacktriangledown$ & 0.0614 \\
      & Meta-MORL & 0.2658 $\circ$ & 0.0099 & 0.0022 $\circ$ & 0.0006 & \textbf{0.6127} $\blacktriangledown$ & 0.0030 & 0.2894 $\circ$ & 0.0541\\
    \cmidrule(lr){2-10}
      & MERLION (\textbf{ours}) & 0.3513 & 0.0334 & 0.0100 & 0.0065 & 0.6091 & 0.0552 & \textbf{0.1779} & 0.0061\\
    \bottomrule
    \end{tabular}
\footnotesize{\textit{Note:} $\uparrow$ and $\downarrow$ indicate metrics to be maximised or minimised, respectively. For sparsity ($\approx$), balanced values are preferred; too high implies poor coverage, while too low indicates excessive concentration of solutions. $\blacktriangle$, $\blacktriangledown$, or $\circ$ signifies that ours is significantly higher, significantly lower ($p < 0.05$), or not significantly different from the appointed value, respectively.\\
\textsuperscript{*}Constraint violations occur in all MOPSO runs, but only the complex case results in zero feasible solutions.
}
\end{table*}

\begin{figure*}[!ht]
\centering
\subfloat[\footnotesize Simple SC]{\includegraphics[width=0.33\linewidth]{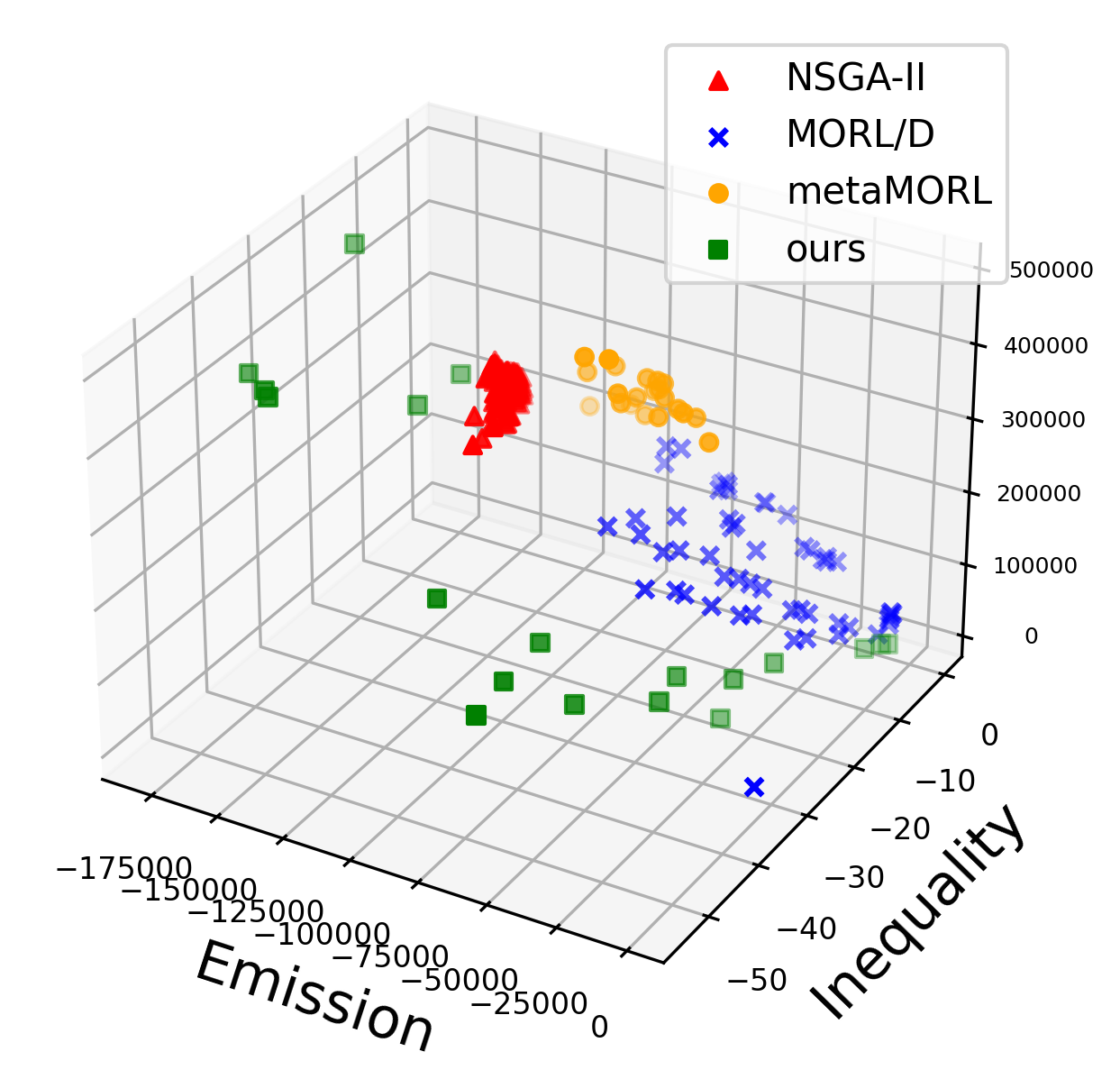}}\hfil
\subfloat[\footnotesize Moderate SC]{\includegraphics[width=0.33\linewidth]{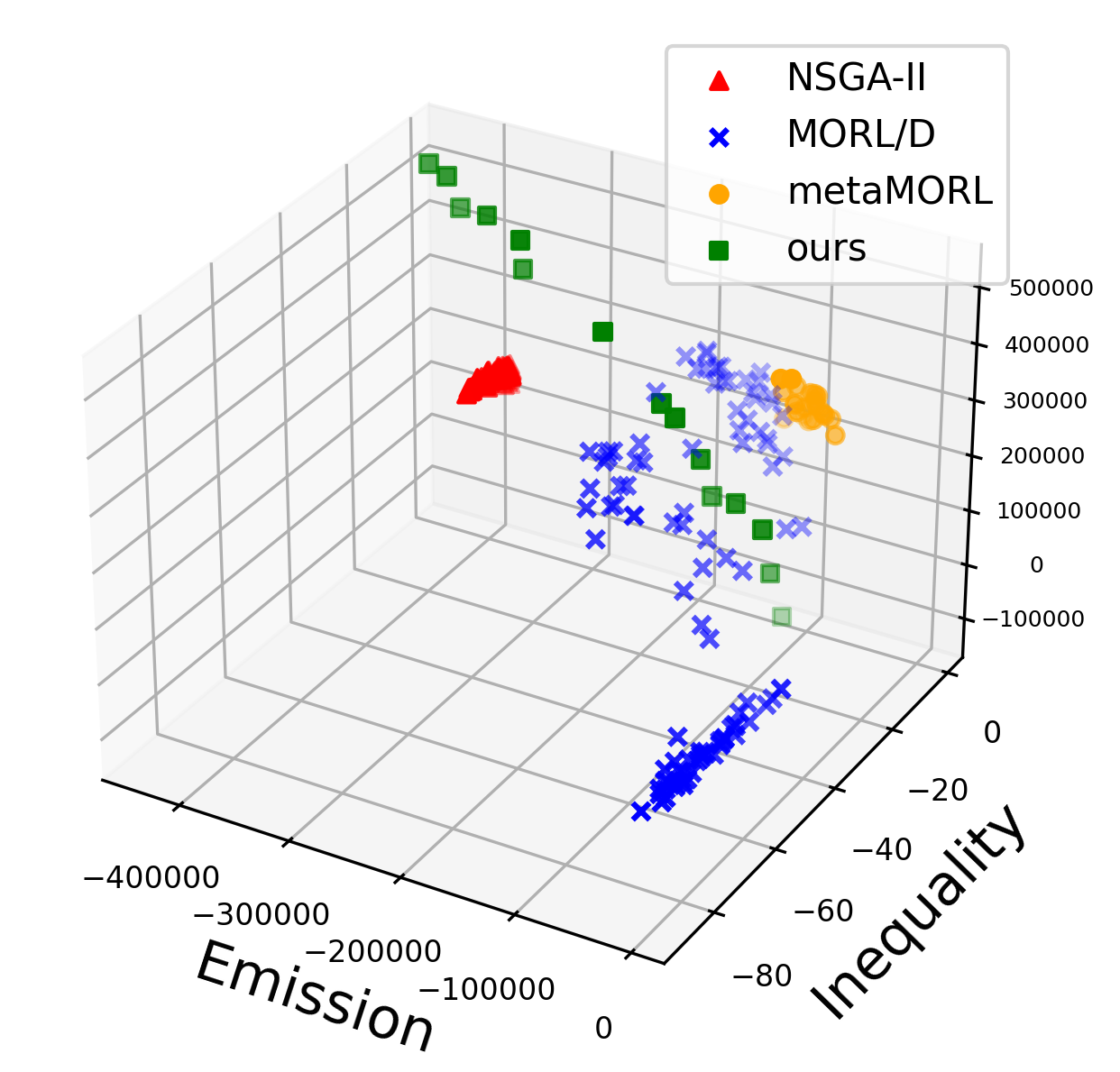}}\hfil
\subfloat[\footnotesize Complex SC]{\includegraphics[width=0.33\linewidth]{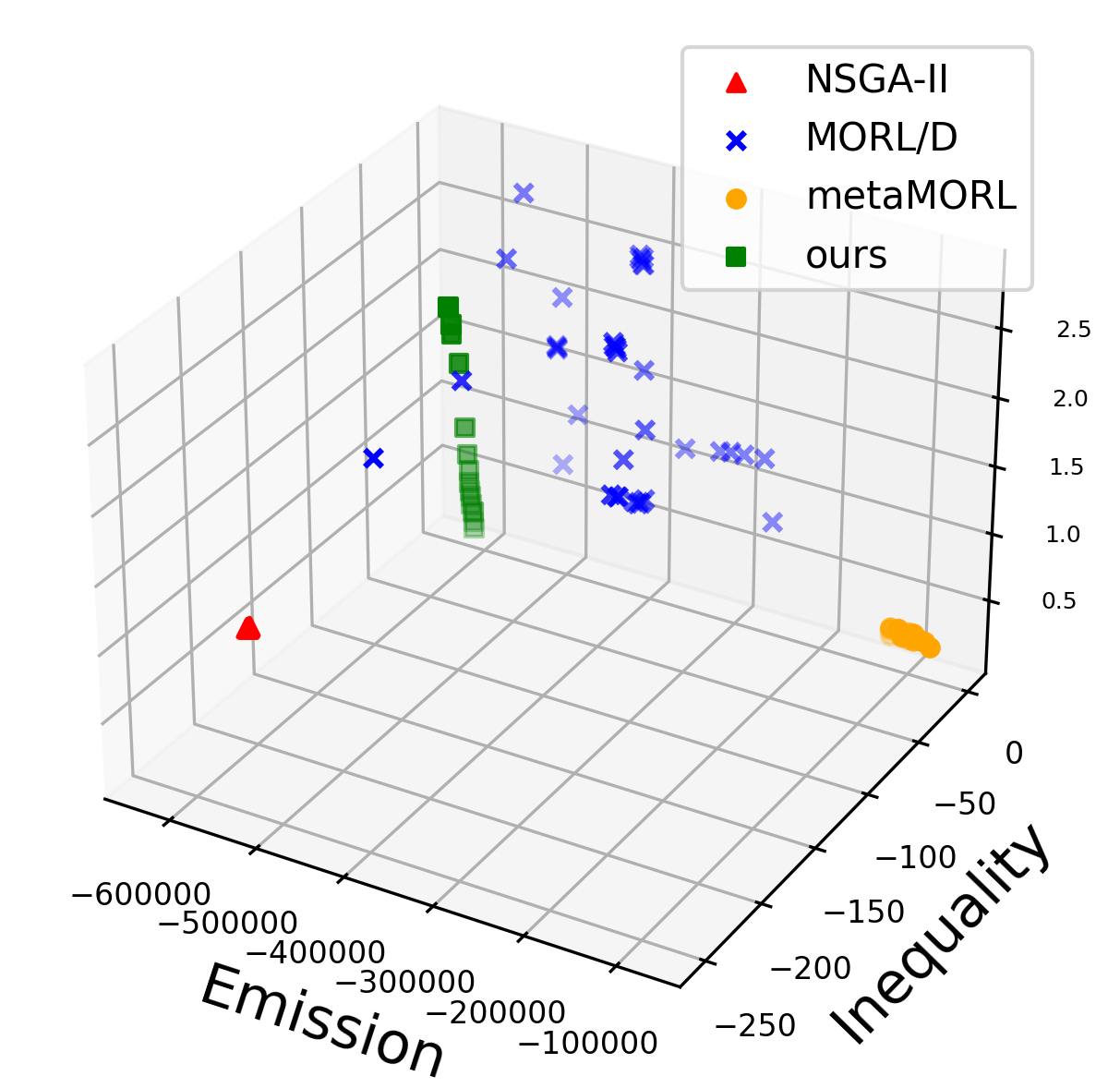}}\\
\caption{PF plots for the SC problems. Overall, the solution set generated by our proposed algorithm exhibits the highest diversity in simple and moderate SC instances,while comparable with MORL/D in complex SC.}
\label{fig:pf}
\end{figure*}



\subsubsection{Overall performance}
Table~\ref{tab:overall_performance} summarises the overall performance across the three SC complexities. MERLION achieves the best hypervolume in the simple and moderate SC cases, while being competitive against MORL/D in complex SC, and the best AHD in all cases. This indicates that MERLION generally provides strong PF approximation quality, balancing dominated objective-space coverage with closeness to the estimated reference PF. The difference between hypervolume and AHD also explains why some methods obtain relatively high hypervolume but poor AHD such as shown in MORL/D. Hypervolume rewards the volume dominated by the obtained solutions with respect to a reference point, so a method can score highly if it finds a few extreme or widely spread solutions that dominate a large region. In contrast, AHD measures the distance between the obtained approximation set and the estimated reference PF, so it penalises solutions that are far from the true trade-off surface even when their dominated volume is large.

Sparsity further characterises the distribution of solutions. MERLION produces relatively less concentrated fronts in the simple and moderate complex cases, while in the complex case, its sparsity becomes comparable to that of other methods, reflecting the increased difficulty of maintaining a broad spread under tighter operational constraints. EUM is reported only for preference-conditioned meta-learning methods. MERLION achieves the best EUM in simple SC, but Meta-MORL is higher in moderate and complex SC, suggesting that MERLION prioritises PF approximation quality and diversity rather than explicitly optimising uniform preference coverage.

For MOPSO, constraint violations occur across all SC settings, but their effect differs by complexity. In the simple and moderate cases, MOPSO can still identify feasible candidates, whereas in the complex case, no feasible solutions are found, leading to zero hypervolume and indicating that the swarm search fails to locate operationally valid solutions under tightly coupled SC constraints.

\subsubsection{PF approximation}
We compare the solution sets of the PF approximations produced by representative algorithms using the run that achieves the highest hypervolume. Figure~\ref{fig:pf} illustrates the solution spread obtained by each algorithm, consistent with Table~\ref{tab:overall_performance}, where our proposed algorithm shows the highest hypervolume in simple and moderate complex SC settings, indicating more diverse and closer-to-optimal solutions. Moreover, the AHD values of our proposed algorithm consistently show the least, indicating that the resulting PF approximation sets are the closest to the true PF values.

We highlight that the PF approximation set produced by MERLION exhibits diversity comparable to that of MORL/D, a traditional MORL method, while requiring substantially fewer time steps (5,000–25,000 for MERLION versus 500,000 for MORL/D). Furthermore, the integration of evolutionary mechanisms is proven to be effective in significantly spreading out solution sets. This is signified when comparing the sets with Meta-MORL, which uses similar fine-tuning time steps. Meanwhile, the traditional evolutionary method alone (i.e., represented by NSGA-II) is rather ineffective in exploring the objective space, especially when the solution dimension grows significantly, as in complex SC.

Our experiment signifies that the evolutionary mechanism applied in the population of meta-policies can address the typical problem arising from few-shot and limited fine-tuning time steps when applying meta-learning-based methods, while still benefiting from amortised meta-training cost since we do not need to repeat meta-training for new, unseen tasks. Instead, we only require fine-tuning for a substantially smaller number of time steps than traditional methods.

Meta-MORL and metaheuristic methods generally yield more clustered solutions in higher-dimensional search spaces, largely due to insufficient learning when trading off exploration and exploitation. In MERLION, this issue can be alleviated by increasing the population size of meta-policies and/or extending the fine-tuning horizon, at the expense of additional computational resources.

\begin{table*}[!ht]
\centering
\caption{Hypervolume (Hv), sparsity (Spar), Expected Utility Metric (EUM) across parameters. Defaut setting follows simple SC configuration, i.e., $P$=10, $\eta_{\text{cross}}=0.6$, $M$=4, except for time steps.}
\label{tab:ablation}
    \begin{tabular}{llllllllll}
    \toprule
    \textbf{Parameters} & \textbf{Settings} & \textbf{Hv Mean $\uparrow$} & \textbf{Hv Std} & \textbf{Spar Mean $\approx$} & \textbf{Spar Std} & \textbf{EUM Mean $\uparrow$} & \textbf{EUM Std} & \textbf{AHD mean $\downarrow$} & \textbf{AHD Std} \\
    \midrule
    \textbf{All} & \textbf{default} & 0.5356 & 0.0179 & 0.0607 & 0.0504 & 0.7229 & 0.0808 & 0.1008 & 0.0043 \\
    \midrule
    \multirow{2}{*}{$P$}
      & 5 & 0.5166 $\circ$ & 0.0225 & 0.0810 $\circ$ & 0.0880 & 0.6528 $\circ$ & 0.0460 & 0.0972 $\circ$ & 0.0284 \\
      & 15 & 0.5446 $\circ$ & 0.0155 & 0.0484 $\circ$ & 0.0454 & 0.6470 $\circ$ & 0.2518 & 0.0789 $\blacktriangle$ & 0.0082 \\
    \midrule
    \multirow{2}{*}{$\eta_{\text{cross}}$}
      & 0.3 & 0.5290 $\circ$ & 0.0156 & 0.0459 $\circ$ & 0.0648 & 0.7445 $\circ$ & 0.0982 & 0.0937 $\circ$ & 0.0284 \\
      & 0.9 & 0.5426 $\circ$ & 0.0146 & 0.0256 $\circ$ & 0.0272 & 0.8348 $\circ$ & 0.0827 & 0.0911 $\circ$ & 0.0119 \\
    \midrule
    \multirow{2}{*}{$M$}
      & 0 & 0.4625 $\circ$ & 0.0122 & 0.0543 $\circ$ & 0.0547 & 0.7164 $\circ$ & 0.0801 & 0.1238 $\circ$ & 0.0189\\
      & 2 & 0.5221 $\circ$ & 0.0211 & 0.0756 $\circ$ & 0.0635 & 0.7172 $\circ$ & 0.0852 & 0.0981 $\circ$ & 0.0104 \\
    \midrule
    \multirow{2}{*}{Time steps\textsuperscript{*}}
      & 20,000 & 0.2907 $\blacktriangle$ & 0.0334 & 0.0112 $\circ$ & 0.0102 & 0.6096 $\circ$ & 0.1722 & 0.2144 $\circ$ & 0.0109\\
      & \textbf{25,000} & 0.3513 & 0.0101 & 0.0101 & 0.0066 & 0.6091 & 0.0552 & 0.1779 & 0.0061\\
    \bottomrule
    \end{tabular}

\vspace{2mm}
\footnotesize{\textit{Note:} Symbols follow Table~\ref{tab:overall_performance}.\\
\textsuperscript{*} Time-step ablation is evaluated using the complex SC setup with 25,000 being the default time steps.
}
\end{table*}

\subsubsection{Operational analysis}
We examine representative solutions with the most balanced final weights across multiple runs for each SC setting. As problem complexity increases, the learned policies shift from more reactive production and inventory dynamics toward steadier production and more concentrated stockholding at selected facilities. Demand loss remains broadly controlled, but becomes less uniformly distributed in the more complex settings. More details and operational plots are provided in the Appendix~\ref{app:operational_performance}.

\begin{table*}[!ht]
\centering
\caption{Normalised Hypervolume (HV) and Sparsity (Spar) of Merlion vs Traditional and Meta RL}
\begin{tabular}{p{1.1 cm}llllllllllp{0.3 cm}p{0.3 cm}}
\toprule
\multirow{2}{*}{Algorithm} & \multicolumn{2}{c}{mo-halfcheetah-v4} & \multicolumn{2}{c}{deep-sea-treasure-v0} & \multicolumn{2}{c}{resource-gathering-v0} & \multicolumn{2}{c}{mo-hopper-v4} & \multicolumn{2}{c}{mo-reacher-v4} & \multicolumn{1}{c}{Average} & \multirow{2}{*}{Rank} \\  
\cmidrule(lr){2-11}
 & \multicolumn{1}{c}{HV} & \multicolumn{1}{c}{Spar} & \multicolumn{1}{c}{HV} & \multicolumn{1}{c}{Spar} & \multicolumn{1}{c}{HV} & \multicolumn{1}{c}{Spar} & \multicolumn{1}{c}{HV} & \multicolumn{1}{c}{Spar} & \multicolumn{1}{c}{HV} & \multicolumn{1}{c}{Spar} & \multicolumn{1}{c}{HV} &  \\
\midrule
PCN & 0.0185 $\blacktriangle$ & 0.0000 $\blacktriangle$ & - & - & - & - & 0.0024 $\blacktriangle$ & 0.0020 $\blacktriangle$ & 0.0244 $\blacktriangle$ & 0.0168 $\blacktriangle$ & 0.0151 & 7\\
CAPQL & 0.1981 $\circ$ & 0.0001 $\blacktriangle$ & - & - & - & - & 0.0069 $\blacktriangle$ & 0.0003 $\blacktriangle$ & 0.0004 & - & 0.1025 & 5 \\
PGMORL & 0.0375 $\blacktriangle$ & 0.0002 $\blacktriangle$ & - & - & - & - & 0.0058 $\blacktriangle$ & 0.0037 $\blacktriangle$ & - & - & 0.0217 & 6\\
MPMOQL & - & - & 0.4507 $\circ$ & 0.0702 $\circ$ & 0.1421 $\blacktriangle$ & 0.3272 $\circ$ & - & - & - & - & 0.2964 & 3 \\
MORL/D & 0.2240 $\circ$ & 0.0004 $\circ$ & 0.3033 $\blacktriangle$ & 0.0086 $\circ$ & 0.0984 $\blacktriangle$ & 0.0233 $\blacktriangle$ & 0.0138 $\circ$ & 0.0010 $\blacktriangle$ & 0.0685 $\circ$ & 0.0391 $\blacktriangle$ & 0.1416 & 4\\
Meta-MORL & \textbf{0.7432} $\circ$ & 0.0201 $\circ$ & 0.2900 $\blacktriangle$ & 0.0000 $\blacktriangle$ & 0.4000 $\blacktriangle$ &  0.3275 $\blacktriangle$ & 0.0520 $\circ$ & 0.0073 $\circ$ & \textbf{0.1513} $\circ$ & 0.1883 $\circ$ & 0.3273 & 2\\
\midrule
\textbf{MERLION (ours)} & 0.4181 & 0.0405 & \textbf{0.5416} & 0.0185 & \textbf{1.0000} & 0.0000 & \textbf{0.5888} & 0.1158 & 0.1295 & 0.1713 & \textbf{0.5356} & \textbf{1}\\
\bottomrule
\end{tabular}

\label{tab:res-overall}
\vspace{2mm}
\footnotesize{\textit{Note:} Symbols follow Table~\ref{tab:overall_performance}.}
\end{table*}

\subsection{Ablation Study} \label{sec:ablation_study}
We run an ablation study on the \textit{simple SC} benchmark to assess MERLION's sensitivity to three hyperparameters: population size $P$, crossover probability $\eta_{\text{cross}}$, and local perturbation size $M$ (defaults: $P{=}10$, $\eta_{\text{cross}}{=}0.6$, $M{=}4$). Table~\ref{tab:ablation} summarises the resulting changes in optimality and diversity.

Varying $P$ has the clearest effect. A smaller population ($P{=}5$) lowers hypervolume and EUM, and yields a sparser approximation set, indicating weaker coverage. Increasing to $P{=}15$ gives the strongest PF approximation, with the highest hypervolume, much lower AHD (closer to the true PF), and denser solutions (lower sparsity). However, EUM drops for both $P{=}5$ and $P{=}15$ relative to the default, implying that larger $P$ favours broader PF coverage over maximising expected utility under the sampled preferences. The only clearly significant change across settings is the AHD reduction at $P{=}15$.

Varying $\eta_{\text{cross}}$ shows consistent but modest effects. Higher crossover probability generally increases EUM, slightly improves hypervolume and AHD, and yields denser solution sets, reflecting more effective recombination of behavioural traits across meta-policies. Lower $\eta_{\text{cross}}$ produces weaker gains, with slightly reduced hypervolume and higher AHD.

Local perturbations are crucial for exploration. Disabling them ($M{=}0$) yields the lowest hypervolume and worst AHD, indicating premature convergence and poor Pareto coverage. A moderate rate ($M{=}2$) partly restores performance and produces a sparser set than $M{=}0$, but still underperforms the default, suggesting stronger perturbations better preserve diversity. Except for the significant AHD gain at $P{=}15$, most configuration differences are small (often within one standard deviation and not significant), showing MERLION is fairly robust to these hyperparameters on the simple SC benchmark.

Moreover, we investigate how adding more time steps during fine-tuning affects performance when using complex SC. The results show that including an additional 5,000 time steps can increase the hypervolume by up to approximately 20\%, while having little impact on the other metrics.

\subsection{Cross-Domain Experiments} \label{sec:domain_agnosticism}

Table~\ref{tab:res-overall} summarises MERLION's performance beyond the supply-chain domain. Across the toy MORL benchmarks, MERLION achieves the highest average normalised hypervolume and the best overall rank, with statistically significant gains on deep-sea-treasure-v0, resource-gathering-v0, and mo-hopper-v4. Although Meta-MORL attains the highest mean hypervolume on mo-halfcheetah-v4, the difference is not significant due to its higher variance, and the two meta-learning methods are also statistically comparable on mo-reacher-v4. In terms of sparsity, MERLION generally maintains a moderate and well-dispersed approximation set, suggesting that its strong hypervolume does not arise from collapsing solutions into a narrow region of the PF. Overall, these results suggest that the benefits of MERLION's are not limited to SC and can extend to a wider range of MORL domains.

\section{Conclusions and Future Works} \label{sec:conclusions}

MERLION is a population-based Meta-MORL framework that combines gradient-based meta-learning with evolutionary search in the scalarisation weight space. Across SC problems of varying complexity, it consistently achieves the lowest AHD and the highest hypervolume in Simple and Moderate SC. In Complex SC, MERLION remains competitive with MORL/D under a 25$\times$ smaller fine-tuning budget, while still outperforming Meta-MORL under a comparable meta-learning adaptation budget. MERLION improves hypervolume by up to 32\% over Meta-MORL, indicating that maintaining multiple meta-policies alleviates the limited exploration of few-shot adaptation from a single shared meta-policy.

The PF approximations further show well-distributed solution sets, particularly in Simple and Moderate SC, in line with the hypervolume and sparsity results. Overall, MERLION is comparable to MORL/D and outperforms Meta-MORL and metaheuristic baselines. Operational analysis showed more stable manufacturing and inventory trajectories as SC complexity increased, although stockpiling became more concentrated at several facilities. Nevertheless, demand loss remains low across markets.

MERLION also performed well on popular RL problems, attaining the highest average normalised hypervolume and the best overall ranking against Meta-MORL and several conventional MORL baselines, despite using substantially fewer adaptation timesteps. This highlights its ability to convert offline meta-training into rapid and sample-efficient adaptation.

Future work includes improving coverage in more complex decision spaces, incorporating preference-aware evolutionary objectives to enhance EUM, and extending MERLION to larger and more realistic SC settings with richer operational constraints.


 
\bibliography{references_nourl}
\bibliographystyle{IEEEtran}


 




\clearpage
\appendices
\input{_appendix}

\end{document}

%% file: _appendix.tex
\section{Problem Definition: MOMDP-Based Formulation} \label{sec:sc_problem}
Table~\ref{tab:parameter_definition} introduces the specific notation employed in the SC modelling, whereas Table~\ref{tab:sets} lists the nodes contained in each set as well as the sets that belong to each set family.
\begin{table}[ht!]
    \centering
    \caption{\footnotesize Sets, parameters, and variables in the supply chain model. Nodes $u$, $v$, and $w$ ($u<v<w$) denote facilities, while $t$, $\varsigma$, $\psi$, and $\tau$ denote the time period, inventory, production, and transportation, respectively.}
    \label{tab:parameter_definition}
    \begin{threeparttable}
    \begin{tabularx}{\linewidth}{p{1 cm}X}
        \toprule
        \textbf{Notations} & \textbf{Definition}\\
        \midrule
        \multicolumn{2}{@{}l}{\textbf{Sets}} \\
        $N^s$ & suppliers \\
        $N^m$ & manufacturers \\
        $N^{w\omega}$ & mid-level warehouses that deliver to lower-level warehouses\\
        $N^d$ & distribution centres that deliver to retailers\tnote{*} \\
        $N^r$ & retailers \\
        $N^z$ & markets \\
        \midrule
        \multicolumn{2}{@{}l}{\textbf{Family of sets}} \\
        $N^{From}$ & origin nodes ($|N^{From}| = n_1$) \\
        $N^{To}$ & destination nodes ($|N^{To}| = n_2$)\\
        $N^{Inv}$ & inventory-holding non-production nodes ($|N^{Inv}| = n_3$) \\
        $N^{Invf}$ & upstream inventory-shipping nodes ($|N^{Invf}| = n_4$) \\
        $N^{Invt}$ & downstream inventory-receiving nodes ($|N^{Invt}| = n_5$) \\
        \midrule
        \multicolumn{2}{@{}l}{\textbf{Parameters}} \\
        $T$ & Total period horizon \\
        $\text{\it Profit}$ & Total profit of all periods \\
        $E$ & Total emission of all periods \\
        $F$ & Total SL inequality measure of all periods \\ 
        $SL_y$ & SL at node $y$ of all periods \\
        $Rev_t$ & Revenue at period $t$ \\
        $PC_t$ & Production cost at period $t$ \\
        $TC_t$ & Transport cost at period $t$ \\
        $IC_t$ & Inventory cost at period $t$ \\
        $L$ & Transport lead time \\
        $I_{tv}$ & Inventory unit at period $t$, node $v$ \\
        $c_v^\varsigma$ & Inventory cost per unit at node $v$ \\
        $e^{\varsigma}_v$ & Emission per unit resulted from inventory at node $v$ \\
        $c_v^\psi$ & Manufacturing cost per unit at node $v$ \\
        $e^{\psi}_w$ & Emission per unit resulted from the manufacturing process at node $w$ \\
        $c_{uv}^\tau$ & Transport cost per unit per day from node $u$ to $v$ \\
        $e^{\tau}_{uv}$ & Emission per unit per day resulted from product transport from node $u$ to $v$ \\
        $\mathbf{Q_{tv}}$ & The outstanding order at node $v$ \\
        $CE_t$ & The accumulated emission at period $t$ \\
        $AF_t$ & The average service level inequality in period $t$ \\
        $d_{tv}$ & Demand at period $t$, retailer $v$ \\
        $Cap$ & Transport capacity \\
        $price_v$ & Product selling price at retailer $v$ \\
        $\mathcal{M}$ & Very large number (big-M)\\
        \midrule
        \textbf{Variables}\\
        $q^\tau_{tuv}$ & Transport quantity at period $t$, from node $u$ to $v$ \\
        $q^\psi_{tw}$ & Manufacturing quantity at period $t$, at manufacturer $w$ \\
        \bottomrule     
    \end{tabularx}

    \begin{tablenotes}
        \footnotesize
        \item[*] Optional nodes in the SC. Mid-level warehouses, which may span multiple echelons, exist only if distribution centres (i.e., the lowest-level warehouses) are present. Without warehouses, manufacturers act as distribution centres. 
    \end{tablenotes}
    \end{threeparttable}
\end{table}

\begin{table}[H]
\centering
\caption{ \footnotesize Sets and family of sets used in the formulation to distinguish between SC network problems. The following table lists the nodes covered in the sets and the sets covered in the family of sets in the three scenarios.}
\label{tab:sets}
    \begin{tabularx}{9cm}{llll}
        \toprule
        Notation & Simple SC & Moderate SC & Complex SC\\
        \midrule
        \textbf{Sets}\\
        $N^s$ & 1 & 1, 2&   1, 2, 3\\
        $N^m$ & 2, 3 & 3, 4, 5 & 4, 5, 6, 7, 8\\
        $N^{w1}$ & NA & 6, 7 & 9, 10, 11\\
        $N^d$ & NA & NA & 12, 13, 14\\
        $N^r$ & 4, 5 & 8, 9, 10 & 15, 16, 17, 18, 19\\
        $N^z$ & 6, 7 & 11, 12, 13 & 20, 21, 22, 23, 24\\
        \midrule
        \textbf{Family of sets}\\
        $N^{From}$ & $N^s, N^m$ & $N^s, N^m, N^w$ & $N^s, N^m, N^w, N^d$\\
        $N^{To}$ & $N^m, N^r$ & $N^m, N^w, N^r$ & $N^m, N^w, N^d N^r$\\
        $N^{Inv}$ & $N^r$ & $N^w, N^r$ & $N^w, N^d, N^r$\\
        $N^{Invf}$ & $N^m$ & $N^m, N^w$ & $N^m, N^w, N^d$\\
        $N^{Invt}$ & $N^r,N^z$ & $N^w,N^r,N^z$ & $N^w,N^d,N^r,N^z$\\
        \bottomrule
    \end{tabularx}
\end{table}

For the completeness of this study, we restate the SC formulations that we adopt. Multi-echelon SC problems that involve several facilities and markets, along with non-stationary demand, are modelled using the MOMDP framework.

\subsection{State Transition Functions}
These expressions describe the evolution of the system from its state in period $(t-1)$ to its subsequent state in period $t$. The corresponding state transition relations are specified in Equations~\eqref{eq:rl_mfg_qty} through~\eqref{eq:rl_avg_ineq}. In particular, Equation~\eqref{eq:rl_mfg_qty} imposes that the entire quantity of raw materials supplied is fully converted into finished products, as detailed below:

\begin{equation}
    \label{eq:rl_mfg_qty}
    q_{tv}^{\psi} = \sum_{u \in N^s} q_{(t-L)uv}^{\tau} \quad \forall t, \forall v \in N^m.
\end{equation}

\noindent Equation~\eqref{eq:rl_inv_all} calculates the inventory levels considering the remainder of previous periods, additional goods (i.e., inflow), and goods shipped (i.e., outflow):
\begin{equation}
\begin{split}
\label{eq:rl_inv_all}
    I_{tv} &= I_{(t-1)v} + \text{inflow}_{tv} - \text{outflow}_{tv}, \\
    &\quad \forall t, \forall v \in N^{Inv} \cup N^m\\
    \text{where} \quad
    \text{inflow}_{tv} &=
    \begin{cases}
        q_{tv}^{\psi} & \text{if } v \in N^m \\
        \sum\limits_{u \in N^{Invf}} q_{(t-L)uv}^{\tau} & \text{if } v \in N^{Inv}
    \end{cases} \\
    \text{and} \quad
    \text{outflow}_{tv} &=
    \begin{cases}
        \sum_{w \in N^{Inv}_1} q_{t}^vw{\tau} & \text{if } v \in N^m\\
        \sum_{w \in N^{Invt}_{\{2,\dots,n_5\}}}q_{tvw}^\tau
&\text{if } v \in N^{Inv}.
    \end{cases}
\end{split}
\end{equation}


\noindent Equation~\eqref{eq:rl_order_vector} defines, for each node, the overall ordering quantity, represented as the aggregate of all incoming deliveries:
\begin{equation}
    \label{eq:rl_order_vector}
    \mathbf{Q}_{tv} = \big\{ q^\tau_{(t-L+1)uv}, \ldots, q^\tau_{tuv}, \quad \forall u \in N^{From},\forall v \in N^{To} \big\}.
\end{equation}

\noindent Equation~\eqref{eq:rl_cum_emission} specifies the computation of the daily cumulative emissions, given by:
\begin{equation}
    \label{eq:rl_cum_emission}
    CE_t=CE_{(t-1)}+E_t.
\end{equation}

\noindent Equation~\eqref{eq:rl_avg_ineq} computes the daily average SL inequality:
\begin{equation}
    \label{eq:rl_avg_ineq}
    AF_t=\frac{\left(AF_{t-1} \cdot(t-1)+F_t\right)}{t}.
\end{equation}

\noindent In our SC problem, we assume that state transitions are deterministic, meaning that no intrinsic randomness is involved (for instance, from lead time variability or defective product disposal), which would otherwise introduce uncertainty into the state transition, particularly in computing inventory for the next period.

\subsection{State Space (S)}
The system state captures how agents perceive the learned environment. Our SC model’s state variables include inventory levels for raw materials and finished products, outstanding orders across all routes within a given lead time, cumulative emissions, and the average service level inequality, subject to $0 \leq I_{(t-1)j}$. The state in period $t$ is defined as follows:
\begin{equation}
    \label{eq:rl_state_vector}
    \mathbf{S_t}=\{I_{(t-1)v}, \mathbf{Q_{(t-1)v}}, CE_{t-1}, AF_{t-1},\quad \forall{v} \in N^{To}\}.
\end{equation}

\subsection{Action Space (A)}
In our SC setting, the action encompasses the production quantities of all manufacturers as well as the shipment quantities across all routes. The actions in period $t$ are defined as:
\begin{equation}
\begin{split}
    \label{eq:rl_action_vector}
    \mathbf{A_t}=&\{q_{tuv}^{\tau}, q_{tw}^{\psi}, \quad 0\leq q_{tuv}^{\tau} \leq Cap, \quad 0\leq q_{tw}^{\psi},\\ & \forall{u} \in N^{From},
    j \in N^{To}, k \in N^{m}\}.
\end{split}
\end{equation}

\subsection{Rewards (R)}
This problem incorporates three objectives: maximising profit, minimising GHG emissions, and minimising SL inequality, which together form a three-dimensional reward vector. The MORL agents are designed to maximise the expected cumulative rewards. Since the second and third objectives should be penalised, they are assigned negative reward values. Accordingly, the reward vector is defined as $\mathbf{R_t}=\{\mathit{Prof_t},-\mathit{E_t},-\mathit{F_t}\}\cdot\rho_t$, where $\rho$ denotes the penalty term that prevents the inventory level from dropping below zero, as specified in Equation~\eqref{eq:penalty}:
\begin{equation}
\label{eq:penalty}
    \rho_t = \sum_{v \in N^{To}} \min(I_{tv},0) \cdot \mathcal{M}.
\end{equation}
\noindent In RL, the reward values are derived from objective functions at each period as given in Equations~\eqref{eq:rl_profit}, ~\eqref{eq:rl_emission}, and ~\eqref{eq:rl_sl_ineq} below:

\begin{equation}
\label{eq:rl_profit}
\begin{split}
\text{\it Profit}_t =\;& 
\underbrace{\sum_{u \in N^d} \sum_{v \in N^r} q_{(t-L)uv}^\tau \cdot price_v}_{\text{Revenue from delivered products}} 
\; - \;
\underbrace{\sum_{w \in N^m} \left( \frac{q_{tw}^\psi \cdot c_w^\psi}{v_w^\psi} \right)}_{\text{Manufacturing cost}}\\
 &- \underbrace{\sum_{u \in N^{From}} \sum_{v \in N^{To}} q_{tuv}^\tau \cdot c_{uv}^\tau \cdot L}_{\text{Transport cost over lead time}} 
\; - \;
\underbrace{\sum_{v \in N^{To}} I_{tv} \cdot c_v^\varsigma}_{\text{Inventory holding cost}}
\end{split}
\end{equation}

\begin{equation}
\label{eq:rl_emission}
\begin{split}
E_t =\;& 
\underbrace{\sum_{v \in N^{To}} I_{tv} \cdot e^\varsigma_v}_{\text{Inventory-related emissions}} 
\; + \;
\underbrace{\sum_{w \in N^m} q_{tw}^\psi \cdot e_w^\psi}_{\text{Manufacturing emissions}} \\
&\; + \;
\underbrace{\sum_{u \in N^{From}} \sum_{v \in N^{To}} q_{uv}^\tau \cdot e_{uv}^\tau \cdot L}_{\text{Transport emissions over lead time}}
\end{split}
\end{equation}

\begin{equation}
\begin{split}
\label{eq:rl_sl_ineq}
F_t =&\;
\frac{1}{2} \sum_{v \in N^r} \sum_{\substack{v' \in N^r \\ v' \neq v}} 
\left| SL_{tv} - SL_{tv'} \right|,\\
\text{where: }\\
SL_{tj} =& \min \left(\frac{\sum_{i \in N^d} q^\tau_{(t-L)ij}}{d_{tj}}, 1 \right),\quad \forall j \in N^r,\\
\end{split}
\end{equation}

\noindent where $d_{tv}$ denotes the corresponding demand for retailer $v$. Formulating a model within the MDP framework can be more involved than constructing an optimisation model, because each paradigm introduces its own specific components and notation. While most optimisation-based and MDP-based formulations are theoretically equivalent, they do not necessarily coincide in their mathematical expression. In particular, MDP formulations frequently express the governing relations in a sequential manner.

\clearpage
\section{Proofs}
\label{app:proof}

\subsection{Proof of Lemma 1}
\label{app:proof-lemma1}

\begin{proof}[Proof of Lemma 1: hypervolume monotonicity]
By definition, $HV(\mathcal{S};\mathbf{x^{\text{ref}}})=\lambda(\mathcal{D}(\mathcal{S}))$, where $\lambda(\cdot)$ denotes the Lebesgue measure and $\mathcal{D}(\mathcal{S})$ is the region dominated by $\mathcal{S}$ with respect to $\mathbf{x^{\text{ref}}}$. Since $\mathcal{K}\subseteq\mathcal{H}$, every point dominated by $\mathcal{K}$ is also dominated by $\mathcal{H}$. Hence,
\begin{equation}
\mathcal{D}(\mathcal{K}) \subseteq \mathcal{D}(\mathcal{H}).
\end{equation}
By monotonicity of Lebesgue measure, this implies
\begin{equation}
\lambda(\mathcal{D}(\mathcal{K})) \le \lambda(\mathcal{D}(\mathcal{H})).
\end{equation}
Therefore,
\begin{equation}
HV(\mathcal{K};\mathbf{x^{\text{ref}}}) \le HV(\mathcal{H};\mathbf{x^{\text{ref}}}).
\end{equation}
\end{proof}

\subsection{Proof of Lemma 2}
\label{app:proof-lemma2}
\begin{proof}[Proof of Lemma 2: Strict hypervolume gain]
The equality
\begin{equation}
HV(\mathcal{H}\cup\{\mathbf{x}\};\mathbf{x^{\text{ref}}}) = HV(\mathcal{H};\mathbf{x^{\text{ref}}}) + \Delta HV(\mathbf{x}\mid \mathcal{H})
\end{equation}
follows directly from the definition of hypervolume contribution. It remains to show that $\Delta HV(\mathbf{x}\mid \mathcal{H})>0$.

Since $\mathbf{x}\notin \mathrm{Dom}(\mathcal{H})$, no point in $\mathcal{H}$ dominates $\mathbf{x}$. Since $\mathbf{x}$ is also strictly better than the reference point in every objective, the box bounded by $\mathbf{x^{\text{ref}}}$ and $\mathbf{x}$ has strictly positive volume. Moreover, because $\mathbf{x}$ is not dominated by any point in $\mathcal{H}$, there exists a nonzero subregion of this box that is dominated by $\mathbf{x}$ but not by any point in $\mathcal{H}$. Hence, the exclusive hypervolume contribution of $\mathbf{x}$ is strictly positive, i.e.,
\begin{equation}
\Delta HV(\mathbf{x}\mid \mathcal{H}) > 0.
\end{equation}
Therefore, adding $\mathbf{x}$ to $\mathcal{H}$ strictly increases the hypervolume.
\end{proof}

\subsection{Proof of Proposition 1}
\label{app:proof-prop1}
\begin{proof}[Proof of Proposition 1: Population improvement probability]
    By the complement rule,
    \begin{equation}
    \Pr\!\left(\bigcup_{i=1}^{P} X_i\right) = 1-\Pr\!\left(\bigcap_{i=1}^{P} X_i^c\right).
    \end{equation}
    Under independence,
    
    \begin{equation}
    \Pr\!\left(\bigcap_{i=1}^{P} X_i^c\right) = \prod_{i=1}^{P}\Pr(X_i^c) = \prod_{i=1}^{P}(1-\Pr(X_i)).
    \end{equation}
    Since $\Pr(X_i)\ge \rho$ for all $i$, we have $1-\Pr(X_i)\le 1-\rho$, hence
    
    \begin{equation}
    \Pr\!\left(\bigcap_{i=1}^{P} X_i^c\right)\le (1-\rho)^P.
    \end{equation}
    Therefore,
    \begin{equation}
    \Pr\!\left(\bigcup_{i=1}^{P} X_i\right)\ge 1-(1-\rho)^P.
    \end{equation}
    For $P=1$, this reduces to $\Pr(X_1)\ge \rho$. For $P>1$ and $\rho\in(0,1)$, we have $(1-\rho)^P<1-\rho$, so
    \begin{equation}
    1-(1-\rho)^P>\rho.
    \end{equation}
    This completes the proof.
\end{proof}

\subsection{Proof of Proposition 2}
\label{app:proof-prop2}
\begin{proof}[Proof of Proposition 2: Gradient inference]
    The result follows directly from the identity
    \begin{equation}
    \|g_j + g_k\|^2 = \|g_j\|^2 + \|g_k\|^2 + 2 g_j^\top g_k,
    \end{equation}
    together with the assumption $g_j^\top g_k < 0$.
\end{proof}

\clearpage
\section{Simulated Environment} \label{app:sc_setting}
Our experiment employs three SC environments: simple, moderate, and complex networks. SC networks are node collections linked by edges, with parameters set at nodes and edges. Each network is 'fully connected', linking all nodes across consecutive layers. Material supply from suppliers is treated as unlimited. Market demand is unstable: markets one and two fluctuate as normal distributions ($\mu=150, \sigma=60$ and $\mu=100, \sigma=40$) and markets three, four, and five as Poisson distributions ($\lambda=200, 100, 150$). Seasonal demand shifts through sinusoidal-modulated distributions. Simple SC serves markets 1 and 2, moderate SC covers markets 1-3, and complex SC involves all markets.

For simple SC markets, a fixed mean price of $20$ is applied, while moderate and complex SC markets encounter variable mean prices: $[20, 21, 20.5]$ and $[100, 101, 105, 103, 104]$. Unfulfilled demand is regarded as demand loss. This problem is structured within the MOMDP framework with a finite time horizon $T=100$ and a lead time range of $L=\{1,3\}$ days. The transportation capacity ($Cap$) is set at 200, while the manufacturing capacity is not restricted.

To simulate uncertainties originating mainly from demand, cost, price structure, and lead time, we randomise the parameters during environment generations. For example, in addition to the fluctuation of demand along the period horizon, we also introduce randomised mean and standard deviation values within a range of $\{90\%, 110\%\}$ of set values. A similar range is also applied to the costs and prices, subject to the task arrangement scenarios applied. During fine-tuning, new unseen tasks are given in the Table~\ref{tab:para_value_simple} to~\ref{tab:trans_value_complex}.

\begin{table}[H]
    \centering
    \caption{\textbf{Simple SC.} Node parameter values that correspond to the facilities in Simple SC environment.}
    \label{tab:para_value_simple}
    \begin{tabularx}{\linewidth}{ccccccc}
        \toprule
        Node & $I_{0j}$ & $c_j^\varsigma$ & $e^\varsigma_j$ & $c_k^\psi$ & $v_k^\psi$ & $e^\psi_k$ \\
        \midrule
        3 & 380 & 0.1100 & 0.0002 & 2.0000 & 1.0000 & 5.0126\\
        4 & 350 & 0.1300 & 0.0002 & 2.2000 & 1.0000 & 4.5754\\
        5 & 400 & 0.1200 & 0.0002 & NA & NA & NA\\
        6 & 80  & 0.1500 & 0.0002 & NA & NA & NA\\
        \bottomrule      
    \end{tabularx}
\end{table}

\begin{table}[H]
    \centering
    \caption{\textbf{Simple SC.} Transport parameter values that correspond to product delivery.}
    \label{tab:trans_value_simple}
    \begin{tabularx}{6 cm}{cccc}
        \toprule
        From Node & To Node & $c^\tau_{ij}$ & $e^\tau_{ij}$ \\
        \midrule
        \multirow{2}{*}{1} & 2 & 0.2200 & 0.1258\\
        & 3 & 0.6900 & 0.3947\\
        \hline
        \multirow{2}{*}{2} & 4 & 1.0550 & 0.6035\\
        & 5 & 0.4300 & 0.2460\\
        \hline
        \multirow{2}{*}{3} & 4 & 0.4850 & 0.2774\\
        & 5 & 0.7500 & 0.4290\\
        \bottomrule
    \end{tabularx}
\end{table}

\begin{table}[H]
    \centering
    \caption{\textbf{Moderate SC.} Node parameter values that correspond to the facilities in Moderate SC environment.}
    \label{tab:para_value_moderate}
    \begin{tabularx}{8.5cm}{ccccccc}
        \toprule
        Node & $I_{0j}$ & $c_j^\varsigma$ & $e^\varsigma_j$ & $c_k^\psi$ & $v_k^\psi$ & $e^\psi_k$ \\
        \midrule
        3 & 380 & 0.1100 & 0.0002 & 2.0000 & 1.0000 & 5.0126\\
        4 & 350 & 0.1300 & 0.0002 & 2.2000 & 1.0000 & 4.5754\\
        5 & 400 & 0.1200 & 0.0002 & 2.3000 & 1.0000 & 5.4491\\
        6 & 80 & 0.1500 & 0.0002 & NA & NA & NA\\
        7 & 110 & 0.2000 & 0.0002 & NA & NA & NA\\
        8 & 100 & 0.2500 & 0.0002 & NA & NA & NA\\
        9 & 80 & 0.3000 & 0.0002 & NA & NA & NA\\
        10 & 120 & 0.2000 & 0.0002 & NA & NA & NA\\
        \bottomrule      
    \end{tabularx}
\end{table}

\begin{table}[H]
    \centering
    \caption{\textbf{Moderate SC.} Transport parameter values that correspond to product delivery.}
    \label{tab:trans_value_moderate}
    \begin{tabularx}{6cm}{cccc}
        \toprule
        From Node & To Node & $c^\tau_{ij}$ & $e^\tau_{ij}$ \\
        \midrule
        \multirow{3}{*}{1} & 3 & 0.2200 & 0.1258\\
        & 4 & 0.6900 & 0.3947\\
        & 5 & 0.5650 & 0.3232\\
        \hline
        \multirow{3}{*}{2} & 3 & 1.0550 & 0.6035\\
        & 4 & 0.6500 & 0.3718\\
        & 5 & 0.6300 & 0.3604\\
        \hline
        \multirow{2}{*}{3} & 6 & 0.0750 & 0.0429\\
        & 7 & 0.4300 & 0.2460\\
        \hline
        \multirow{2}{*}{4} & 6 & 0.6300 & 0.3604\\
        & 7 & 0.2300 & 0.1316\\
        \hline
        \multirow{2}{*}{5} & 6 & 0.4950 & 0.2831\\
        & 7 & 0.0750 & 0.0429\\
        \hline
        \multirow{3}{*}{6} & 8 & 1.0950 & 0.6263 \\
        & 9 & 0.6250 & 0.3575\\
        & 10 & 0.9500 & 0.5434\\
        \hline
        \multirow{3}{*}{7} & 8 & 1.6400 & 0.9381\\
        & 9 & 1.1600 & 0.6635\\
        & 10 & 0.5800 & 0.3318\\
        \bottomrule
    \end{tabularx}
\end{table}

\begin{table}[H]
    \centering  
    \caption{\textbf{Complex SC.} Node parameter values that correspond to the facilities in Complex SC environment.}
    \label{tab:para_value_complex}
    \begin{tabularx}{8.5cm}{ccccccc}
        \toprule
        Node & $I_{0j}$ & $c_j^\varsigma$ & $e^\varsigma_j$ & $c_k^\psi$ & $v_k^\psi$ & $e^\psi_k$ \\
        \midrule
        4 & 155 & 0.2300 & 0.0002 & 2.0000 & 1.0000 & 5.0126\\
        5 & 267 & 0.3500 & 0.0002 & 2.2000 & 1.0000 & 4.5754\\
        6 & 342 & 0.2200 & 0.0002 & 2.1000 & 1.0000 & 5.4491\\
        7 & 211 & 0.1100 & 0.0002 & 2.0000 & 1.0000 & 6.1232\\
        8 & 162 & 0.2900 & 0.0002 & 2.3000 & 1.0000 & 5.5157\\
        9 & 195 & 0.3700 & 0.0002 & NA & NA & NA\\
        10 & 333 & 0.1100 & 0.0002 & NA & NA & NA\\
        11 & 96 & 0.3600 & 0.0002 & NA & NA & NA\\
        12 & 285 & 0.3300 & 0.0002 & NA & NA & NA\\
        13 & 68 & 0.2600 & 0.0002 & NA & NA & NA\\
        14 & 379 & 0.3000 & 0.0002 & NA & NA & NA\\
        15 & 344 & 0.1700 & 0.0002 & NA & NA & NA\\
        16 & 66 & 0.2900 & 0.0002 & NA & NA & NA\\
        17 & 356 & 0.2700 & 0.0002 & NA & NA & NA\\
        18 & 382 & 0.2300 & 0.0002 & NA & NA & NA\\
        19 & 362 & 0.3700 & 0.0002 & NA & NA & NA\\
        \bottomrule      
    \end{tabularx}
\end{table}

\begin{table}[H]
    \centering
    \caption{\textbf{Complex SC.} Transport parameter values that correspond to product delivery.}
    \label{tab:trans_value_complex}
    \begin{tabularx}{6cm}{cccc}
        \toprule
        From Node & To Node & $c^\tau_{ij}$ & $e^\tau_{ij}$ \\
        \midrule
        \multirow{5}{*}{1} & 4 & 0.5350 & 0.3060\\
        & 5 & 0.2650 & 0.1516\\
        & 6 & 1.8450 & 1.0553\\
        & 7 & 1.6000 & 0.9152\\
        & 8 & 1.4400 & 0.8237\\
        \midrule
        \multirow{5}{*}{2} & 4 & 0.3600 & 0.2059\\
        & 5 & 0.2950 & 0.1687\\
        & 6 & 1.2350 & 0.7064\\
        & 7 & 0.6250 & 0.3575\\
        & 8 & 1.8550 & 1.0611\\
        \midrule
        \multirow{5}{*}{3} & 4 & 0.6000 & 0.3432\\
        & 5 & 0.1750 & 0.1001\\
        & 6 & 0.7450 & 0.4261\\
        & 7 & 1.3300 & 0.7608\\
        & 8 & 0.1700 & 0.0972\\
        \midrule
        \multirow{3}{*}{4} & 9 & 1.9900 & 1.1383\\
        & 10 & 0.3400 & 0.1945\\
        & 11 & 0.8100 & 0.4633\\
        \midrule
        \multirow{3}{*}{5} & 9 & 1.5150 & 0.8666\\
        & 10 & 0.6600 & 0.3775\\
        & 11 & 0.6450 & 0.3689\\
        \midrule
        \multirow{3}{*}{6} & 9 & 1.6950 & 0.9695\\
        & 10 & 1.5800 & 0.9038\\
        & 11 & 0.8150 & 0.4662\\
        \midrule
        \multirow{3}{*}{7} & 9 & 1.6150 & 0.9238\\
        & 10 & 1.2600 & 0.7207\\
        & 11 & 0.6750 & 0.3861\\
        \midrule
        \multirow{3}{*}{8} & 9 & 1.0300 & 0.5892\\
        & 10 & 1.0900 & 0.6235\\
        & 11 & 1.6300 & 0.9324\\
        \midrule
        \multirow{3}{*}{9} & 12 & 1.9650 & 1.1240 \\
        & 13 & 1.9250 & 1.1011\\
        & 14 & 1.6200 & 0.9266\\
        \midrule
        \multirow{3}{*}{10} & 8 & 1.4900 & 0.8523\\
        & 9 & 1.9600 & 1.1211\\
        & 10 & 0.6350 & 0.3632\\
        \bottomrule
    \end{tabularx}
\end{table}

\begin{table}[H]\ContinuedFloat
    \centering
    \caption{Table~\ref{tab:trans_value_complex} continued..}
    \begin{tabularx}{6cm}{cccc}
        \toprule
        \multirow{3}{*}{11} & 8 & 1.8700 & 1.0696\\
        & 9 & 0.2000 & 0.1144\\
        & 10 & 1.8550 & 1.0611\\
        \midrule
        \multirow{5}{*}{12} & 15 & 1.9450 & 1.1125\\
        & 16 & 0.9650 & 0.5520\\
        & 17 & 1.9050 & 1.0897\\
        & 18 & 0.9000 & 0.5148\\
        & 19 & 0.6900 & 0.3947\\
        \midrule
        \multirow{5}{*}{13} & 15 & 0.8050 & 0.4605\\
        & 16 & 1.0650 & 0.6092\\
        & 17 & 1.8400 & 1.0525\\
        & 18 & 0.8300 & 0.4748\\
        & 19 & 1.8850 & 1.0782\\
        \midrule
        \multirow{5}{*}{14} & 15 & 1.6600 & 0.9495\\
        & 16 & 1.5100 & 0.8637\\
        & 17 & 0.5900 & 0.3375\\
        & 18 & 0.4000 & 0.2288\\
        & 19 & 1.3950 & 0.7979\\
        \bottomrule
    \end{tabularx}
\end{table}

\clearpage
\section{Detailed Algorithm Settings For SC Experiments} \label{sec:detailed_algo_setting}
This section provides the detailed hyperparameters used in our experiment.

\subsection{Meta-training settings}

Table~\ref{tab:hyperparameters_ml} and~\ref{tab:hyperparameters_ppo} show the algorithm settings we used for Meta-MORL and MERLION during our experiments.
\begin{table}[H]
    \centering
    \caption{Hyperparameters for MAML-based meta-training.}
    \label{tab:hyperparameters_ml}
    \begin{tabularx}{5cm}{lr}
        \toprule
        Hyperparameter & Value \\
        \midrule
        \textbf{Shared parameters:}\\
        Inner-loop learning rate $\alpha$ & 0.03 \\
        Outer-loop learning rate $\beta$ & 0.001 \\
        PPO steps per meta-update & 10 \\
        Adaptation steps & 4--8 \\
        Network architecture & [64, 64] \\
        Discount factor $\gamma$ & 0.99 \\
        GAE weight & 1.0 \\
        VF loss coefficient & 0.5 \\
        Clipping parameter & 0.3 \\
        KL target & 0.01 \\
        KL coefficient & 0.001 \\
        Batch size & 32 \\
        Fragment length & 100 \\
        Rollout workers & 4 \\
        Evaluation interval & 100 \\
        \midrule
        \textbf{MERLION-specific:}\\
        Population size & 10\\
        Fitting score coefficient $\kappa$ & 0.5\\
        Cross-over rate $\eta_{cross}$ & 0.6\\
        Mutation strength $\delta$ & 0.1\\
        \bottomrule
    \end{tabularx}
\end{table}

\subsection{Fine-tuning settings}

\begin{table}[H]
    \centering
    \caption{PPO hyperparameters used in fine-tuning.}
    \label{tab:hyperparameters_ppo}
    \begin{tabularx}{5cm}{lr}
    \toprule
    Hyperparameter & Value \\
    \midrule
    Learning rate & $3 \times 10^{-4}$ \\
    Minibatch size & 64 \\
    Epochs per update & 10 \\
    Discount factor $\gamma$ & 0.99 \\
    GAE weight $\lambda$ & 0.95 \\
    Clipping range & 0.2 \\
    Entropy coefficient & 0 \\
    VF loss coefficient & 0.5 \\
    Max gradient norm & 0.5 \\
    Episode length & 100 \\
    Total timesteps & $5 \times 10^5$ \\
    Activation & Tanh \\
    \bottomrule
    \end{tabularx}
\end{table}

\subsection{MORL with decomposition (MORL/D) setup}

Table~\ref{tab:hyperparameter_MORLD} shows the setup used for the MORL/D algorithm as one of the baselines.
\begin{table}[H]
    \centering
    \caption{Hyperparameters of MORL/D Algorithm with MOSAC Policy}
    \label{tab:hyperparameter_MORLD}
    \begin{tabularx}{6cm}{lr}
        \toprule
        Hyperparameters & Values \\
        \midrule
        \textbf{MORL/D} & \\
        Discount factor & 0.995 \\
        Population size & 6 \\
        Exchange every & $5 \times 10^4$ \\
        Total time steps & $5 \times 10^5$ \\
        Neighbourhood size & 1 \\
        Update passes & 10 \\
        Initial weight dist. & uniform \\
        \midrule
        \textbf{MOSAC} & \\
        Buffer size & $10^6$ \\
        Discount factor & 0.99 \\
        Target smoothing coef. & 0.005 \\
        Batch size & 128 \\
        Steps before learning & $10^3$ \\
        Hidden neurons & [256; 256] \\
        Actor LR & $3 \times 10^{-4}$ \\
        Critic LR & $10^{-3}$ \\
        Actor training freq. & 2 \\
        Target training freq. & 1 \\
        Activation function & ReLU \\
        \bottomrule
    \end{tabularx}
\end{table}

\subsection{Non-dominated sorting genetic algorithm II (NSGA-II) setup}
Table~\ref{tab:nsga_parameters} shows the setup used in NSGA-II as one of the baselines.
\begin{table}[H]
    \centering
    \caption{Hyperparameters of NSGA-II to solve the SC problem across three network complexities: simple, moderate, and complex.}
    \label{tab:nsga_parameters}
    \begin{tabularx}{6cm}{lr}
        \toprule
        Hyperparameters & Values\\
        \midrule
        Population size & 300\\
        Offspring number & 30\\
        Cross-over operator & binary (SBX)\\
        Cross-over probability & 90\%\\
        Cross-over $\eta_{cross}$ & 15\\
        Mutation method & polynomial\\
        Mutation $\eta_{mut}$ & 20\\
        \bottomrule
    \end{tabularx}
\end{table}

\clearpage
\section{Detailed Experimental Setting for Conventional RL in standard Problems}
\subsection{Algorithm settings}

\label{app:benchmark-settings}

This appendix reports the hyperparameter settings of the benchmark algorithms used in the standard RL problem experiments. Unless otherwise stated, all methods were implemented using the \texttt{morl-baselines} framework and evaluated on the same \texttt{mo-gymnasium} environments as MetaMORL and MERLION. Table~\ref{tab:hyperparameter_morld} to~\ref{tab:hyperparameter_pcn} displayed the used hyperparameter for standard problem benchmarking.

\begin{table}[H]
    \centering
    \caption{Hyperparameters of MORL/D with MOSAC or MOSAC-Discrete policy}
    \label{tab:hyperparameter_morld}
    \begin{tabularx}{7cm}{l r}
        \toprule
        Hyperparameter & Value \\
        \midrule
        \textbf{MORL/D} & \\
        Scalarisation method & Weighted sum \\
        Utility type & SER \\
        Population size & 10 \\
        Exchange every & $4 \times 10^4$ \\
        Neighbourhood size & 1 \\
        Update passes & 10 \\
        Initial weight distribution & Uniform \\
        Shared buffer & False \\
        \midrule
        \textbf{Policy backbone} & \\
        Policy type & MOSAC / MOSAC-Discrete \\
        Network architecture & [64, 64] \\
        Policy learning rate & $10^{-4}$ \\
        Critic learning rate & $10^{-2}$ \\
        Discount factor & 0.99 \\
        \bottomrule
    \end{tabularx}
\end{table}

\begin{table}[H]
    \centering
    \caption{Hyperparameters of the PGMORL algorithm}
    \label{tab:hyperparameter_pgmorl}
    \begin{tabularx}{8.1cm}{l r}
        \toprule
        Hyperparameter & Value \\
        \midrule
        Number of environments & 4 \\
        Population size & 10 \\
        Warm-up iterations & $0.05 \times$ episode horizon \\
        Steps per iteration & Episode horizon \\
        Evolutionary iterations & 20 \\
        Number of weights candidates & 21 (2 objectives), 32 (otherwise) \\
        Performance buffer size & 100 \\
        Number of performance buffers & 2 \\
        Weight range & [0, 1] \\
        $\Delta w$ & $1/9$ \\
        Sparsity coefficient & $-1.0$ \\
        Network architecture & [64, 64] \\
        Number of minibatches & 32 \\
        Update epochs & 10 \\
        Learning rate & $3 \times 10^{-4}$ \\
        \bottomrule
    \end{tabularx}
\end{table}

\begin{table}[H]
    \centering
    \caption{Hyperparameters of the CAPQL algorithm}
    \label{tab:hyperparameter_capql}
    \begin{tabularx}{6cm}{lr}
        \toprule
        Hyperparameter & Value \\
        \midrule
        Learning rate & $3 \times 10^{-4}$ \\
        Target smoothing coefficient ($\tau$) & 0.005 \\
        Replay buffer size & $10^6$ \\
        Network architecture & [64, 64] \\
        Batch size & Episode horizon \\
        Number of Q-networks & 2 \\
        Entropy coefficient ($\alpha$) & 0.2 \\
        Learning starts after & $10^3$ steps \\
        Gradient updates per step & 1 \\
        Discount factor & 0.99 \\
        \bottomrule
    \end{tabularx}
\end{table}

\begin{table}[H]
    \centering
    \caption{Hyperparameters of the PQL algorithm}
    \label{tab:hyperparameter_pql}
    \begin{tabularx}{6cm}{l r}
        \toprule
        Hyperparameter & Value \\
        \midrule
        Initial $\epsilon$ & 1.0 \\
        Final $\epsilon$ & 0.1 \\
        Epsilon decay steps & $10^5$ \\
        Discount factor & 0.99 \\
        \bottomrule
    \end{tabularx}
\end{table}

\begin{table}[H]
    \centering
    \caption{Hyperparameters of the MPMOQL algorithm}
    \label{tab:hyperparameter_mpmoql}
    \begin{tabularx}{4cm}{lr}
        \toprule
        Hyperparameter & Value \\
        \midrule
        Learning rate & 0.1 \\
        Initial $\epsilon$ & 0.1 \\
        Final $\epsilon$ & 0.1 \\
        Epsilon decay & None \\
        Weight selection & Random \\
        GPI policy & False \\
        Transfer Q-table & True \\
        Dyna updates & False \\
        Discount factor & 0.99 \\
        \bottomrule
    \end{tabularx}
\end{table}

\begin{table}[H]
    \centering
    \caption{Hyperparameters of the PCN algorithm}
    \label{tab:hyperparameter_pcn}
    \begin{tabularx}{6.5cm}{lr}
        \toprule
        Hyperparameter & Value \\
        \midrule
        Scaling factor & [1.0] \\
        Learning rate & $10^{-3}$ \\
        Batch size & Episode horizon \\
        Hidden dimension & 64 \\
        Exploration noise & 0.1 \\
        Continuous action setting & Environment-dependent \\
        \bottomrule
    \end{tabularx}
\end{table}

\subsection{Benchmark problem descriptions}
\label{subsec:toy-problem-descriptions}

We evaluated the benchmark Merlion and other algorithms on five standard problems from \texttt{mo-gymnasium}: \texttt{mo-halfcheetah-v4}, \texttt{deep-sea-treasure-v0}, \texttt{resource-gathering-v0}, \texttt{mo-hopper-v4}, and \texttt{mo-reacher-v4}. These tasks were selected to cover both discrete and continuous domains, as well as different numbers of objectives, ranging from two to four. This allows us to assess performance under diverse Pareto-front structures and control settings.

\subsubsection{MO-HalfCheetah.}
\texttt{mo-halfcheetah-v4} is a continuous-control MuJoCo locomotion task with a continuous observation space, a continuous action space, and two objectives. In our implementation, it is treated as a long-horizon control problem with a maximum episode length of 1000 steps. This environment is useful for testing whether a method can recover meaningful trade-offs between competing locomotion objectives in a standard continuous benchmark.

\subsubsection{Deep Sea Treasure.}
\texttt{deep-sea-treasure-v0} is a discrete grid-world problem with discrete observations, discrete actions, and two objectives. It is one of the canonical MORL benchmarks because it provides a structured Pareto front with a clear trade-off between treasure value and time penalty. We used a maximum episode length of 200 steps.

\subsubsection{Resource Gathering.}
\texttt{resource-gathering-v0} is a discrete grid-world environment with discrete observations, discrete actions, and three objectives. The task requires balancing multiple resource-related returns while avoiding unfavourable outcomes, making it a useful benchmark for testing whether a method can identify non-dominated policies beyond the two-objective setting. We used a maximum episode length of 200 steps.

\subsubsection{MO-Hopper.}
\texttt{mo-hopper-v4} is a continuous-control MuJoCo locomotion task with continuous observations, continuous actions, and three objectives. Compared with \texttt{mo-halfcheetah-v4}, this environment introduces a more challenging multi-objective control setting with an additional objective and longer-horizon dynamics. We used a maximum episode length of 1000 steps.

\subsubsection{MO-Reacher.}
\texttt{mo-reacher-v4} is a multi-objective reaching task with a continuous observation space, a discrete action space in our benchmark interface, and four objectives. This task provides the highest reward dimensionality among the standard problems considered here, making it suitable for evaluating how well each method scales to more structured and higher-dimensional Pareto fronts. We used a maximum episode length of 200 steps.

\subsubsection{Common environment configuration.}
All environments were instantiated through \texttt{mo\_gym.make(..., render\_mode="rgb\_array")} and wrapped with \texttt{TimeLimit} using an environment-specific maximum episode length. No reward normalisation or scalarisation was applied at the environment level for benchmark evaluation, and comparisons were based on the cumulative raw vector rewards returned by the environments. The observation space type, action space type, number of objectives, episode horizon, and reference point used for each environment are summarised in Table~\ref{tab:toy-problem-envs}.

\begin{table}[t]
    \centering
    \caption{RL standard problem environments details.}
    \label{tab:toy-problem-envs}
    \begin{tabular}{lcccc}
        \toprule
        Env. & Obs. & Act. & Obj. & Timesteps/eps. \\
        \midrule
        \texttt{mo-halfcheetah-v4}     & Cont. & Cont. & 2 & 1000 \\
        \texttt{deep-sea-treasure-v0}  & Disc. & Disc. & 2 & 200 \\
        \texttt{resource-gathering-v0} & Disc. & Disc. & 3 & 200 \\
        \texttt{mo-hopper-v4}          & Cont. & Cont. & 3 & 1000 \\
        \texttt{mo-reacher-v4}         & Cont. & Disc. & 4 & 200 \\
        \bottomrule
    \end{tabular}
\end{table}

\subsubsection{Common benchmark configuration.}
Across all benchmark algorithms, we used the same environment definitions, episode horizons, and random-seed interface. The discount factor was set to $\gamma = 0.99$ for all methods. Algorithm applicability depended on the observation and action spaces of each task. In particular, PQL and MPMOQL were only used on discrete-observation, discrete-action environments, while PGMORL and CAPQL were only used on continuous-observation, continuous-action environments. MORL/D and PCN were applied whenever their architectural assumptions matched the corresponding environment.

\clearpage
\section{Pareto Front Plots}
To compliment the PF plots in the main text, we provide additional views of PF plots for better representation as shown in Figure~\ref{fig:pf_additional}.

\begin{figure*}[!ht]
\centering

\subfloat[\footnotesize Simple SC - view 2]{\includegraphics[width=0.33\linewidth]{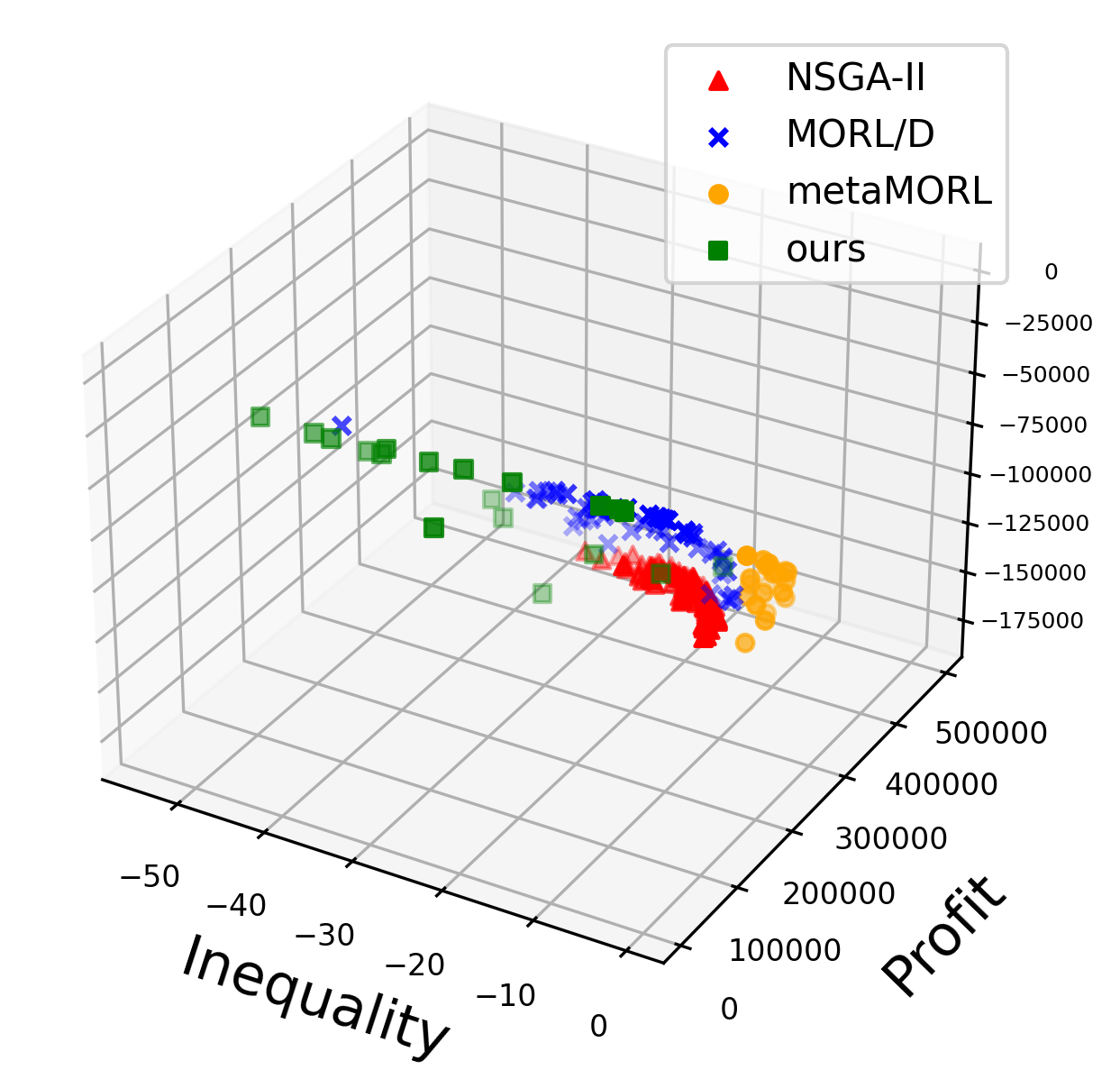}}\hfil
\subfloat[\footnotesize Moderate SC - view 2]{\includegraphics[width=0.33\linewidth]{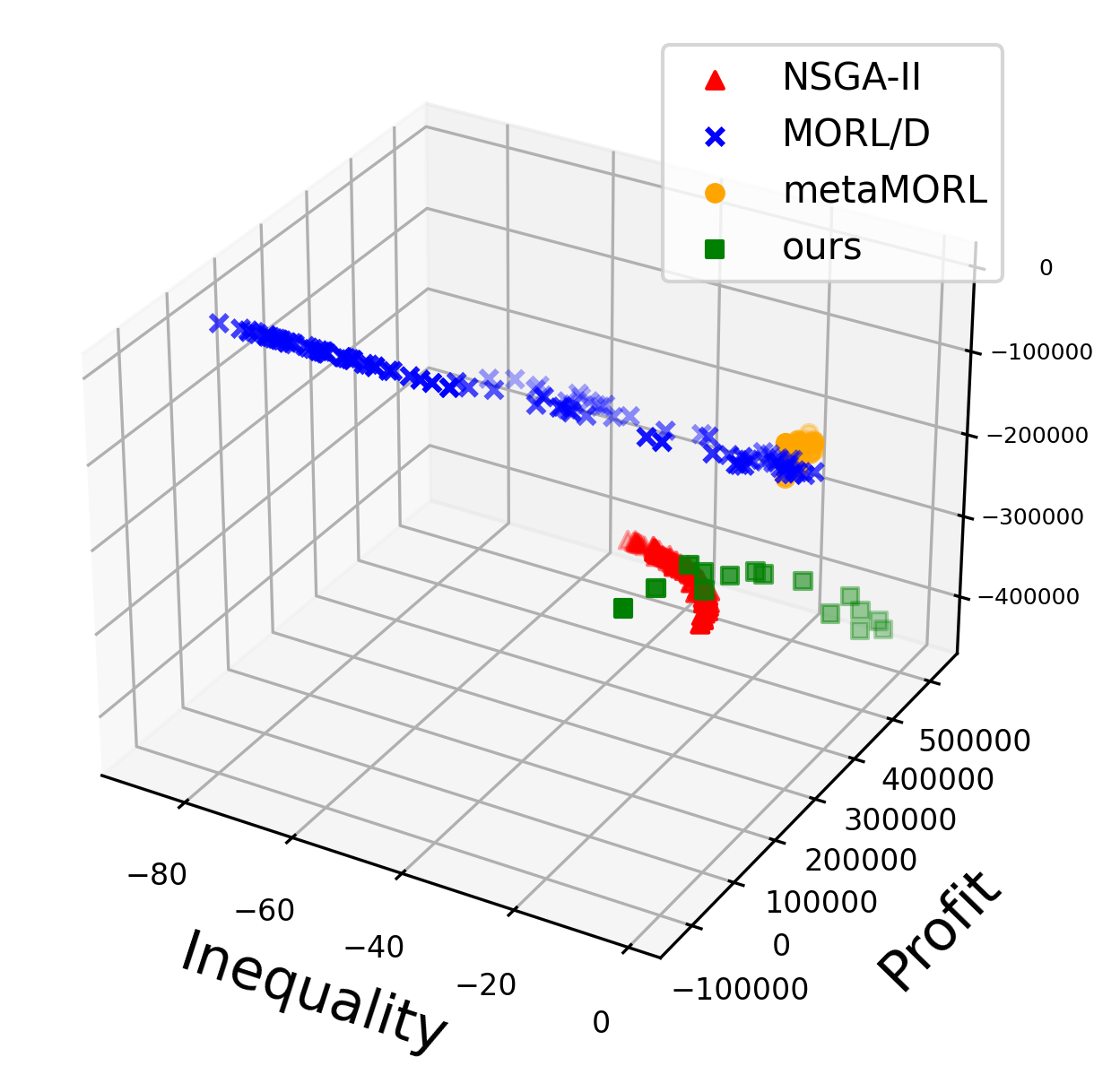}}\hfil
\subfloat[\footnotesize Complex SC - view 2]{\includegraphics[width=0.33\linewidth]{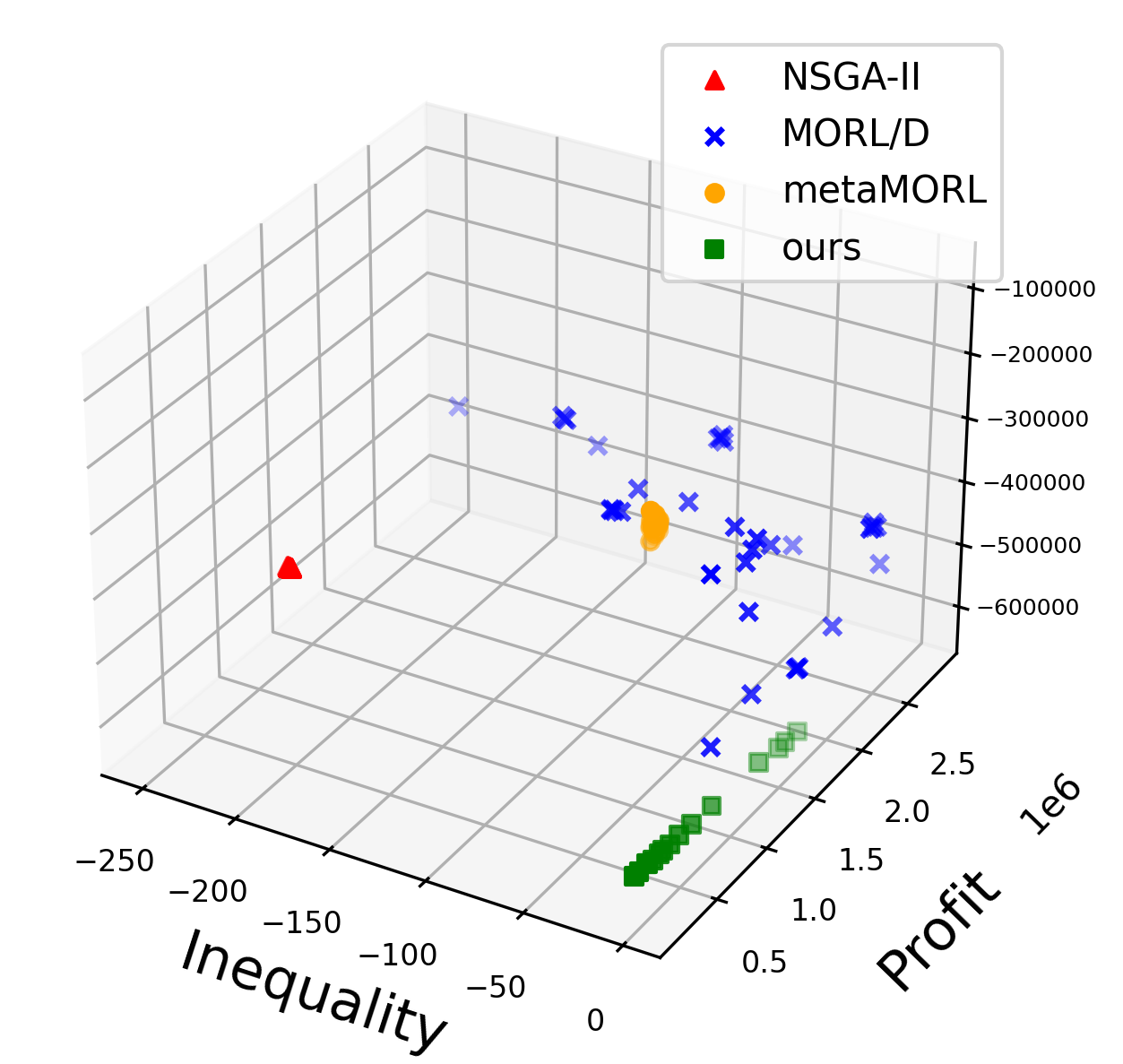}}\\

\subfloat[\footnotesize Simple SC - view 3]{\includegraphics[width=0.33\linewidth]{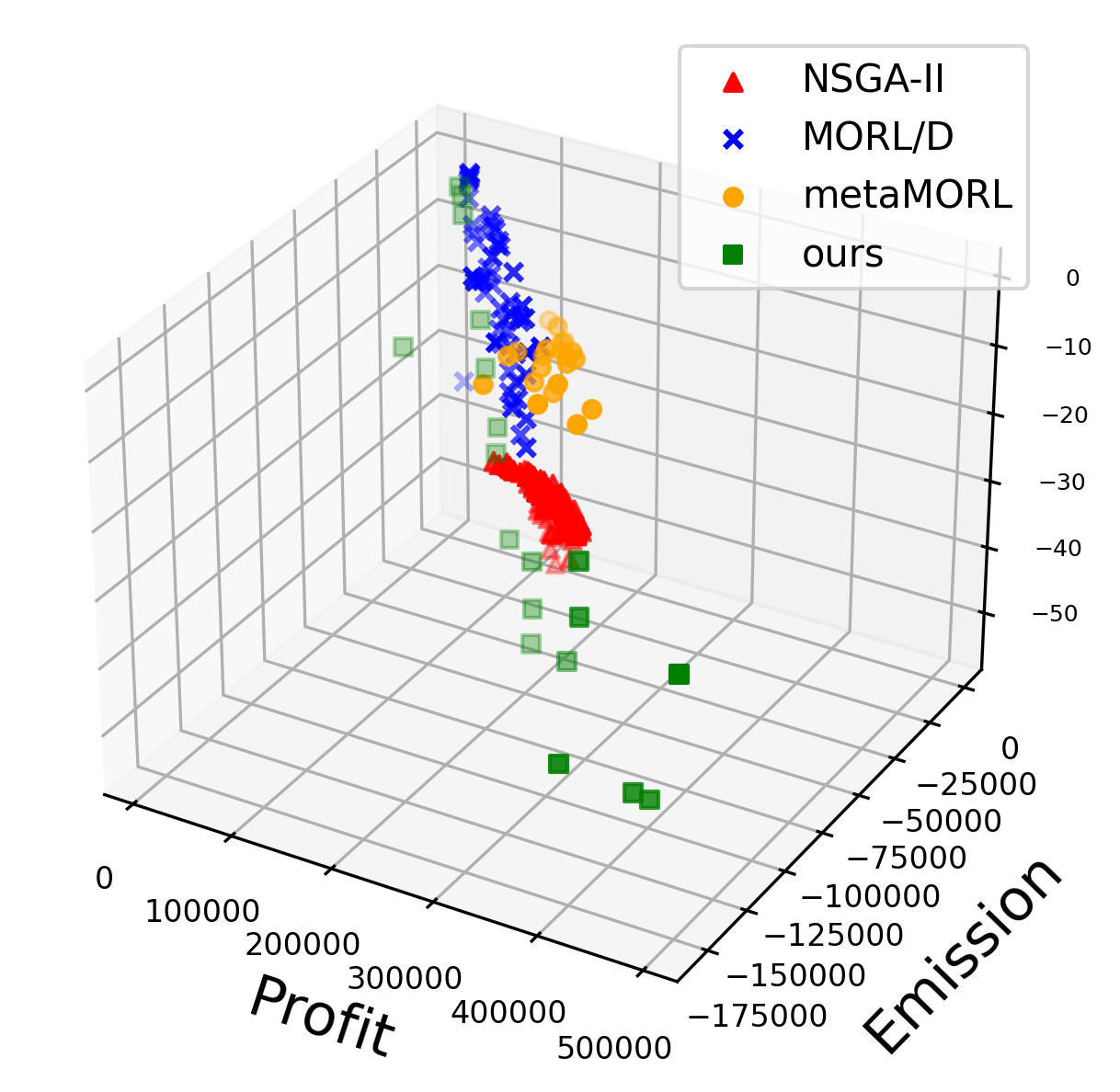}}\hfil
\subfloat[\footnotesize Moderate SC - view 3]{\includegraphics[width=0.33\linewidth]{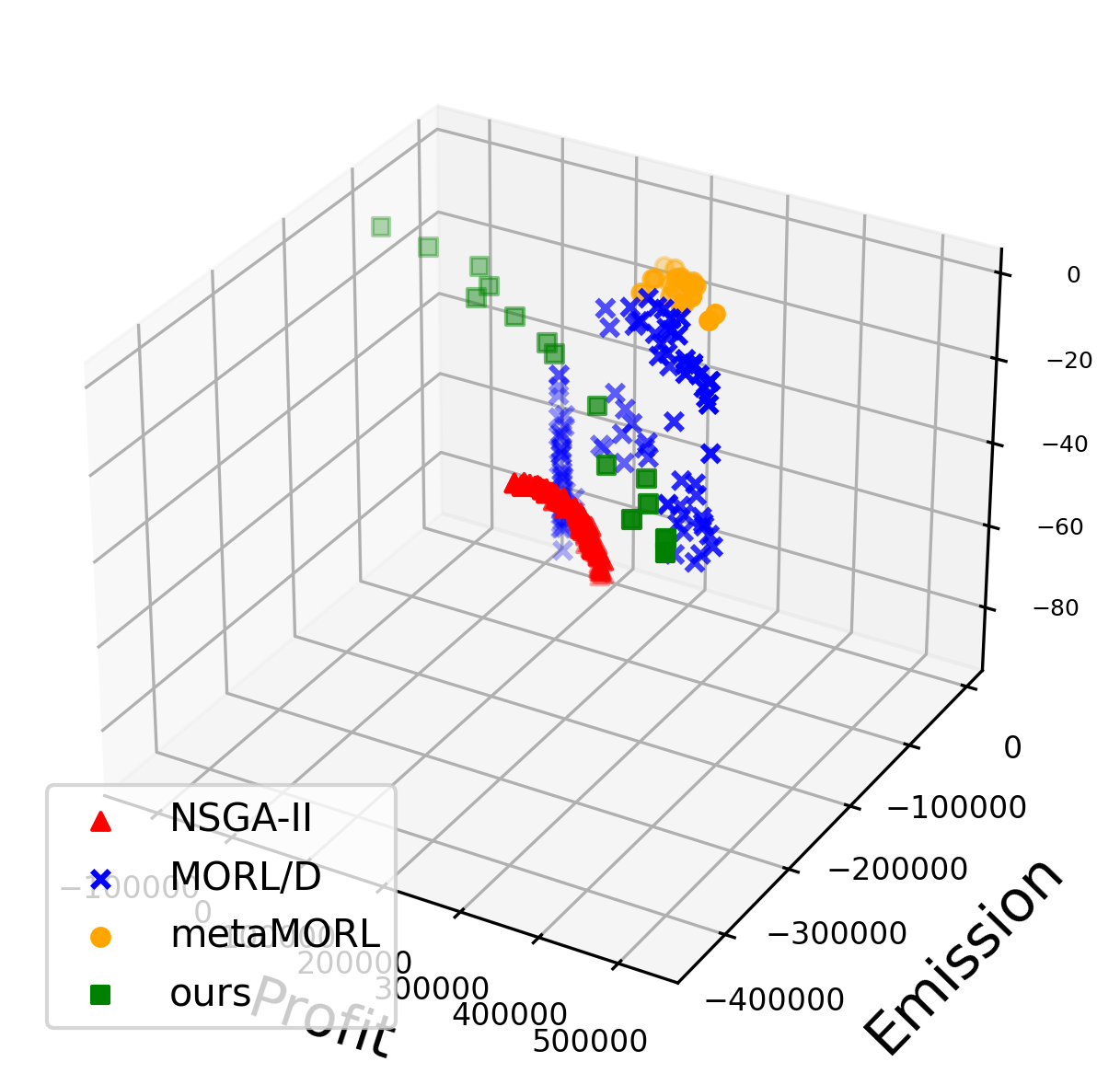}}\hfil
\subfloat[\footnotesize Complex SC - view 3)]{\includegraphics[width=0.33\linewidth]{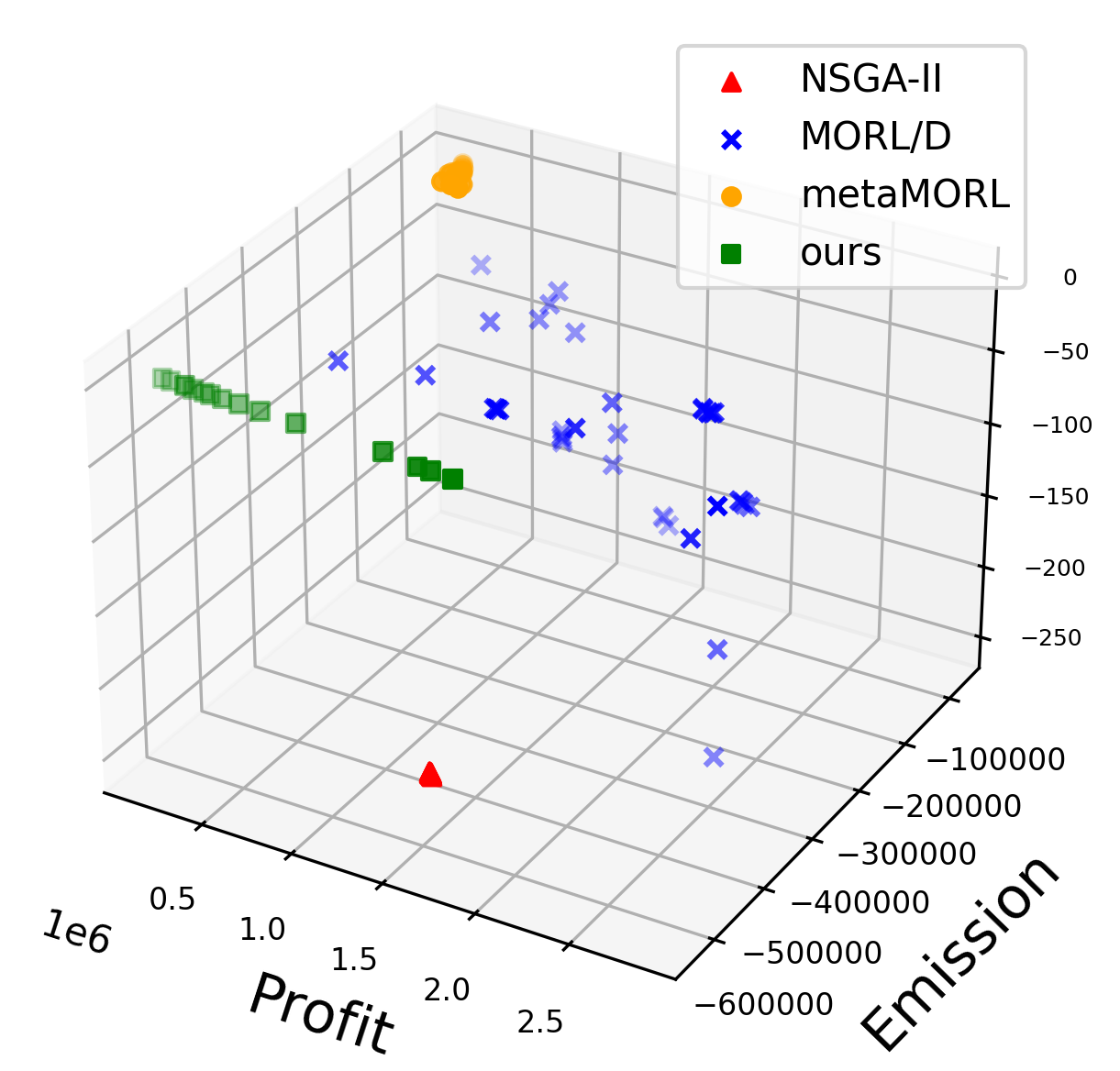}}\\
\caption{Additional views from the other perspectives to complement PF Plots in the main text.}
\label{fig:pf_additional}
\end{figure*}

\clearpage
\section{Operational Performance} \label{app:operational_performance}
We further examine the operational trajectories underlying the representative solutions selected from the PF approximation sets. Figure~\ref{fig:operational_performance} presents the corresponding manufacturing quantities, inventory levels, and demand loss over time.

Manufacturing fluctuations are more pronounced in the simple SC setting, while the moderate and complex settings display steadier production schedules. One reason is that, in deeper and more interconnected networks, decisions propagate through additional stages, transit delays, and downstream buffers, so abrupt short-term changes may create delayed and amplified side effects. As a result, effective policies tend to resemble more stable base-stock or level-production strategies, using inventories and shipments to absorb variability rather than relying on frequent production corrections. In addition, the feasible action space becomes tighter as complexity increases because more facilities, routes, and interdependencies impose stronger operational constraints. These factors also make policy learning more difficult, and PPO-style updates may become effectively more conservative under noisier gradient signals, encouraging reliance on steadier control heuristics.

The inventory plots further indicate that buffering behaviour changes with complexity. In the simple SC setting, inventories stay close to zero for much of the horizon, interrupted by occasional spikes, which is consistent with reactive replenishment and rapid stock clearing. In the moderate and complex settings, inventories become more persistent and are increasingly concentrated at a smaller subset of facilities. This suggests that the learned policies exploit positional buffering, where selected nodes absorb variability locally rather than maintaining evenly distributed stock across the network.

Demand loss remains bounded overall, but its spatial distribution becomes more heterogeneous in the more complex settings. Some markets exhibit consistently higher unmet demand, whereas others maintain relatively low and stable losses. This indicates that, under stronger coupling and delayed flow adjustments, the learned policy does not equalise service levels uniformly across all markets, but instead allocates service unevenly across locations as part of its broader multi-objective trade-off.

\begin{figure*}[!ht]
\centering
\subfloat[\footnotesize Mfg - simple SC]{\includegraphics[width=0.33\linewidth]{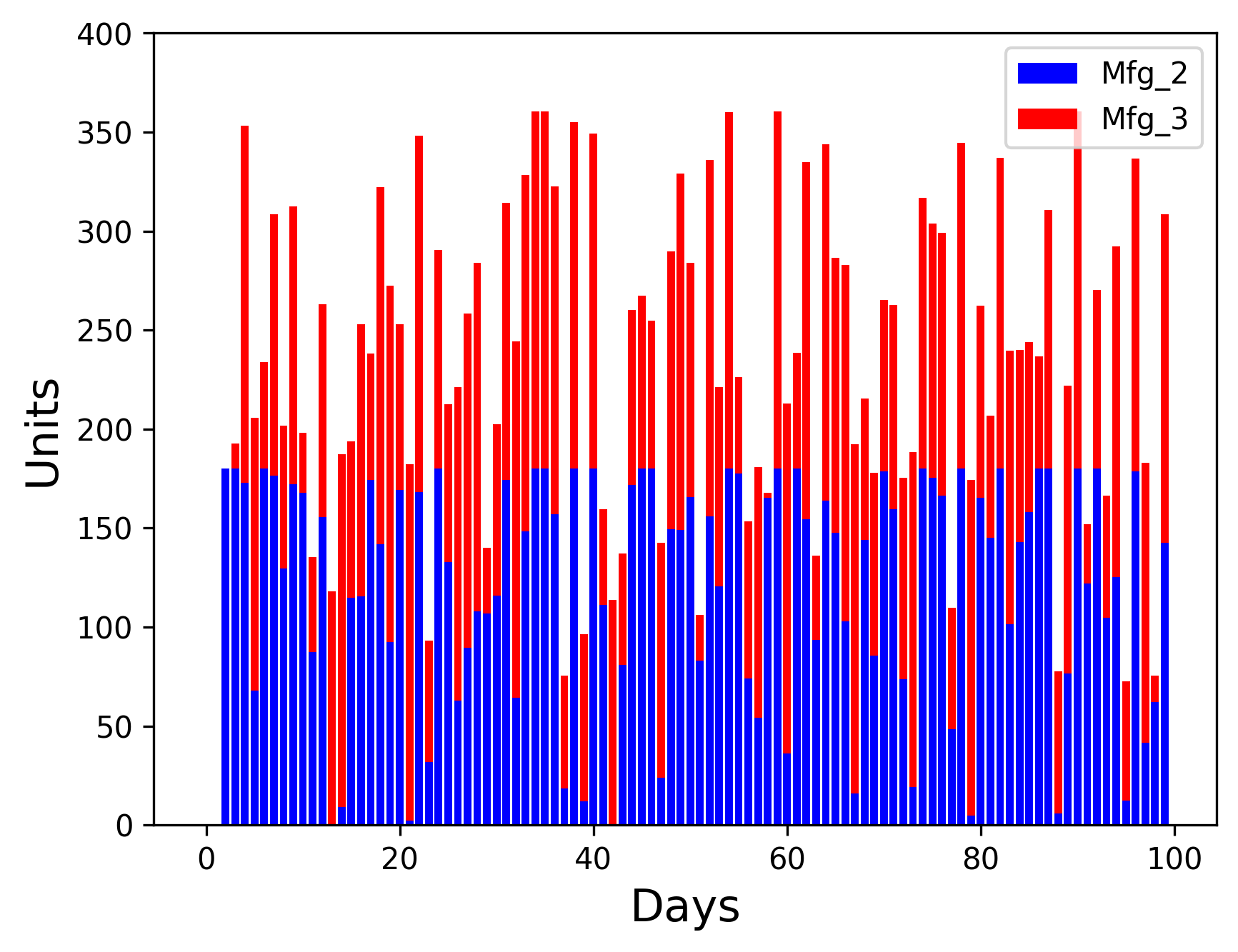}}\hfil
\subfloat[\footnotesize Mfg - moderate SC]{\includegraphics[width=0.33\linewidth]{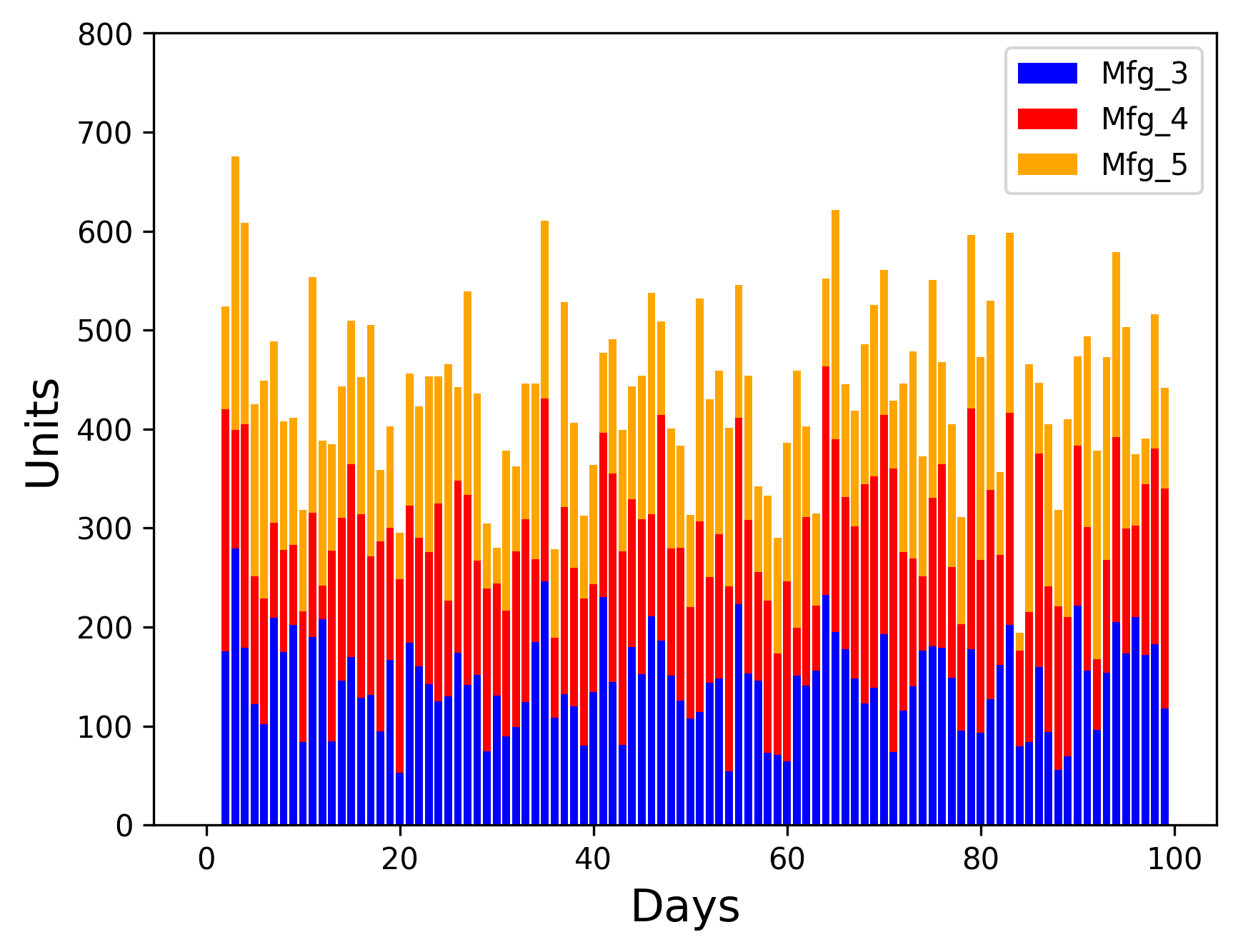}}\hfil
\subfloat[\footnotesize Mfg - complex SC]{\includegraphics[width=0.33\linewidth]{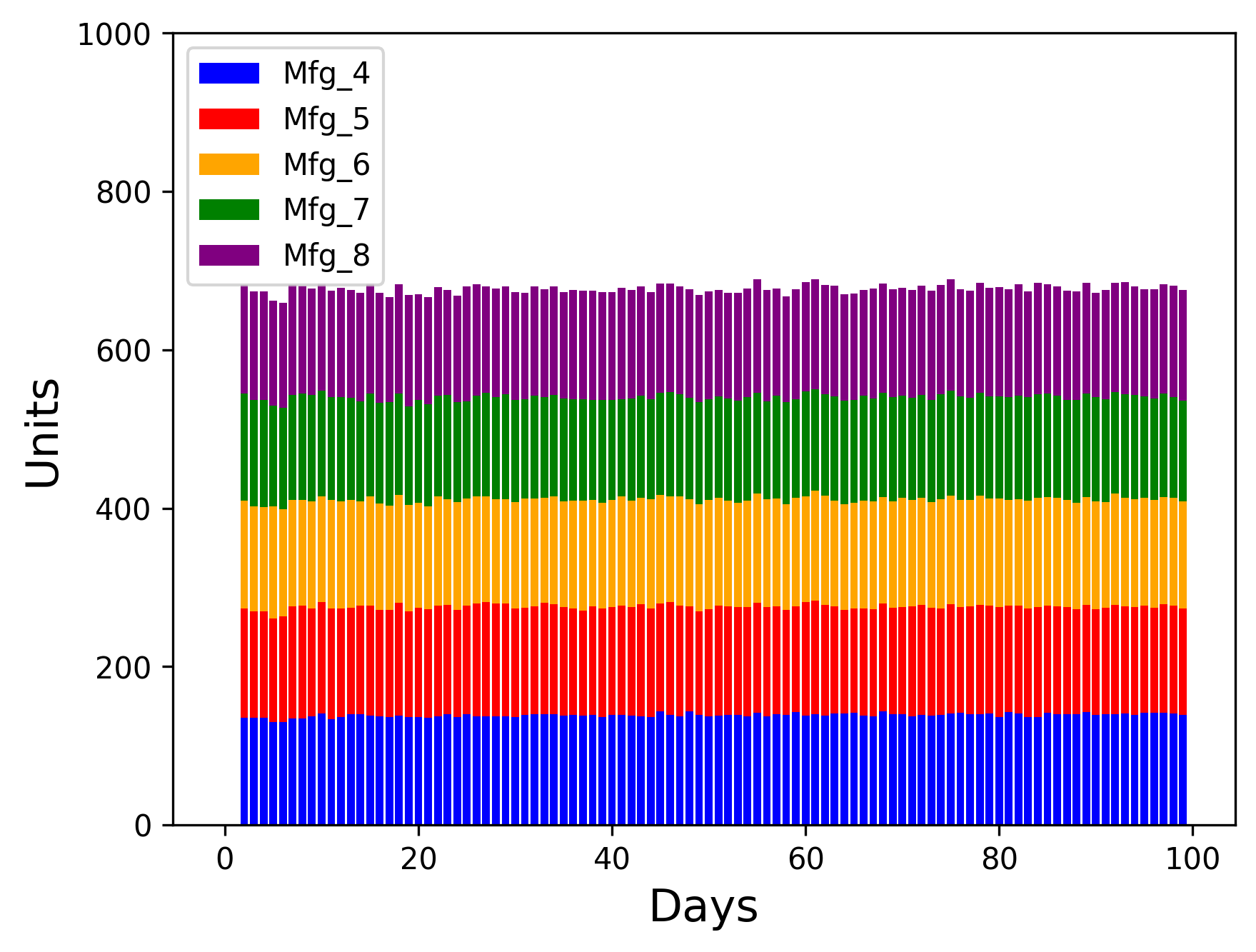}}\\

\subfloat[\footnotesize Inv - simple SC]{\includegraphics[width=0.33\linewidth]{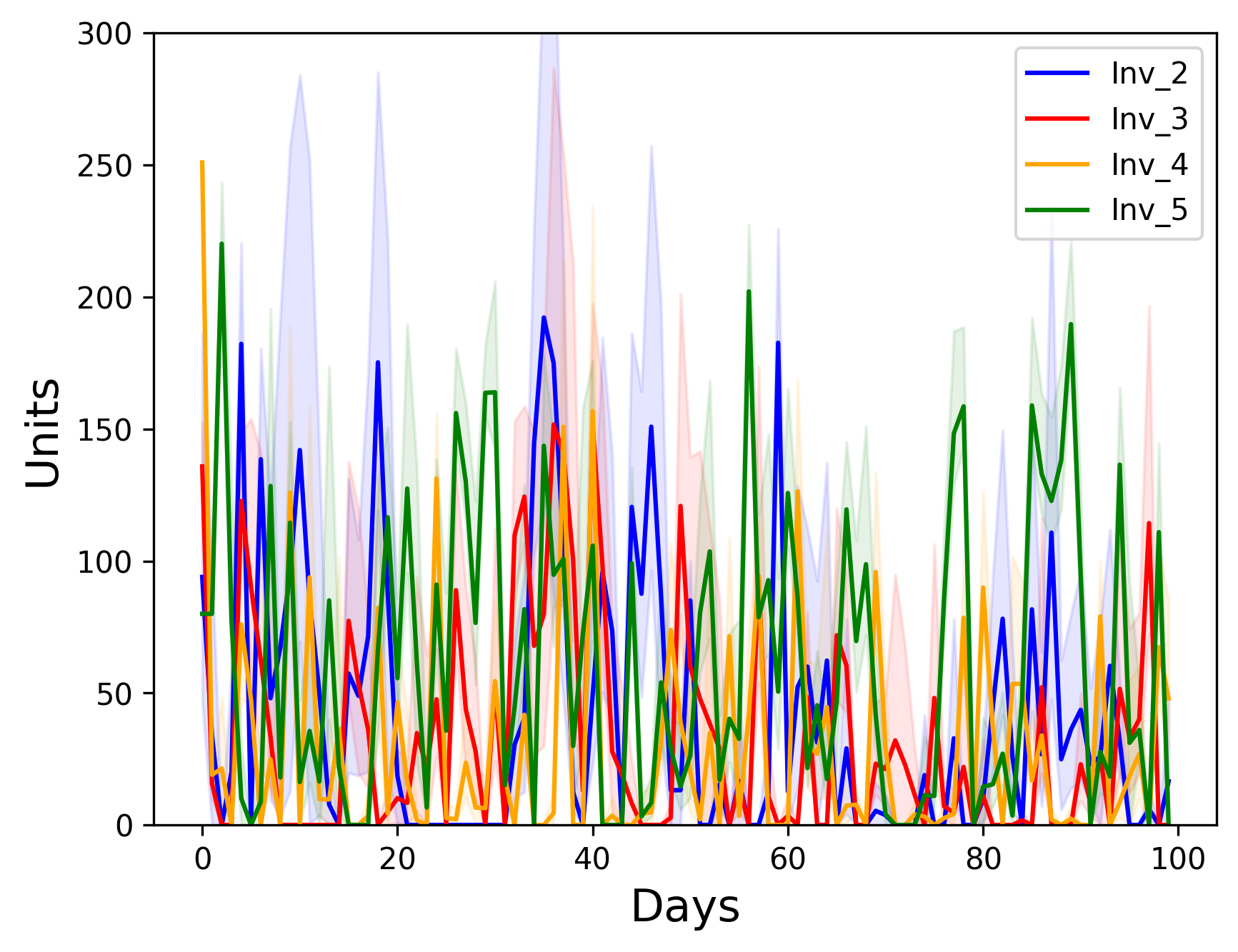}}\hfil
\subfloat[\footnotesize Inv - moderate SC]{\includegraphics[width=0.33\linewidth]{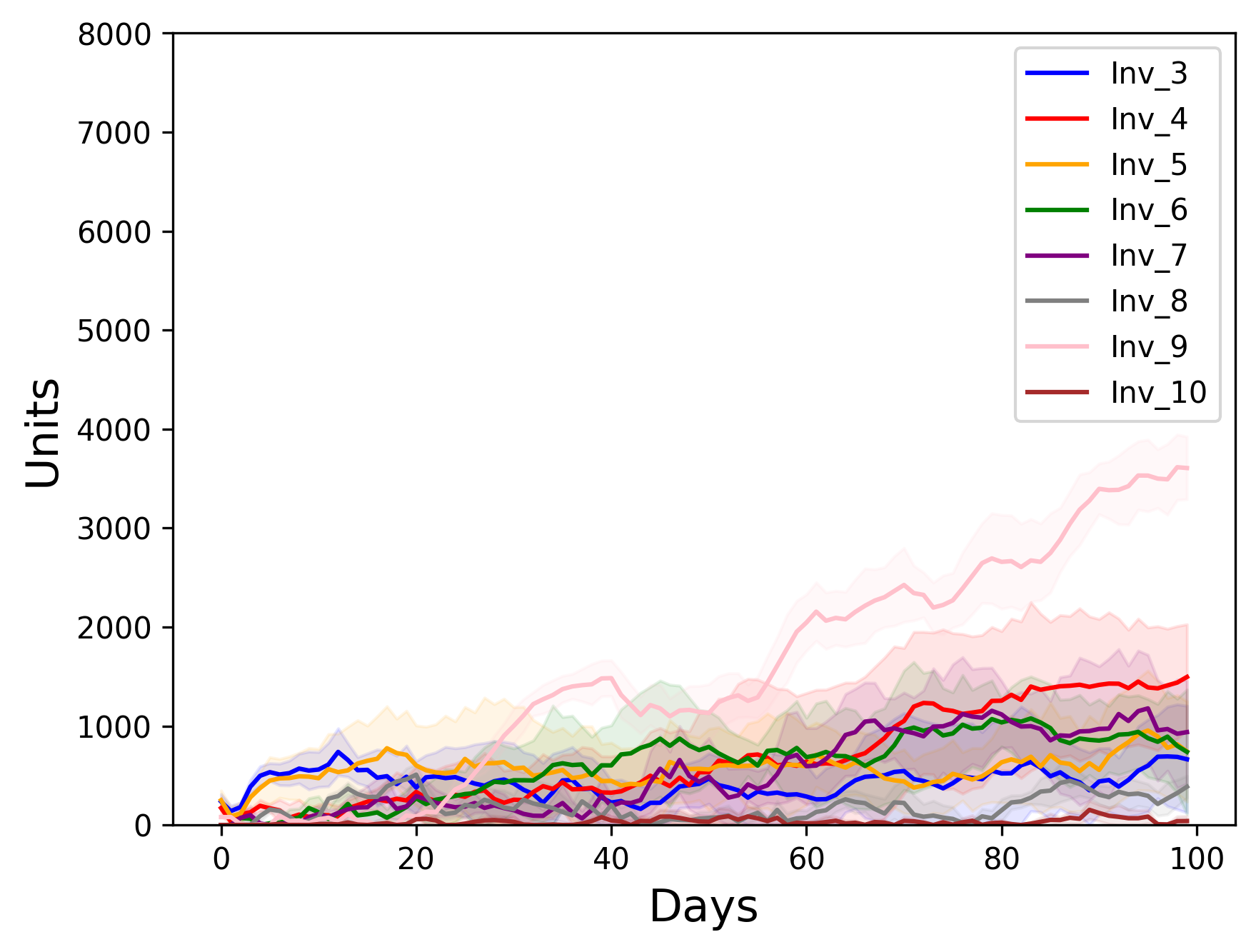}}\hfil
\subfloat[\footnotesize Inv - complex SC]{\includegraphics[width=0.33\linewidth]{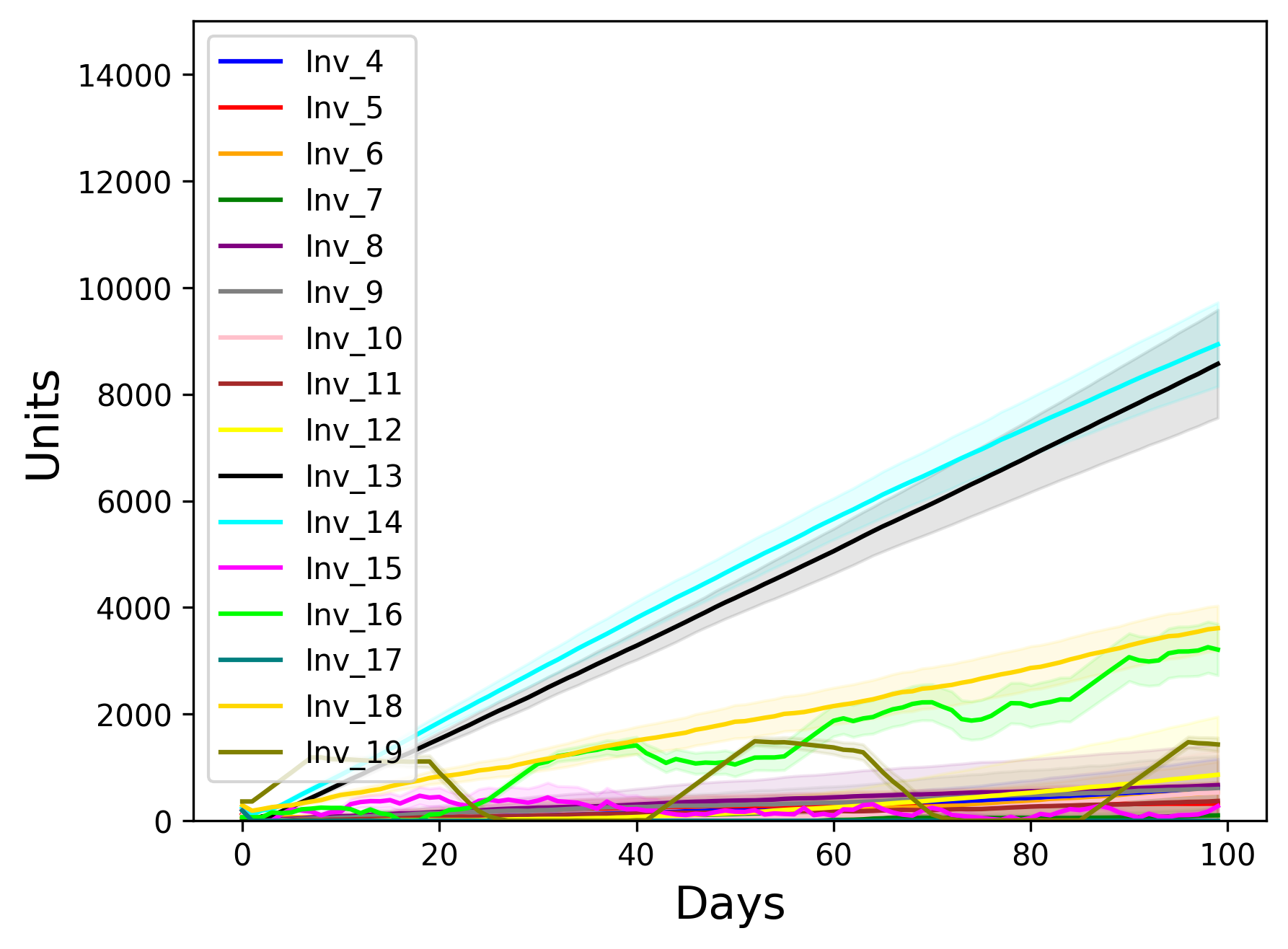}}\\

\subfloat[\footnotesize Loss - simple SC]{\includegraphics[height=0.25\linewidth]{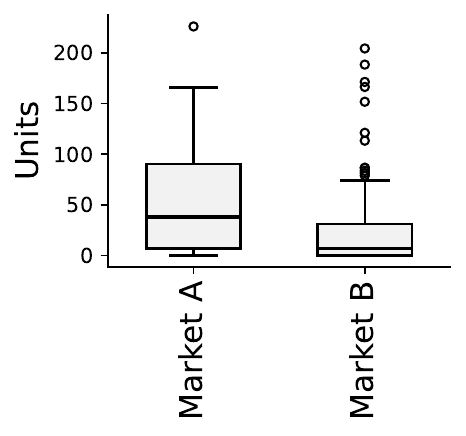}}\hfil
\subfloat[\footnotesize Loss - moderate SC]{\includegraphics[height=0.25\linewidth]{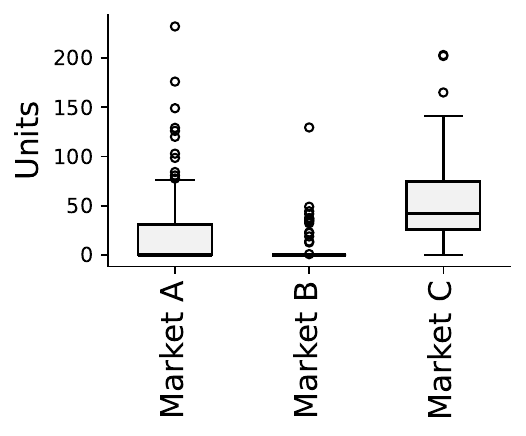}}\hfil
\subfloat[\footnotesize Loss - complex SC]{\includegraphics[height=0.25\linewidth]{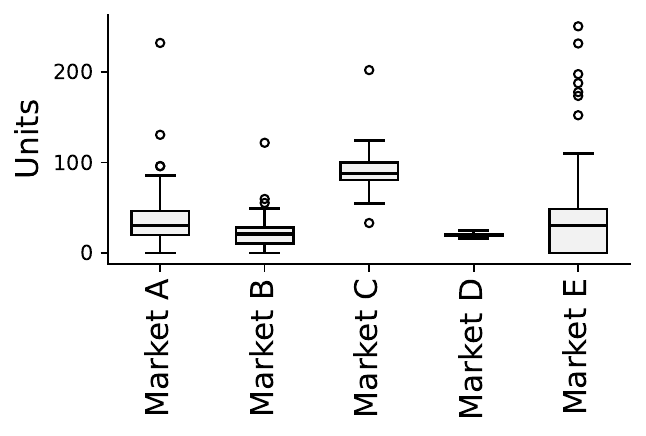}}\\

\caption{MERLION’s operational performance across the SC problems shows that manufacturing and inventory levels fluctuate less in more complex problems, although some stockpiling becomes more noticeable. Demand losses in the various markets are relatively low.}
\label{fig:operational_performance}
\end{figure*}